\definecolor{LightCyan}{rgb}{0.8, 0.9, 1}
\definecolor{LightGray}{gray}{0.9}
\definecolor{LightCyan}{rgb}{0.8, 0.9, 1}
\newcommand{\alglinelabel}{%
  \addtocounter{ALC@line}{-1}% Reduce line counter by 1
  \refstepcounter{ALC@line}% Increment line counter with reference capability
  \label% Regular \label
}
\newcommand*{\rom}[1]{\expandafter\@slowromancap\romannumeral #1@}
\title{\huge Variance-Dependent Regret Bounds for Non-stationary Linear Bandits}
\author
{
    Zhiyong Wang\thanks{The Chinese University of Hong Kong; {\tt zhiyongwangwzy@gmail.com}}
    ~~~~
    Jize Xie\thanks{Hong Kong University of Science and Technology; {\tt jxiebj@connect.ust.hk}}
    ~~~~
    Yi Chen\thanks{Hong Kong University of Science and Technology; {\tt yichen@ust.hk}} 
    ~~~~
    John C.S. Lui\thanks{The Chinese University of Hong Kong; {\tt cslui@cse.cuhk.edu.hk}}
    ~~~~
    Dongruo Zhou\thanks{Indiana University Bloomington; {\tt dz13@iu.edu}}
}
\begin{document}
    \date{}
    \maketitle

\begin{abstract}

We investigate the non-stationary stochastic linear bandit problem where the reward distribution evolves each round. Existing algorithms characterize the non-stationarity by the \emph{total variation budget} $B_K$, which is the summation of the change of the consecutive feature vectors of the linear bandits over $K$ rounds. However, such a quantity only measures the non-stationarity with respect to the expectation of the reward distribution, which makes existing algorithms sub-optimal under the general non-stationary distribution setting. In this work, we propose algorithms that utilize the 
\emph{variance} of the reward distribution as well as the $B_K$, and show that they can achieve tighter regret upper bounds. Specifically, we introduce two novel algorithms: Restarted Weighted$\text{OFUL}^+$ and Restarted $\text{SAVE}^+$. These algorithms address cases where the variance information of the rewards is known and unknown, respectively. Notably, when the total variance $V_K$ is much smaller than $K$, our algorithms outperform previous state-of-the-art results on non-stationary stochastic linear bandits under different settings. Experimental evaluations further validate the superior performance of our proposed algorithms over existing works.
\end{abstract}

\section{Introduction}
In this work, we study non-stationary stochastic bandits, which is a generalization of the classical stationary stochastic bandits, where the reward distribution is non-stationary. The intuition about the non-stationary setting comes from real-world applications such as dynamic pricing and ads allocation, where the environment changes rapidly and deviates significantly from stationarity \cite{auer2002nonstochastic,cheung2018hedging}. Most of the existing works in stochastic bandits consider a stationary setting where the goal of the agent is to minimize the \emph{static regret}, \emph{i.e.}, the summation of suboptimality gaps between the agent's selected arm and the fixed, time-independent best arm that maximizes the expectation of the reward distribution. In contrast, for the non-stationary setting, the emphasis shifts to minimizing the \emph{dynamic regret}, which represents the gap between the cumulative reward of selecting the time-dependent optimal arm at each time and that of the learner. As we can always treat a stationary bandit instance as a special case of the non-stationary bandit instance, designing algorithms that work well under the non-stationary setting is significantly more challenging.  

There have been a series of works aiming to minimize the \emph{dynamic regret} for non-stationary stochastic bandits, such as Multi-Armed Bandits (MAB) \citep{auer2002nonstochastic,10.1007/978-3-642-24412-4_16,besbes2014stochastic,wei2016tracking}, linear bandits \citep{cheung2018hedging,cheung2019learning,zhao2020simple,wei2021non,wang2023revisiting}, general function approximation \citep{faury2021regret,russac2020algorithms,russac2021self}, and the even more challenging reinforcement learning (RL) setting \citep{pmlr-v139-mao21b,touati2020efficient,gajane2018sliding,cheung2020reinforcement,wei2021non}. In this work, we mainly consider the linear bandit setting, where each arm is a contextual vector, and the expected reward of each arm is assumed to be the linear product of the arm with an unknown feature vector. Most existing \emph{dynamic regret} results for non-stationary linear bandits depend on both the \emph{non-stationarity measurement} and the number of interaction rounds. Specifically, assume $K$ is the total number of rounds in bandits, and for each $k\in[K]$, $\xb$ is one of the arms, $\btheta_k$ and $\btheta_{k+1}$ are the feature vectors at $k$ and $k+1$ rounds, satisfying $\|\xb\|_2 \leq 1$. Then, the non-stationarity measurement is often defined as the summation of the changes in the mean of the reward distribution, which is
\begin{align}
    B_K:=\sum_{k=1}^K \max_{\xb \in \RR^d}|\la\xb, \btheta_k - \btheta_{k+1}\ra|= \sum_{k=1}^K\|\btheta_k - \btheta_{k+1}\|_2\,.
\end{align}
Existing works for non-stationary linear bandits \citep{russac2019weighted, kim2019randomized, pmlr-v108-zhao20a, touati2020efficient,cheung2018hedging,zhao2020simple} achieved a regret upper bound of $\Tilde{O}(d^{7/8}B_K^{\frac{1}{4}}K^{\frac{3}{4}})$, where $d$ is the problem dimension. A recent work by \cite{wei2021non} proposed a black-box reduction method that can achieve a regret upper bound of $\Tilde{O}(dB_K^{\frac{1}{3}}K^{\frac{2}{3}})$
in the setting with a fixed arm set across all rounds. Such regret bounds clearly demonstrate that the regret grows as long as the non-stationarity grows, which is aligned with intuition. 

Although existing works clearly demonstrate the relationship between the $B_K$ and the regret, we claim that it is not sufficient for us to fully characterize the non-stationary level of the reward distributions. Consider applications such as hyperparameter tuning in physical systems, the noise distribution may highly depend on the evaluation point since the measurement noise often
largely varies with the chosen parameter settings \cite{kirschner2018information}. 
For linear bandits, such examples suggest that the non-stationarity not only consists of the change of the mean of the distribution, but also the variance of the distribution. However, none of the previous works on non-stationary linear bandits considered how to leverage the variance information to improve regret bounds in the above heteroscedastic noise setting. 
Therefore, an open question arises:\begin{center}
    \emph{Can we design even better algorithms for non-stationary linear bandits by considering its variance information?}
    \end{center}

In this paper, we answer this question affirmatively. We assume that at the $k$-th round, the reward distribution of an arm $\xb$ satisfies $r_k \sim \la \btheta_k, \xb\ra + \epsilon_k$, where $\epsilon_k$ is a zero-mean noise variable with variance $\sigma_k^2$. Our contributions are:
\begin{itemize}[leftmargin =*]
        \item For the case where the reward variance $\sigma_k^2$ at round $k$ can be observed and the \emph{total variation budget} $B_K$ is known, we propose the Restarted-$\text{WeightedOFUL}^+$ algorithm, which uses variance-based weighted linear regression to deal with heteroscedastic noises \citep{zhou2021nearly, zhou2022computationally}    
        and a restarted scheme to forget some historical data to hedge against the non-stationarity. We prove that the regret upper bound of Restarted-$\text{WeightedOFUL}^+$ is $\Tilde{O}(d^{7/8}(B_KV_K)^{1/4}\sqrt{K} + d^{5/6}B_K^{1/3}K^{2/3})$. Notably, our regret surpasses the best result for non-stationary linear bandits $\tilde O(dB_K^{1/3}K^{2/3})$ \citep{wei2021non} when the total variance $V_K = \tilde O(1)$ is small, which indicates that additional variance information benefits non-stationary linear bandit algorithms. 
         \item For the case where the reward variance $\sigma_k^2$ is unknown but the total variance $V_K$ and variation budget $B_K$ are known, we propose the Restarted-$\text{SAVE}^+$ algorithm. It maintains a multi-layer weighted linear regression structure with carefully-designed weight within each layer to handle the unknown variances \citep{zhao2023variance}. We prove that Restarted-$\text{SAVE}^+$ can achieve a regret upper bound of $\tilde O(d^{\frac{4}{5}}V_K^{\frac{2}{5}}B_K^{\frac{1}{5}}K^{\frac{2}{5}}+ d^{\frac{2}{3}}B_K^{\frac{1}{3}}K^{\frac{2}{3}})$. Specifically, when $V_K = \tilde O(1)$, our regret is also better than the existing best result $\tilde O(dB_K^{1/3}K^{2/3})$ \citep{wei2021non}, which again verifies the effect of the variance information.
        \item Lastly, we propose Restarted-$\text{SAVE}^+$-BOB for the case where both the reward variance $\sigma_k^2$ and $B_K$ are unknown. Restarted-$\text{SAVE}^+$-BOB equips a \emph{bandit-over-bandit} (BOB) framework to handle the unknown $B_K$ \citep{cheung2019learning}, and also maintains a multi-layer structure as Restarted-$\text{SAVE}^+$. We show that Restarted-$\text{SAVE}^+$-BOB achieves a regret upper bound of $\tilde O(d^{\frac{4}{5}}V_K^{\frac{2}{5}}B_K^{\frac{1}{5}}K^{\frac{2}{5}}+ d^{\frac{2}{3}}B_K^{\frac{1}{3}}K^{\frac{2}{3}}+d^{\frac{1}{5}}K^{\frac{7}{10}})$, and it behaves the same as Restarted-$\text{SAVE}^+$ when $V_K = \tilde O(1)$ and $B_K = \Omega(d^{-14}K^{1/10})$.
        % \item Notably, our proposed algorithms apply to the contextual setting where the arm set varies over time. In the worst case with $\sum_{k=1}^K \sigma_k^2=O(K)$, our results are $\Tilde{O}(B_K^{\frac{1}{4}}K^{\frac{3}{4}})$, matching the state-of-the-art results in the contextual setting. In the case where $\sum_{k=1}^K \sigma_k^2$ is small, our results are $\Tilde{O}(B_K^{\frac{1}{3}}K^{\frac{2}{3}})$, better than previous results in contextual setting in terms of the dependence on $K$.
        \item We also conduct experimental evaluations to validate the outperformance of our proposed algorithms over existing works.
    \end{itemize}
    \paragraph{Notation} 
    We use lower case letters to denote scalars, and use lower and upper case bold face letters to denote vectors and matrices respectively. We denote by $[n]$ the set $\{1,\dots, n\}$. For a vector $\xb\in \RR^d$ and a positive semi-definite matrix $\bSigma\in \RR^{d\times d}$, we denote by $\|\xb\|_2$ the vector's Euclidean norm and define $\|\xb\|_{\bSigma}=\sqrt{\xb^\top\bSigma\xb}$. For two positive sequences $\{a_n\}$ and $\{b_n\}$ with $n=1,2,\dots$, 
    we write $a_n=O(b_n)$ if there exists an absolute constant $C>0$ such that $a_n\leq Cb_n$ holds for all $n\ge 1$ and write $a_n=\Omega(b_n)$ if there exists an absolute constant $C>0$ such that $a_n\geq Cb_n$ holds for all $n\ge 1$. We use $\tilde O(\cdot)$ to further hide the polylogarithmic factors. 

\newcolumntype{g}{>{\columncolor{LightCyan}}c}

\begin{table*}[t!]
% \begin{adjustbox}{width=\columnwidth,center}
\centering
\tabcolsep=0.02cm
\begin{tabular}{cggggg}
\toprule
\rowcolor{white}
&  &  & Variance & Varying &  
 \\
 \rowcolor{white}
 \multirow{-2}{*}{Model} & \multirow{-2}{*}{Algorithm} & \multirow{-2}{*}{Regret} & -Dependent & Arm Set & \multirow{-2}{*}{Require $B_K$ }
 \\
 \midrule
 \rowcolor{white}
& SW-UCB &  \\
\rowcolor{white}
 &\citep{cheung2018hedging} & \multirow{-2}{*}{$\tilde  O\big(d^{\frac{7}{8}}B_K^{\frac{1}{4}}K^{\frac{3}{4}}\big)$}&\multirow{-2}{*}{No}&\multirow{-2}{*}{Yes}&\multirow{-2}{*}{Yes}\\
  \rowcolor{white}
 \multirow{-4}{*}{Linear Bandit}
 & BOB &\\
 \rowcolor{white}
 &\citep{cheung2018hedging}&\multirow{-2}{*}{ $\tilde  O\Big(d^{\frac{7}{8}}B_K^{\frac{1}{4}}K^{\frac{3}{4}}\Big)$} &\multirow{-2}{*}{No}&\multirow{-2}{*}{Yes}&\multirow{-2}{*}{No} \\
 \rowcolor{white}
 & RestartUCB &  \\
 \rowcolor{white}
 &\citep{zhao2020simple}&\multirow{-2}{*}{ $\tilde  O\Big(d^{\frac{7}{8}}B_K^{\frac{1}{4}}K^{\frac{3}{4}}\Big)$} &\multirow{-2}{*}{No}&\multirow{-2}{*}{Yes}&\multirow{-2}{*}{Yes}\\
  \rowcolor{white}
 & RestartUCB-BOB &  \\
 \rowcolor{white}
 &\citep{zhao2020simple}&\multirow{-2}{*}{ $\tilde  O\Big(d^{\frac{7}{8}}B_K^{\frac{1}{4}}K^{\frac{3}{4}}\Big)$} &\multirow{-2}{*}{No}&\multirow{-2}{*}{Yes}&\multirow{-2}{*}{No}\\
 \rowcolor{white}
 & LB-WeightUCB &  \\
 \rowcolor{white}
 &\citep{wang2023revisiting}&\multirow{-2}{*}{ $\tilde  O\Big(d^{\frac{3}{4}}B_K^{\frac{1}{4}}K^{\frac{3}{4}}\Big)$} &\multirow{-2}{*}{No}&\multirow{-2}{*}{Yes}&\multirow{-2}{*}{Yes}\\
 \rowcolor{white}
  & MASTER + OFUL &  \\
 \rowcolor{white}
 &\citep{wei2021non}&\multirow{-2}{*}{ $\tilde  O\Big(d B_K^{\frac{1}{3}}K^{\frac{2}{3}}\Big)$} &\multirow{-2}{*}{No}&\multirow{-2}{*}{No}&\multirow{-2}{*}{No}\\
 &\small{Restarted-$\algbandit$}&$\tilde O\Big(d^{\frac{7}{8}}(B_KV_K)^{\frac{1}{4}}K^{\frac{1}{2}}$&&&\\
  &\textbf{(Ours)}&$+ d^{\frac{5}{6}}B_K^{\frac{1}{3}}K^{\frac{2}{3}}\Big)$
&\multirow{-2}{*}{Yes}&\multirow{-2}{*}{Yes}&\multirow{-2}{*}{Yes}\\
&Restarted $\text{SAVE}^+$&$\tilde O\Big(d^{\frac{4}{5}}V_K^{\frac{2}{5}}B_K^{\frac{1}{5}}K^{\frac{2}{5}}$&&&\\
  &\textbf{(Ours)}&$+ d^{\frac{2}{3}}B_K^{\frac{1}{3}}K^{\frac{2}{3}}\Big)$
&\multirow{-2}{*}{Yes}&\multirow{-2}{*}{Yes}&\multirow{-2}{*}{Yes}\\
 &Restarted $\text{SAVE}^+\text{-BOB}$&$\tilde O\Big(d^{\frac{4}{5}}V_K^{\frac{2}{5}}B_K^{\frac{1}{5}}K^{\frac{2}{5}}$&&&\\
  &\textbf{(Ours)}&$+ d^{\frac{2}{3}}B_K^{\frac{1}{3}}K^{\frac{2}{3}}+d^{\frac{1}{5}}K^{\frac{7}{10}}\Big)$
&\multirow{-2}{*}{Yes}&\multirow{-2}{*}{Yes}&\multirow{-2}{*}{No}\\
 \midrule
 \rowcolor{white}
MAB 
 % &\citep{wei2016tracking}&\multirow{-2}{*}{ $\tilde  O\big(\left|\mathcal{A}\right|^{\frac{2}{3}}B_K^{\frac{1}{3}}V_K^{\frac{1}{3}}K^{\frac{1}{3}}\big)$} &\multirow{-2}{*}{No}&\multirow{-2}{*}{Yes}&\multirow{-2}{*}{No} \\
  &Rerun-UCB-V&$\tilde O\Big(\left|\mathcal{A}\right|^{\frac{2}{3}}B_K^{\frac{1}{3}}V_K^{\frac{1}{3}}K^{\frac{1}{3}}$&&&\\
  \rowcolor{white}
  &\citep{wei2016tracking}&$+ \left|\mathcal{A}\right|^{\frac{1}{2}}B_K^{\frac{1}{2}}K^{\frac{1}{2}}\Big)$
&\multirow{-2}{*}{Yes}&\multirow{-2}{*}{No}&\multirow{-2}{*}{Yes}\\
\rowcolor{white}
& Lower Bound &  \\
\rowcolor{white}
 &\citep{wei2016tracking} & \multirow{-2}{*}{$\tilde  \Omega\Big(B_K^{\frac{1}{3}}V_K^{\frac{1}{3}}K^{\frac{1}{3}}+B_K^{\frac{1}{2}}K^{\frac{1}{2}}\Big)$}&\multirow{-2}{*}{Yes}&\multirow{-2}{*}{No}&\multirow{-2}{*}{-}\\
\bottomrule
\end{tabular}
\caption{Comparison of non-stationary bandits in terms of regret guarantee. $K$ is the total rounds, $d$ is the problem dimension for linear bandits, $B_K$ is the \emph{total variation budget} defined in Eq.(\ref{eq:variation_budget}) (for the MAB setting, $B_K=\sum_{k=1}^K \|\mu_k-\mu_{k+1}\|_{\infty}$, where $\mu_k$ is the mean of the reward distribution at round $k$), $V_K$ is the \emph{total variance} defined in Eq.(\ref{total variance}), $\left|\mathcal{A}\right|$ is the number of arms for Multi-Armed bandits. }\label{table:11}
% \end{adjustbox}
\end{table*}

\section{Related Work}
Our work is closely related to two lines of research: non-stationary (linear) bandits and linear bandits with heteroscedastic noises. 
\subsection{Non-stationary (Linear) Bandits}
There have been a series of works about non-stationary bandits \citep{auer2002nonstochastic,10.1007/978-3-642-24412-4_16,besbes2014stochastic,wei2016tracking,cheung2019learning,russac2019weighted,pmlr-v99-auer19a,pmlr-v99-chen19b,russac2020algorithms,zhao2020simple,kim2020randomized,wei2021non,russac2021self,chen2021combinatorial,deng2022weighted,suk2022tracking,liu2023definition,abbasi2023new,clerici2023linear}. In non-stationary linear bandits, the unknown feature vector $\btheta_k$ can be dynamically and adversarially adjusted, with the total change upper bounded by the \emph{total variation budget} $B_K$ over $K$ rounds, \emph{i.e.}, $\sum_{k=1}^{K-1} \|\btheta_{k+1}-\btheta_k\|_2\leq B_K$. To tackle this problem, some works proposed forgetting strategies such as sliding window, restart, and weighted regression \citep{cheung2019learning,russac2019weighted,zhao2020simple}.  \citet{kim2020randomized} also introduced the randomized exploration with weighting strategy. The regret upper bounds in these works are all of $\Tilde{O}(B_K^{\frac{1}{4}}K^{\frac{3}{4}})$. A recent work by  \cite{wei2021non} proposed the MASTER-OFUL algorithm based on a black-box approach, which can achieve a regret bound of $\Tilde{O}(B_K^{\frac{1}{3}}K^{\frac{2}{3}})$ in the case where the arm set is fixed over $K$ rounds. To the best of our knowledge, none of the existing works consider how to utilize the variance information to improve the regret bound in the case with time-dependent variances. The only exception of utilizing the variance information in the non-stationary bandit setting is \citet{wei2016tracking}, which proposed the Rerun-UCB-V algorithm for the non-stationary MAB setting with a regret dependent on the action set size $|\cA|$. To compare with, the regret upper bounds of our algorithms are independent of the action set size, thus our algorithms are more efficient for the case where the number of actions is large.  

% Randomized exploration with weighting strategy has also been introduced in~\citet{kim2020randomized}. The aforementioned works provide a regret of $\tilde{\calO}(d^{2/3} \Delta^{1/2} K^{2/3})$ which is optimal as it matches the established lower bound $\Omega (d^{2 /3} \Delta^{1 /3} K^{2 / 3})$ in~\citep{cheung2019learning} up to $\log(K)$ factors. 
\subsection{Linear Bandits with Heteroscedastic Noises}
Some recent works study the heteroscedastic linear bandit problem, where the noise distribution is assumed to vary over time. \citet{kirschner2018information} first proposed the linear bandit model with heteroscedastic noise. In this model, the noise at round $k \in [K]$ is assumed to be $\sigma_k$-sub-Gaussian. Some follow-up works relaxed the $\sigma_k$-sub-Gaussian assumption by assuming the noise at the $k$-th round to be of variance $\sigma_k^2$ \citep{zhou2021nearly, zhang2021improved, kim2022improved, zhou2022computationally, dai2022variance,zhao2023variance}. Specifically, \citet{zhou2021nearly} and \citet{zhou2022computationally} considered the case where $\sigma_k$ is observed by the learner after the $k$-th round. \cite{zhang2021improved} and \cite{kim2022improved} proposed statistically efficient but computationally inefficient algorithms for the unknown-variance case. A recent work by \cite{zhao2023variance} proposed an algorithm that achieves both statistical and computational efficiency in the unknown-variance setting. \citet{dai2022variance} also considered a  specific heteroscedastic linear bandit problem where the linear model is sparse.

\section{Problem Setting}
    We consider a heteroscedastic variant of the classic non-stationary linear contextual bandit problem. Let $K$ be the total number of rounds. At each round $k\in[K]$, the interaction between the learner and the environment is as follows: 
   (1) the environment generates an arbitrary arm set $\cD_k \subseteq \RR^d$ where each element represents a feasible arm for the learner to choose, and also generates an \emph{unknown} feature vector $\btheta_k$;
        (2) the leaner observes $\cD_k$ and selects $\ab_k \in \cD_k$;
        (3) the environment generates the stochastic noise $\epsilon_k$ at round $k$ and reveals the stochastic reward $r_k = \la \btheta_k, \ab_k \ra + \epsilon_k$ to the leaner. 
        % (1) the environment generates an arbitrary arm set $\cD_k \subseteq \RR^d$ where each element represents a feasible arm for the learner to choose, and also generates an \emph{unknown} feature vector $\btheta_k$; (2) the leaner observes $\cD_k$ and selects $\ab_k \in \cD_k$; and (3) the environment generates the stochastic noise $\epsilon_k$ at round $k$ and reveal the stochastic reward $r_k = \la \btheta_k, \ab_k \ra + \epsilon_k$ to the leaner. 
        We assume the $\ell_2$ norm of the feasible actions is upper bounded by $A$, \emph{i.e.}, for all $k \in [K]$, $\ab \in \cD_k$: $\|\ab\|_2 \le A$. 
        
        Following \citet{zhou2021nearly,zhao2023variance}, we assume the following condition on the random noise $\epsilon_k$ at each round $k$: 
    \begin{align} 
        \PP\left(|\epsilon_k| \le R\right) = 1&,\quad \EE[\epsilon_k | \ab_{1:k}, \epsilon_{1:k - 1}] = 0, 
        \quad\EE[\epsilon_k^2 | \ab_{1:k}, \epsilon_{1:k - 1}] = \sigma_k^2. \label{eq:noise:condition}
    \end{align}
For the case with known variance, we assume that at each round $k$, the variance $\sigma_k$ is also revealed to the learner together with the reward $r_k$, whereas in the unknown variance case, the $\sigma_k$ can not be observed.

    Following \cite{cheung2018hedging,cheung2019learning,russac2019weighted,zhao2020simple}, we assume the sum of $\ell_2$ differences of consecutive $\theta_k$'s is upper bounded by the \emph{total variation budget} $B_K$, \emph{i.e.}, \begin{align}\label{eq:variation_budget}\sum_{k=1}^{K-1}\|\btheta_{k+1}-\btheta_k\|_2\leq B_K\,,\end{align}
where the $\btheta_k$'s can be adversarially chosen by an oblivious adversary. We also assume that the \emph{total variance} is upper bounded by  $V_K$, which is
\begin{align}
    \sum_{k=1}^K \sigma_k^2\leq V_K.\label{total variance}
\end{align}

The goal of the agent is to minimize the \emph{dynamic regret} defined as follows: 
\begin{align}
    \text{Regret}(K)&=\sum_{k\in[K]} \big(\la\ab_k^*,\btheta_k\ra-\la\ab_k,\btheta_k\ra\big)\notag\,,
\end{align}
where $\ab_k^*=\argmax_{\ab\in\cD_k}\la\ab,\btheta_k\ra$ is the optimal arm at round $k$ which gives the highest expected reward.

\section{Non-stationary Linear Contextual Bandit with Known Variance}
\begin{algorithm}[t]	\caption{Restarted-$\algbandit$}\label{algorithm:reweightbandit}
	\begin{algorithmic}[1]
	\REQUIRE Regularization parameter $\lambda>0$; 
	$\pnorm$, 
	an upper bound on the $\ell_2$-norm of $\btheta_k$ for all $k\in[K]$; confidence radius $\hat\beta_k$, variance parameters $\alpha, \gamma$; restart window size $w$.
	\STATE $\hat\bSigma_1 \leftarrow \lambda \Ib$, $\hat\bbb_1 \leftarrow \zero$, $\hat\btheta_1 \leftarrow \zero$, $\hat\beta_1 = \sqrt{\lambda}\pnorm$
	\FOR{$k=1,\ldots, K$}
        \IF{$k\% w == 0$}
\STATE $\hat\bSigma_k \leftarrow \lambda \Ib$, $\hat\bbb_k \leftarrow \zero$, $\hat\btheta_k \leftarrow \zero$, $\hat\beta_k = \sqrt{\lambda}\pnorm$
        \ENDIF
	\STATE Observe $\cD_k$ and choose $\ab_k\leftarrow\argmax_{\ab\in\cD_k} \la\ab,\btheta_k\ra+\hat\beta_k\|\ab_k\|_{\hat\bSigma_k^{-1}}$ 
	\STATE Observe $(r_k,\sigma_k)$, set $\bar\sigma_k$ as 
	\begin{align}
	    &\bar\sigma_k \leftarrow \max\{\sigma_k, \alpha, \gamma\|\ab_k\|_{\hat\bSigma_k^{-1}}^{1/2}\}\label{def:banditvar}
	\end{align}
	\STATE $\hat\bSigma_{k+1} \leftarrow \hat\bSigma_k + \ab_k\ab_k^\top/\bar\sigma_k^2$, $\hat\bbb_{k+1} \leftarrow \hat\bbb_k + r_k\ab_k/\bar\sigma_k^2$, $\hat\btheta_{k+1}\leftarrow \hat\bSigma_{k+1}^{-1}\hat\bbb_{k+1}$\label{alg:reweightbandit}
	\ENDFOR
	\end{algorithmic}
\end{algorithm}
In this section, we introduce our Algorithm \ref{algorithm:reweightbandit} under the setting where the variance $\sigma_k^2$ at $k$-th iteration is known to the agent in prior. We start from WeightedOFUL$^+$ \citep{zhou2022computationally}, an \emph{weighted ridge regression}-based algorithm for heteroscedastic linear bandits under the stationary reward assumption. For our non-stationary linear bandit setting where $\btheta_k$ is changing over the round $k$, WeightedOFUL$^+$ aims to build an $\hat\btheta_k$ which estimates the feature vector $\btheta_k$ by using the solution to the following regression problem:
\begin{align}
    \hat\btheta_k\leftarrow \arg\min_{\btheta}\sum_{t=1}^{k-1}\bar\sigma_t^{-2}(\la \btheta, \ab_t\ra - r_t)^2 + \lambda \|\btheta\|_2^2,\label{hhh}
\end{align}
where the weight is defined as in \eqref{def:banditvar}. After obtaining $\hat\btheta_k$, WeightedOFUL$^+$ chooses arm $\ab_k$ by maximizing the upper confidence bound (UCB) of $\la \ab, \hat\btheta\ra$, with an exploration bonus $\hat\beta_k\|\ab_k\|_{\hat\bSigma_k^{-1}}$, where $\hat\bSigma_k$ is the covariance matrix over $\ab_k$. 
The weight $\bar\sigma_k^2$ is introduced to balance the different past examples based on their reward variance $\sigma_k^2$, and such a strategy has been proved as a state-of-the-art algorithm for the stationary heteroscedastic linear bandits \citep{zhou2022computationally}. However, the non-stationary nature of our setting prevents us from directly using $\hat\btheta_k$ defined in \eqref{hhh} as an estimate to $\btheta$. Therefore, inspired by the \emph{restarting} strategy which has been adopted by previous algorithms for non-stationary linear bandits \citep{zhao2020simple}, we propose Restarted-WeightedOFUL$^+$, which periodically restarts itself and runs WeightedOFUL$^+$ as its submodule. The restart window size is set as $w$, which is used to balance the nonstationarity and the total regret and will be fine-tuned in the next steps. Combined with the restart window size $w$, we set $\{\hat\beta_k\}_{k\geq 1}$ to
\begin{small}\begin{align}
    \hat\beta_k& = 12\sqrt{d\log(1+\frac{(k\%w)A^2}{\alpha^2d\lambda})\log(32(\log(\frac{\gamma^2}{\alpha}+1)\frac{(k\%w)^2}{\delta})}  + 30\log(32(\log(\frac{\gamma^2}{\alpha})+1)\frac{(k\%w)^2}{\delta})\frac{R}{\gamma^2}+ \sqrt{\lambda}\pnorm.
\label{eq:defbanditbeta}
\end{align}\end{small}

We now propose the theoretical guarantee for Algorithm \ref{alg:reweightbandit}. The following key lemma shows how nonstationarity affects our estimation of the reward of each arm.

\begin{lemma}\label{lemma:key}
Let $0<\delta<1$. Then with probability at least $1-\delta$, for any action $\ab \in \RR^d$, we have
    \begin{align}
    |\ab^\top(\hat\btheta_k-\btheta_k)|&\leq \underbrace{\frac{A^2}{\alpha}\sqrt{\frac{dw}{\lambda}}\sum_{t=w\cdot \lfloor k/ w\rfloor+1}^{k-1} \|\btheta_t-\btheta_{t+1}\|_2}_{\text{Drifting term}} +\underbrace{\hat\beta_k\|\ab\|_{\hat\bSigma_k^{-1}}}_{\text{Stochastic term}}.\notag
\end{align}
\end{lemma}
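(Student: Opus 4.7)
The plan is to split $\ab^\top(\hat\btheta_k - \btheta_k)$ into a bias piece coming from the drifting $\btheta_t$'s inside the current epoch and a stochastic piece coming from the weighted martingale noise plus regularization, then bound each separately. Let $k_0 = w\cdot\lfloor k/w\rfloor + 1$ denote the first round of the current epoch. Since Algorithm \ref{algorithm:reweightbandit} restarts, the estimator satisfies $\hat\btheta_k = \hat\bSigma_k^{-1}\sum_{t=k_0}^{k-1}\bar\sigma_t^{-2} r_t \ab_t$ with $\hat\bSigma_k = \lambda\Ib + \sum_{t=k_0}^{k-1}\bar\sigma_t^{-2}\ab_t\ab_t^\top$. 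Substituting $r_t = \la \ab_t,\btheta_t\ra + \epsilon_t$ and adding and subtracting $\hat\bSigma_k\btheta_k$ yields the decomposition
\begin{align*}
    \ab^\top(\hat\btheta_k-\btheta_k)
    &= \underbrace{\ab^\top\hat\bSigma_k^{-1}\sum_{t=k_0}^{k-1}\bar\sigma_t^{-2}\ab_t\ab_t^\top(\btheta_t-\btheta_k)}_{\text{(I): drift}}
    \;+\;\underbrace{\ab^\top\hat\bSigma_k^{-1}\Bigl(-\lambda\btheta_k + \sum_{t=k_0}^{k-1}\bar\sigma_t^{-2}\epsilon_t\ab_t\Bigr)}_{\text{(II): stochastic}}.
\end{align*}

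For (I), I would apply the telescoping identity $\btheta_t - \btheta_k = \sum_{s=t}^{k-1}(\btheta_s-\btheta_{s+1})$ to get $|\ab_t^\top(\btheta_t-\btheta_k)| \le A\sum_{s=k_0}^{k-1}\|\btheta_s-\btheta_{s+1}\|_2$, which factors out of the sum. The remaining factor $\sum_{t=k_0}^{k-1}\bar\sigma_t^{-2}|\ab^\top\hat\bSigma_k^{-1}\ab_t|$ is handled by Cauchy--Schwarz twice: first $|\ab^\top\hat\bSigma_k^{-1}\ab_t|\le \|\ab\|_{\hat\bSigma_k^{-1}}\|\ab_t\|_{\hat\bSigma_k^{-1}}$, and then on the index $t$, producing
\[
    \sum_{t=k_0}^{k-1}\bar\sigma_t^{-2}\|\ab_t\|_{\hat\bSigma_k^{-1}} \;\le\; \sqrt{\sum_{t=k_0}^{k-1}\bar\sigma_t^{-2}}\sqrt{\sum_{t=k_0}^{k-1}\bar\sigma_t^{-2}\|\ab_t\|_{\hat\bSigma_k^{-1}}^2}\;\le\;\sqrt{w/\alpha^2}\cdot\sqrt{d},
\]
where the first factor uses $\bar\sigma_t\ge\alpha$ (from \eqref{def:banditvar}) and the second factor uses the standard trace identity $\sum_t\bar\sigma_t^{-2}\|\ab_t\|_{\hat\bSigma_k^{-1}}^2 = \tr(\hat\bSigma_k^{-1}(\hat\bSigma_k-\lambda\Ib))\le d$. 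Combining with $\|\ab\|_{\hat\bSigma_k^{-1}}\le \|\ab\|_2/\sqrt{\lambda}\le A/\sqrt{\lambda}$ produces exactly the claimed $\frac{A^2}{\alpha}\sqrt{\tfrac{dw}{\lambda}}\sum_{t=k_0}^{k-1}\|\btheta_t-\btheta_{t+1}\|_2$ drift bound.

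For (II), I would apply $|\ab^\top\hat\bSigma_k^{-1}\vb|\le \|\ab\|_{\hat\bSigma_k^{-1}}\|\vb\|_{\hat\bSigma_k^{-1}}$ with $\vb=-\lambda\btheta_k+\sum_t\bar\sigma_t^{-2}\epsilon_t\ab_t$. The deterministic part contributes $\lambda\|\btheta_k\|_{\hat\bSigma_k^{-1}}\le \sqrt{\lambda}\,\pnorm$, and the martingale part is exactly where the careful choice of weights pays off: using $\bar\sigma_t\ge\sigma_t$ together with $\bar\sigma_t\ge\gamma\|\ab_t\|_{\hat\bSigma_k^{-1}}^{1/2}$ (and $|\epsilon_t|\le R$), one invokes the Bernstein-type self-normalized concentration for vector-valued martingales from \cite{zhou2022computationally} to control $\|\sum_t\bar\sigma_t^{-2}\epsilon_t\ab_t\|_{\hat\bSigma_k^{-1}}$ by the first two log-terms appearing in the definition \eqref{eq:defbanditbeta} of $\hat\beta_k$, with failure probability at most $\delta$. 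Summing the three contributions yields the bound $\hat\beta_k\|\ab\|_{\hat\bSigma_k^{-1}}$ on term (II), completing the lemma.

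The main obstacle is essentially bookkeeping: one has to verify that the Bernstein self-normalized inequality of \cite{zhou2022computationally}, which is stated for a single run on a stationary horizon, still applies here because the $\{\epsilon_t\}$ form a martingale difference sequence with respect to the natural filtration regardless of whether $\btheta_t$ drifts, so the lemma applies with $k$ replaced by the epoch length $k\%w$---this is why $k$ only enters $\hat\beta_k$ through $k\%w$. Everything else is Cauchy--Schwarz and the standard trace trick.
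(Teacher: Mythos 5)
Your proposal is correct and follows essentially the same route as the paper's proof: the identical three-way decomposition into drift, martingale noise, and regularization, with the drift term controlled by Cauchy--Schwarz, the trace identity, $\bar\sigma_t\ge\alpha$, and $\lambda_{\min}(\hat\bSigma_k)\ge\lambda$, and the noise term controlled by the self-normalized Bernstein inequality of \cite{zhou2022computationally} with $\epsilon=R/\gamma^2$. The only differences are cosmetic (you factor the full epoch variation out of the drift sum before applying Cauchy--Schwarz, whereas the paper swaps the order of summation first, and your $\gamma\|\ab_t\|_{\hat\bSigma_k^{-1}}^{1/2}$ should carry subscript $t$ rather than $k$), and both yield the same constants.
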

\begin{proof}
    See Appendix \ref{app:keylemma} for the full proof. 
\end{proof}
Lemma \ref{lemma:key} suggests that under the non-stationary setting, the difference between the true expected reward and our estimated reward will be upper bounded by two separate terms. The first drifting term charcterizes the error caused by the non-stationary environment, and the second stochastic term charcterizes the error caused by the estimation of the stochastic environment. Note that similar bound has also been discovered in \citet{touati2020efficient}. We want to emphasize that our bound differs from existing ones in 1) an additional variance parameter $\alpha$ in the drifting term, and 2) a weighted convariance matrix $\hat\bSigma$ rather than a vanilla convariance matrix.

Next we present our main theorem.
\begin{theorem}\label{thm: regret for algo1 final}
Let $0<\delta<1$. Suppose that for all $k \geq 1$ and all $\ab \in \cD_k$, $\la \ab, \btheta_k\ra \in [-1, 1]$, $\|\btheta^*\|_2 \leq \pnorm$, $\|\ab\|_2 \leq A$.
 With probability at least $1-\delta$, the regret of Restarted-$\algbandit$ is bounded by
\begin{align}
    \text{Regret}(K)
    & \leq \frac{2A^2B_Kw^{\frac{3}{2}}}{\alpha}\sqrt{\frac{d}{\lambda}}+ 4\hat\beta\sqrt{V_K + K\alpha^2}\sqrt{\frac{Kd\iota}{w}}+ \frac{4d\iota K\hat\beta\gamma^2}{w}+\frac{4d\iota K}{w},\label{eq:cororegret}
\end{align}
where $\iota = \log(1+\frac{wA^2}{d\lambda\alpha^2})$, and $\hat\beta= \tilde O(\sqrt{d} + R/\gamma^2 + \sqrt{\lambda}\pnorm)$. Specifically, by treating $A,\lambda, B, R$ as constants and setting $\gamma^2 = R/\sqrt{d}$, we have
\begin{align}
     \text{Regret}(K) &= \tilde O(B_Kw^{3/2}d^{1/2}\alpha^{-1}+ dK\alpha/\sqrt{w}   + d\sqrt{K V_K/w} + dK/w).\label{eq:ttt}
\end{align}
\end{theorem}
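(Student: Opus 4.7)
The plan is to bound each per-round regret in terms of a drifting error caused by non-stationarity and a stochastic confidence radius, sum both parts, and use a weighted elliptical potential argument window-by-window to control the stochastic part, dispatching each round into the four pieces of \eqref{eq:cororegret} according to which term attains the three-way maximum defining $\bar\sigma_k$ in \eqref{def:banditvar}.

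First, by the UCB selection rule and applying Lemma~\ref{lemma:key} to both $\ab_k^*$ and $\ab_k$, I would obtain
\begin{align*}
\la\ab_k^*,\btheta_k\ra - \la\ab_k,\btheta_k\ra \;\leq\; 2D_k + 2\hat\beta_k\|\ab_k\|_{\hat\bSigma_k^{-1}},
\quad D_k := \tfrac{A^2}{\alpha}\sqrt{\tfrac{dw}{\lambda}}\sum_{t=w\lfloor k/w\rfloor+1}^{k-1}\|\btheta_t-\btheta_{t+1}\|_2.
\end{align*}
Since $\la\ab,\btheta_k\ra\in[-1,1]$, the instantaneous regret is simultaneously at most $2$, and this clipping is what lets me absorb rounds whose confidence bonus is too large. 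For the drifting part, each increment $\|\btheta_t-\btheta_{t+1}\|_2$ enters $D_k$ only when $k$ and $t$ lie in the same restart window with $k>t$, which happens for at most $w-1$ values of $k$. Therefore $\sum_k 2D_k \leq 2w\cdot (A^2/\alpha)\sqrt{dw/\lambda}\cdot B_K$, which matches the first term of \eqref{eq:cororegret}.

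The main obstacle is the stochastic sum $\sum_k 2\hat\beta_k\|\ab_k\|_{\hat\bSigma_k^{-1}}$. The plan is to apply the weighted elliptical potential lemma within each of the $K/w$ restart windows (valid because the restart resets $\hat\bSigma$ to $\lambda\Ib$) and combine, giving $\sum_{k=1}^K\min\{1,\|\ab_k/\bar\sigma_k\|_{\hat\bSigma_k^{-1}}^2\}\leq 2Kd\iota/w$, where the constant $\iota=\log(1+wA^2/(d\lambda\alpha^2))$ uses $\|\ab_k/\bar\sigma_k\|_2^2\leq A^2/\alpha^2$. Let $\cI_2=\{k:\|\ab_k/\bar\sigma_k\|_{\hat\bSigma_k^{-1}}^2>1\}$; since $|\cI_2|\leq 2Kd\iota/w$, bounding each such round by the trivial $2$ produces the last term $4Kd\iota/w$. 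On $\cI_1=[K]\setminus\cI_2$ I would then split by which quantity attains the maximum in $\bar\sigma_k$. When $\bar\sigma_k=\max(\sigma_k,\alpha)$, so $\bar\sigma_k^2\leq\sigma_k^2+\alpha^2$, Cauchy--Schwarz gives
\begin{align*}
\sum 2\hat\beta_k\|\ab_k\|_{\hat\bSigma_k^{-1}} \;=\; 2\hat\beta\sum\bar\sigma_k\|\ab_k/\bar\sigma_k\|_{\hat\bSigma_k^{-1}} \;\leq\; 4\hat\beta\sqrt{V_K+K\alpha^2}\sqrt{Kd\iota/w},
\end{align*}
matching the second term. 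When instead $\bar\sigma_k=\gamma\|\ab_k\|_{\hat\bSigma_k^{-1}}^{1/2}$, the identity $\|\ab_k\|_{\hat\bSigma_k^{-1}}=\gamma^2\|\ab_k/\bar\sigma_k\|_{\hat\bSigma_k^{-1}}^2$ collapses the sum to $2\hat\beta\gamma^2\cdot 2Kd\iota/w$, matching the third term.

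Collecting the four pieces yields \eqref{eq:cororegret}, and the asymptotic form \eqref{eq:ttt} then follows by substituting $\gamma^2=R/\sqrt d$ so that $\hat\beta=\tilde O(\sqrt d + R/\gamma^2 + \sqrt\lambda\pnorm)=\tilde O(\sqrt d)$ and absorbing $\iota$ into $\tilde O(1)$. The delicate point is the case split on $\bar\sigma_k$: it is precisely this step that lets Cauchy--Schwarz replace the naive $\sqrt K$ factor by $\sqrt{V_K+K\alpha^2}$, which is the source of the variance dependence promised by the theorem; without the third branch $\gamma\|\ab_k\|_{\hat\bSigma_k^{-1}}^{1/2}$ in the definition of $\bar\sigma_k$, the $\|\ab_k/\bar\sigma_k\|_{\hat\bSigma_k^{-1}}^2$-based elliptical potential bound would not be usable because there would be no control over rounds where the unweighted bonus is large.
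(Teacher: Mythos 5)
Your proposal is correct and follows essentially the same route as the paper's proof: the same decomposition via Lemma~\ref{lemma:key} into a drifting term (summed by exchanging the order of summation to pick up the factor $w$) and a stochastic term, the same split into rounds where $\|\ab_k/\bar\sigma_k\|_{\hat\bSigma_k^{-1}}\ge 1$ (absorbed by the clipped bound and the elliptical potential lemma) versus the rest, and the same case analysis on which branch of \eqref{def:banditvar} attains the maximum, with Cauchy--Schwarz yielding the $\sqrt{V_K+K\alpha^2}$ factor. The only cosmetic difference is that you run the elliptical potential and Cauchy--Schwarz globally over all $K/w$ windows at once, whereas the paper first derives a per-window bound and then aggregates across windows with a second Cauchy--Schwarz; the two are equivalent.
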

\begin{proof}
    See Appendix \ref{app:firstalg}.
\end{proof}
\begin{remark}\label{rmk:1}
    For the stationary linear bandit case where $B_K = 0$, we can set the restart window size $w = K$ and the variance parameter $\alpha = 1/\sqrt{K}$, then we obtain an $\tilde O(d\sqrt{V_K} + d)$ regret for Algorithm \ref{alg:reweightbandit}, which is identical to the one in \citet{zhou2022computationally}. 
\end{remark}

Next, we aim to select parameters $\alpha$ and $w$ in order to optimize \eqref{eq:ttt}. 
\begin{corollary}
    Assume that $B_K, V_K \in [\Omega(1), O(K)]$. Then by selecting
    \begin{align}
        &w=d^{1/4}\sqrt{V_K/B_K}, &dV_K^6\geq K^4B_K^2,\notag\\
        &w=d^{1/6}(K/B_K)^{1/3}&\text{otherwise}.\notag
    \end{align}
    and $\alpha = d^{-1/4}B_K^{1/2}wK^{-1/2}$, the final regret is in the order
    \begin{align}
        \text{Regret}(K) &=\tilde O(d^{7/8}(B_KV_K)^{1/4}\sqrt{K} + d^{5/6}B_K^{1/3}K^{2/3}). 
    \end{align}\label{corollary for algo1 regret}
\end{corollary}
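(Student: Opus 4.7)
The plan is to take the regret bound \eqref{eq:ttt} from Theorem~\ref{thm: regret for algo1 final} and perform a two-stage optimization: first choose $\alpha$ as a function of $w$ by balancing the two $\alpha$-dependent terms, then choose $w$ by splitting into two regimes depending on which of the remaining terms dominates.

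First, I would label the four summands in \eqref{eq:ttt} as
\begin{align*}
T_1(w,\alpha) &= B_K w^{3/2} d^{1/2}/\alpha, & T_2(w,\alpha) &= dK\alpha/\sqrt{w},\\
T_3(w) &= d\sqrt{KV_K/w}, & T_4(w) &= dK/w.
\end{align*}
Only $T_1$ and $T_2$ depend on $\alpha$; $T_1$ is decreasing in $\alpha$ and $T_2$ is increasing, so the optimal $\alpha$ for a fixed $w$ is obtained by setting $T_1 = T_2$. Solving gives $\alpha^2 = B_K w^2/(d^{1/2} K)$, i.e.\ $\alpha = d^{-1/4} B_K^{1/2} w K^{-1/2}$, which matches the stated choice. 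Substituting back yields $T_1 = T_2 = d^{3/4} B_K^{1/2}\sqrt{wK}$, reducing the bound to
\[
\tilde O\!\left(d^{3/4} B_K^{1/2} \sqrt{wK} \;+\; d\sqrt{KV_K/w} \;+\; dK/w\right),
\]
a function of $w$ alone with one increasing and two decreasing terms.

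Next I would optimize over $w$ by balancing the increasing term $T_{12}(w) := d^{3/4} B_K^{1/2}\sqrt{wK}$ against whichever of $T_3$ and $T_4$ is the larger of the two decreasing terms at the optimum. In Case~1, balancing $T_{12} = T_3$ gives $d^{3/2} B_K w = d^2 V_K/w$, hence $w = d^{1/4}\sqrt{V_K/B_K}$ and $T_{12} = T_3 = d^{7/8}(B_K V_K)^{1/4}\sqrt{K}$. In Case~2, balancing $T_{12} = T_4$ gives $d^{3/4} B_K^{1/2} w^{3/2} = dK^{1/2}$, hence $w = d^{1/6}(K/B_K)^{1/3}$ and $T_4 = d^{5/6} B_K^{1/3} K^{2/3}$.

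The only delicate step is to verify that the two cases join consistently, i.e.\ to identify the single inequality that says Case~1 is the correct regime. Plugging the Case~1 choice of $w$ into the condition $T_3 \ge T_4$ yields, after squaring and simplifying, $d V_K^6 \ge K^4 B_K^2$; substituting the Case~2 choice of $w$ into $T_3 \le T_4$ yields the opposite inequality. Thus the two regimes are separated by the same boundary $dV_K^6 = K^4 B_K^2$ stated in the corollary, and in each regime the dominant term among $\{T_{12}, T_3, T_4\}$ coincides with the balanced pair, which is what makes the selected $(w,\alpha)$ actually optimal up to constants. Finally, I would take the maximum of the two regime bounds to obtain the unified statement
\[
\text{Regret}(K) = \tilde O\!\left(d^{7/8}(B_KV_K)^{1/4}\sqrt{K} + d^{5/6} B_K^{1/3} K^{2/3}\right),
\]
noting that the assumption $B_K, V_K \in [\Omega(1), O(K)]$ keeps the chosen $w$ in $[1,K]$ and $\alpha$ in a meaningful range, so the bound from Theorem~\ref{thm: regret for algo1 final} indeed applies with these parameters.
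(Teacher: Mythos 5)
Your calculation is correct and is exactly the optimization the paper leaves implicit (no proof of this corollary appears in the appendix): balancing the two $\alpha$-dependent terms of \eqref{eq:ttt} gives the stated $\alpha$, the two regimes come from balancing the resulting $d^{3/4}B_K^{1/2}\sqrt{wK}$ against $d\sqrt{KV_K/w}$ or $dK/w$, and your verification that both regime choices of $w$ induce the same boundary $dV_K^6 = K^4B_K^2$ is the one step that genuinely needs checking. All the algebra is right, so nothing further is needed.
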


\begin{remark}
We compare the regret of Algo.\ref{alg:reweightbandit} in Corollary \ref{corollary for algo1 regret} with previous results in the following special cases.
\begin{itemize}
    \item In the worst case where $V_K=O(K)$, our result becomes $\Tilde{O}(d^{7/8}B_K^{1/4}K^{3/4})$, matching the state-of-the-art results for restarting and sliding window strategies \cite{cheung2018hedging,zhao2020simple}.
    \item In the case where the \emph{total variance} is small, \emph{i.e.}, $V_K=\Tilde{O}(1)$,  assuming that $K^4> d$, our result becomes $\Tilde{O}(d^{5/6}B_K^{1/3}K^{2/3})$, better than all the previous results \cite{cheung2018hedging,zhao2020simple,wang2023revisiting,wei2021non}.
\end{itemize}
\end{remark}

\begin{remark}
    \citet{wei2016tracking} has studied non-stationary MAB with dynamic variance. With the knowledge of $V_K$ and $B_K$, \citet{wei2016tracking} proposed a restart-based Rerun-UCB-V algorithm with a $\tilde O(\left|\mathcal{A}\right|^{\frac{2}{3}}B_K^{\frac{1}{3}}V_K^{\frac{1}{3}}K^{\frac{1}{3}}+ \left|\mathcal{A}\right|^{\frac{1}{2}}B_K^{\frac{1}{2}}K^{\frac{1}{2}})$ regret, where $\cA$ is the action set. Reduced to the MAB setting, our Restarted-$\algbandit$ achieves an $\tilde O(|\cA|^{7/8}(B_KV_K)^{1/4}\sqrt{K} + |\cA|^{5/6}B_K^{1/3}K^{2/3})$ regret, which is worse than \citet{wei2016tracking}. We claim that this is due to the generality of the linear bandits, which brings us a looser bound to the drifting term in Lemma \ref{lemma:key}. When restricting to the MAB setting, our drifting term enjoys a tighter bound, which could further tighten our final regret. To develop an algorithm achieving the same regret as \citet{wei2016tracking} is beyond the scope of this work.
\end{remark}

\begin{remark}
    \citet{wei2016tracking} has established a lower bound $\tilde  \Omega(B_K^{\frac{1}{3}}V_K^{\frac{1}{3}}K^{\frac{1}{3}}+B_K^{\frac{1}{2}}K^{\frac{1}{2}})$ for MAB with total variance $V_K$ and total variation budget $B_K$. There still exist gaps between our regret and their lower bound regarding the dependence of $K, V_K, B_K$, and we leave to fix the gaps as future work.
\end{remark}

\section{Non-stationary Linear Contextual Bandit with Unknown Variance and Total Variation Budget}
By Theorem \ref{thm: regret for algo1 final}, we know that Algorithm \ref{alg:reweightbandit} is able to utilize the total variance $V_K$ and obtain a better regret result compared with existing algorithms which do not utilize $V_K$. However, the success of Algorithm \ref{alg:reweightbandit} depends on the knowledge of the per-round variance $\sigma_k$, and it also depends on a good selection of restart window size $w$, whose optimal selection depends on both $V_K$ and $B_K$. In this section, we aim to relax these two requirements with still better regret results.

\subsection{Unknown Per-round Variance, Known $V_K$ and $B_K$}
\begin{algorithm*}[t!] 
    \caption{$\text{Restarted SAVE}^+$} \label{alg:1}
    \begin{algorithmic}[1]
        \REQUIRE $\alpha > 0$; the upper bound on the $\ell_2$-norm of $\ab$ in $\cD_k (k\ge 1)$, i.e., $A$; the upper bound on the $\ell_2$-norm of $\btheta_k$ $(k\ge 1)$, i.e., $\pnorm$; restart window size $w$.
    \STATE Initialize $L \leftarrow \lceil \log_2 (1 / \alpha) \rceil$. 
    \STATE Initialize the estimators for all layers: $\hat\bSigma_{1, \ell} \leftarrow 2^{-2\ell} \cdot \Ib$, $\hat\bbb_{1, \ell} \leftarrow \zero$, $\hat\btheta_{1, \ell} \leftarrow \zero$, $\hat \beta_{1, \ell} \leftarrow 2^{-\ell + 1}$, $\hat\Psi_{1,\ell}\leftarrow \emptyset$ for all $\ell \in [L]$. \alglinelabel{alg1:line:2}
    \FOR{$k=1,\ldots, K$}
    \IF{$k\%w==0$}
\STATE Set $\hat\bSigma_{k, \ell} \leftarrow 2^{-2\ell} \cdot \Ib$, $\hat\bbb_{k, \ell} \leftarrow \zero$, $\hat\btheta_{k, \ell} \leftarrow \zero$, $\hat \beta_{1, \ell} \leftarrow 2^{-\ell + 1}$, $\hat\Psi_{k,\ell}\leftarrow \emptyset$ for all $\ell \in [L]$.\alglinelabel{alg:restart}
\ENDIF
\STATE Observe $\cD_k$, choose $\ab_k \leftarrow \argmax_{\ab \in \cD_k} \min_{\ell \in [L]}\la \ab, \hat{\btheta}_{k, \ell}\ra + \hat\beta_{k, \ell} \|\ab\|_{\hat\bSigma_{k, \ell}^{-1}}$ and observe $r_k$. \alglinelabel{line:selection} 
\STATE Set $\ell_k\leftarrow L+1$
\STATE Let $\cL_k \leftarrow \{\ell \in [L]: \|\ab_k\|_{\hat\bSigma_{k,\ell}^{-1}} \geq 2^{-\ell}\}$, set $\ell_k\leftarrow \min(\cL_k)$ if $\cL_k \neq \emptyset$\alglinelabel{alg1:line:9}
\STATE $\hat\Psi_{k,\ell_k} \leftarrow \hat\Psi_{k,\ell_k}\cup \{k\}$
\IF{$\cL_k \neq \emptyset$}
\STATE Set $w_k \leftarrow \frac{2^{-\ell_k}}{{\|\ab_k\|_{\hat\bSigma_{k, \ell_k}^{-1}}}}$ and update \alglinelabel{alg1:line:12} 
\begin{align} \hat\bSigma_{k + 1, \ell_k} \leftarrow \hat\bSigma_{k, \ell_k} + w_k^2 \ab_k \ab_k^{\top}, \hat{\bbb}_{k + 1, \ell} \leftarrow \hat\bbb_{k, \ell_k} + w_k^2 \cdot r_k \ab_k, \hat\btheta_{k + 1, \ell_k} \leftarrow \hat\bSigma_{k + 1, \ell_k}^{-1} \hat{\bbb}_{k + 1, \ell_k}. \notag \end{align}
\STATE Compute the adaptive confidence radius $\hat\beta_{k+1, l}$for the next round according to \eqref{eq:def:beta}. \label{alg1:line:18}
\ENDIF
     \STATE For $\ell\neq \ell_k$ let $\hat\bSigma_{k + 1, \ell} \leftarrow \hat\bSigma_{k, \ell}, \hat\bbb_{k + 1, \ell} \leftarrow \hat\bbb_{k, \ell}, \hat\btheta_{k + 1, \ell} \leftarrow \hat\btheta_{k, \ell}, \hat\beta_{k + 1, \ell} \leftarrow \hat\beta_{k, \ell}.$
    \ENDFOR
    \end{algorithmic}
\end{algorithm*}

We first aim to relax the requirement that each $\sigma_k^2$ is known to the agent at the beginning of $k$-th round. We follow the SAVE algorithm \citep{zhao2023variance} which introduces a multi-layer structure \citep{chu2011contextual, he2021uniform} to deal with unknown $\sigma_k^2$. In detail, SAVE maintains multiple estimates to the current feature vector $\theta_k$, which we denote them as $\hat\btheta_{k,1},...,\hat\btheta_{k,L}$ in line \ref{alg1:line:2}. Each $\hat\btheta_{k,\ell}$ is calculated based on a subset $\hat\Psi_{k, \ell} \subseteq [k-1]$ of samples $\{(\ab_t, r_t)\}$. The rule that whether to add the current $k$ to some $\hat\Psi_{k, \ell}$ is based on the uncertainty of $\ab_k$ with the sample set $\{(\ab_t, r_t)\}_{t \in \hat\Psi_{k, \ell}}$. As long as $\ab_k$ is too uncertain w.r.t. some level $\ell_k$ (line \ref{alg1:line:9}), we add $k$ to $\hat\Psi_{k, \ell}$ and update the estimate $\hat\btheta_{k,\ell_k}$ accordingly (line \ref{alg1:line:12}). Each $\hat\btheta_{k,\ell_k}$ is calculated as the solution of a weighted regression problem, where the weight $w_k$ is selected as the inverse of the uncertainty of the arm $\ab_k$ w.r.t. the samples in the $\ell$-th layer. Maintaining $L$ different $\hat\btheta_{k,\ell}, \ell \in [L]$, Algorithm \ref{alg:1} then calculates $L$ number of UCB for each arm $\ab$ w.r.t. $L$ different $\hat\btheta_{k,\ell}$, and selects the arm which maximizes the minimization of $L$ UCBs (line \ref{line:selection}). It has been shown in \citet{zhao2023variance} that such a multilayer structure is able to utilize the $V_K$ information without knowing the per-round variance $\sigma_k^2$. Similar to Algorithm \ref{alg:reweightbandit}, in order to deal with the nonstationarity issue, we introduce a restarting scheme that Algorithm \ref{alg:1} restarts itself by a restart window size $w$ (line \ref{alg:restart}).

Next we show the theoretical guarantee of Algorithm \ref{alg:1}. We call the restart time rounds \emph{grids} and denote them by $g_1, g_2, \ldots g_{\lceil \frac{K}{w}\rceil-1}$, where $g_i\%w=0$ for all $i\in[\lceil \frac{K}{w}\rceil-1]$. Let $i_k$ be the grid index of time round $k$, \emph{i.e.}, $g_{i_k}\leq k<g_{i_k+1}$. We denote $\hat\Psi_{k, \ell}:=\{t: t\in[g_{i_k},k-1], \ell_t=\ell\}$.
We define the confidence radius $\hat\beta_{k,\ell}$ at round $k$ and layer %\todoq{layer, or level, need to be consistent} 
$\ell$ as
\begin{small}
    \begin{align} 
    \hat \beta_{k, \ell} &:= 16 \cdot 2^{-\ell} \sqrt{\left(8\hat\Var_{k, \ell}  + 6R^2 \log(\frac{4(w + 1)^2 L}{\delta}) + 2^{-2\ell + 4}\right)}  \times\sqrt{\log(\frac{4w^2 L }{\delta} )}+ 6 \cdot 2^{-\ell} R \log(\frac{4w^2 L}{\delta}) + 2^{-\ell}B\label{eq:def:beta}, 
\end{align}
\end{small}
where 
\begin{small}
    $\hat\Var_{k, \ell} := \begin{cases}\sum_{i \in \hat\Psi_{k, \ell}} w_i^2 \big(r_i - \la \hat\btheta_{k, \ell}, \ab_i \ra \big)^2, &\text{If} \quad 2^\ell \ge 64 \sqrt{\log\left(\frac{4(w + 1)^2 L}{\delta}\right)} \\
R^2 \left|\hat\Psi_{k, \ell}\right|,  &\text{otherwise}.
\end{cases} $
\end{small}

Note that our selection of the confidence radius $\hat\beta_{k, \ell}$ only depends on $\hat\Var_{k, \ell}$, which serves as an estimate of the total variance of samples at $\ell$-th layer without knowing $\sigma_k^2$. 

We build the theoretical guarantee of Algorithm \ref{alg:1} as follows.

 \begin{theorem} \label{thm:regret1}
 Let $0<\delta<1$. Suppose that for all $k \geq 1$ and all $\ab \in \cD_k$, $\la \ab, \btheta_k\ra \in [-1, 1]$, $\|\btheta^*\|_2 \leq \pnorm$, $\|\ab\|_2 \leq A$. If $\{\beta_{k, \ell}\}_{k\ge 1, \ell \in [L]}$ is defined in \eqref{eq:def:beta},
 % and $\alpha = 1 / (R \cdot K^{3 / 2})$,
 then the cumulative regret of Algorithm~\ref{alg:1} is bounded as follows with probability at least $1 - 3\delta$: %\todoq{since you're using big O notation, you can use $=$ instead of $\leq$. This applies to other theorems/corolaries a well (fixed)}
        \begin{align} 
            \text{Regret}(K)&= \tilde{O}\bigg(\frac{A^2\sqrt{d}w^{\frac{3}{2}}B_K}{\alpha}+\big(w\alpha^2+d\big)\cdot\sqrt{\frac{K}{w}V_K}+\big(1+R\big)\cdot\big(K\alpha^2+\frac{Kd}{w}\big)\bigg)
        \end{align}
        Specifically, regarding $A, R$ as constants, we have
        \begin{align}
            \text{Regret}(K)&=\tilde O(\sqrt{d}w^{1.5}B_K/\alpha + \alpha^2(K+\sqrt{wKV_K}) + d\sqrt{KV_K/w} + dK/w).\notag
        \end{align}
    \end{theorem}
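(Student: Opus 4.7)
The plan is to adapt the proof of Theorem \ref{thm: regret for algo1 final} to the multi-layer structure of Algorithm \ref{alg:1}. The core technical step is to establish a multi-layer analogue of Lemma \ref{lemma:key}: for every layer $\ell\in[L]$ and every round $k$ belonging to the epoch starting at $g_{i_k}=w\lfloor k/w\rfloor$, I would show
\begin{align*}
|\la\ab,\hat\btheta_{k,\ell}-\btheta_k\ra|\le C\cdot 2^{\ell}\sqrt{dw}\sum_{t=g_{i_k}+1}^{k-1}\|\btheta_t-\btheta_{t+1}\|_2+\hat\beta_{k,\ell}\|\ab\|_{\hat\bSigma_{k,\ell}^{-1}}.
\end{align*}
The stochastic part is the standard Bernstein-type concentration that underlies the $\text{SAVE}^+$ analysis of \citet{zhao2023variance}, while the drifting part uses the same matrix-perturbation argument as in Lemma \ref{lemma:key}, with the role of $\alpha^{-1}$ there now played by $2^{\ell}$ (coming from the per-layer regularizer $2^{-2\ell}\Ib$) and with the per-sample weights $w_t\le 1$ still enabling the same control.

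Next, I would partition the horizon into $\lceil K/w\rceil$ epochs and use the arm-selection rule
\[\ab_k=\arg\max_{\ab\in\cD_k}\min_{\ell\in[L]}\la\ab,\hat\btheta_{k,\ell}\ra+\hat\beta_{k,\ell}\|\ab\|_{\hat\bSigma_{k,\ell}^{-1}}\]
to derive an optimism inequality: combining the lemma above applied to $\ab_k^*$ with the min--max structure bounds the per-round regret by $\text{drift}_{\ell_k}+2\hat\beta_{k,\ell_k}\|\ab_k\|_{\hat\bSigma_{k,\ell_k}^{-1}}$ whenever $\ell_k\le L$, and by $O(\alpha)$ when $\ell_k=L+1$, since line \ref{alg1:line:9} then forces $\|\ab_k\|_{\hat\bSigma_{k,\ell}^{-1}}<2^{-\ell}$ for every $\ell$, so the minimum confidence width collapses to the smallest scale $2^{-L}\le 2\alpha$. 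Summing the drift term over $k$ inside an epoch yields a bound of order $\sqrt{dw}\cdot w\cdot B_K^{(i)}/\alpha$ per epoch, where $B_K^{(i)}$ denotes the variation within epoch $i$; summing over epochs using $\sum_i B_K^{(i)}\le B_K$ recovers the leading drift contribution $A^2\sqrt{d}w^{3/2}B_K/\alpha$.

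For the stochastic contribution, I would sum $\sum_{k:\ell_k=\ell}\hat\beta_{k,\ell}\|\ab_k\|_{\hat\bSigma_{k,\ell}^{-1}}$ per layer and per epoch by Cauchy--Schwarz, bounding the squared sum of $\|\ab_k\|_{\hat\bSigma_{k,\ell}^{-1}}$ via the elliptical potential lemma (which produces $d\log(1+wA^2\cdot 2^{2\ell})$ per epoch, hence a $Kd/w$ contribution after summing across epochs and layers) and bounding $\hat\beta_{k,\ell}$ via its definition \eqref{eq:def:beta}. The variance-tracking step of \citet{zhao2023variance} asserts that $\sum_{k,\ell}w_k^2\hat\Var_{k,\ell}$ concentrates around $\sum_k\sigma_k^2\le V_K$; once the drift-adjusted version of this bound is in hand, splitting $\hat\beta_{k,\ell}=\tilde O\bigl(2^{-\ell}\sqrt{\hat\Var_{k,\ell}}+\cdots\bigr)$ into a $\sqrt{V_K}$ piece and a $\sqrt{w\alpha^2}$ piece yields the $(w\alpha^2+d)\sqrt{KV_K/w}$ term, while the $R$-scaled logarithmic tail in $\hat\beta_{k,\ell}$ and the small-$\ell$ cutoff in the definition of $\hat\Var_{k,\ell}$ account for the $(1+R)K\alpha^2$ piece.

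The main obstacle will be the per-layer variance concentration under non-stationarity: the residuals used to build $\hat\Var_{k,\ell}$ are biased because $\btheta_t$ drifts within an epoch, so the Freedman-type martingale argument from the stationary $\text{SAVE}^+$ analysis has to be re-derived with an extra additive bias. I would address this by expanding $(r_t-\la\hat\btheta_{k,\ell},\ab_t\ra)^2$ into an intrinsic-noise term, an estimation-error term controlled by the multi-layer analogue of Lemma \ref{lemma:key}, and a drift cross-term whose cumulative contribution is absorbed into the $\sqrt{d}w^{3/2}B_K/\alpha$ and $K\alpha^2$ budgets already present. Once this is settled, taking $L=\lceil\log_2(1/\alpha)\rceil$ and collecting all pieces gives the claimed regret bound after routine algebra.
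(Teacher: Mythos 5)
Your proposal follows essentially the same route as the paper's proof: the same multi-layer analogue of Lemma \ref{lemma:key} with the drift term scaling as $2^{\ell}\sqrt{dw}$, the same three-way regret decomposition over small layers $\ell\le\ell^*$, large layers, and the overflow set $\ell_k=L+1$ (bounded via $\|\ab_k\|_{\hat\bSigma_{k,L}^{-1}}\le 2^{-L}\le 2\alpha$), and the same per-epoch elliptical-potential and Cauchy--Schwarz summation. The one point where you are actually more careful than the paper is the drift-induced bias in the variance estimates $\hat\Var_{k,\ell}$: the paper simply adopts the stationary concentration lemmas of \citet{zhao2023variance} (Lemmas \ref{lemma:var} and \ref{lemma:varest}) without re-deriving them for a drifting $\btheta_k$, whereas you explicitly flag and propose to absorb the resulting cross-term.
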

    \begin{proof}
    See Appendix \ref{app:secalg} for the full proof.
\end{proof}
    \begin{remark}
    Like Remark \ref{rmk:1}, we consider the case where $B_K = 0$. We set $w = K$ and $\alpha^2 = 1/K\sqrt{V_K}$, then we obtain a regret $\tilde O(d\sqrt{V_K} +d)$, which matches the regret of the SAVE algorithm in \citet{zhao2023variance}. 
\end{remark}
% We have the following corollary to optimize the regret if we have no knowledge of $B_K$ and $V_K$. 
% \begin{corollary}
% By selecting 
% \begin{align}
%     w=(dK)^{1/3}, \alpha=d^{1/6}\sqrt{w}/K^{1/3}=d^{1/3}/K^{1/6},
% \end{align}
% we have
% \begin{align}
% &\sqrt{d}\sqrt{dK}B_KK^{2/3}/d^{1/3} + d^{2/3}K^{2/3} + d^{2/3}/K^{1/3}d^{1/6}K^{1/6}\sqrt{KV_K} + d^{5/6}K^{1/3}\sqrt{V_K} + d^{2/3}K^{2/3}\notag \\
% &=d^{2/3}K^{2/3}B_K + d^{5/6}K^{1/3}\sqrt{V_K}
% \end{align}
% \end{corollary}

\begin{corollary}
    Assume that $B_K, V_K \in [\Omega(1), O(K)]$, then by selecting
\begin{align}
        &w=d^{1/3}(K/B_K)^{1/3}, &K^2\geq V_K^3d/B_K,\notag\\
        &w=d^{2/5}(KV_K)^{1/5}/B_K^{2/5}&\text{otherwise}.\notag
    \end{align}
    and $\alpha = d^{1/6}\sqrt{w}B_K^{1/3}/(K^{1/3} + (V_KKw)^{1/6})$, we have
    \begin{align}
        \text{Regret}(K) = \tilde O(d^{4/5}V_K^{2/5}B_K^{1/5}K^{2/5} + d^{2/3}B_K^{1/3}K^{2/3}).
    \end{align}\label{corollary w alpha opt}
\end{corollary}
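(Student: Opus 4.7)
The plan is to treat the regret bound from Theorem~\ref{thm:regret1} as a function of $(w,\alpha)$ and optimize it analytically. Denote the four relevant summands by
\begin{align*}
T_1 \;=\; \sqrt{d}\,w^{3/2} B_K/\alpha, \quad T_2 \;=\; \alpha^2\bigl(K + \sqrt{wKV_K}\bigr), \quad T_3 \;=\; d\sqrt{KV_K/w}, \quad T_4 \;=\; dK/w.
\end{align*}
Only $T_1$ and $T_2$ involve $\alpha$, and they move in opposite directions: $T_1$ decreases while $T_2$ grows in $\alpha$. The first step is therefore to pick $\alpha$ that equates them, which yields $\alpha^3 = \sqrt{d}\,w^{3/2} B_K / (K + \sqrt{wKV_K})$. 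Using $(a+b)^{1/3} \asymp a^{1/3} + b^{1/3}$ on the denominator recovers the prescribed $\alpha = d^{1/6}\sqrt{w}\,B_K^{1/3}/\bigl(K^{1/3} + (V_K K w)^{1/6}\bigr)$ up to absolute constants.

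After substituting this $\alpha$, the combined drift-plus-correction contribution $T_1+T_2$ takes two clean asymptotic forms depending on which summand in the denominator of $\alpha$ dominates: $T_1+T_2 \asymp d^{1/3} w\,B_K^{2/3} K^{1/3}$ when $K \ge wV_K$, and $T_1+T_2 \asymp d^{1/3} w^{7/6} B_K^{2/3} K^{1/6} V_K^{1/6}$ when $K < wV_K$. The remaining task is to choose $w$ that balances this against $T_3$ and $T_4$ in each regime, which is precisely where the two-case split in the statement originates.

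In Case (a) $[K^2 \ge dV_K^3/B_K]$, I set $w = d^{1/3}(K/B_K)^{1/3}$. Plugging in, $wV_K \le K$ holds (it is equivalent to the case hypothesis), so $T_1+T_2 \asymp T_4 \asymp d^{2/3}B_K^{1/3}K^{2/3}$, while $T_3 \asymp d^{5/6}V_K^{1/2}B_K^{1/6}K^{1/3}$ is subdominant by the same inequality. In Case (b) $[K^2 < dV_K^3/B_K]$, I set $w = d^{2/5}(KV_K)^{1/5}/B_K^{2/5}$; then $wV_K > K$ holds, $T_1+T_2 \asymp T_3 \asymp d^{4/5}V_K^{2/5}B_K^{1/5}K^{2/5}$, and $T_4$ becomes subdominant. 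Summing the two regimes yields the claimed $\tilde O\bigl(d^{4/5}V_K^{2/5}B_K^{1/5}K^{2/5} + d^{2/3}B_K^{1/3}K^{2/3}\bigr)$ bound.

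The main subtlety, though not deep, is the self-consistency check: the regime condition $K \gtrless wV_K$ used to simplify $T_1+T_2$ must agree with both the case condition and the subsequent choice of $w$. A direct substitution shows that evaluating $wV_K/K$ at each case's optimal $w$ recovers exactly the corresponding case hypothesis, so no circularity arises. Beyond this bookkeeping, the proof is routine algebraic optimization of four competing monomials.
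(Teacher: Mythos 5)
Your proof is correct and is essentially the argument the paper intends: the paper states Corollary \ref{corollary w alpha opt} without an explicit proof, and your balancing of $T_1$ against $T_2$ via the prescribed $\alpha$, followed by the case-wise choice of $w$ to equate the surviving term with $T_4$ (resp.\ $T_3$), reproduces exactly the stated parameters and bound. Your self-consistency check that $wV_K \gtrless K$ at the chosen $w$ coincides with the case hypothesis $K^2 \gtrless dV_K^3/B_K$ is the one non-trivial verification, and it checks out.
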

\begin{remark}
We discuss the regret of Algo.\ref{alg:1} in Corollary \ref{corollary w alpha opt} in the following special cases. In the case where the \emph{total variance} is small, \emph{i.e.}, $V_K=\Tilde{O}(1)$,  assuming that $K^2> d$, our result becomes $\Tilde{O}(d^{2/3}B_K^{1/3}K^{2/3})$, better than all the previous results \cite{cheung2018hedging,zhao2020simple,wang2023revisiting,wei2021non}. In the worst case where $V_K=O(K)$, our result becomes $\Tilde{O}(d^{4/5}B_K^{1/5}K^{4/5})$.
\end{remark}

% \zhiyong{If we use BOB, we can get     \begin{align}
%         \text{Regret}(K) = \tilde O(d^{4/5}V_K^{2/5}B_K^{1/5}K^{2/5} + d^{2/3}B_K^{1/3}K^{2/3}+\sqrt{wK})
%     \end{align}}

\subsection{Unknown Per-round Variance, Unknown $V_K$ and $B_K$}
In Corollary \ref{corollary w alpha opt}, we need to know the \emph{total variance} $V_K$ and \emph{total variation budget} $B_K$ to select the optimal $w$ and $\alpha$. To deal with the more general case where $V_K$ and $B_K$ are unknown, we can employ the \emph{Bandits-over-Bandits} (BOB) mechanism proposed in \cite{cheung2019learning}. We name the Restarted $\text{SAVE}^+$ algorithm with BOB mechanism as “Restarted $\text{SAVE}^+$-BOB”. Due to the space limit, we put the detailed algorithm of Restarted $\text{SAVE}^+$-BOB to Algorithm \ref{alg:bob algo} in Appendix \ref{app:bob}, and we present the main idea of Restarted $\text{SAVE}^+$-BOB as follows.

We divide the $K$ rounds into $\lceil\frac{K}{H}\rceil$ blocks, with each block having $H$ rounds (except the last one may have less than $H$ rounds).
Within each block $i$, we use a fixed $(\alpha_i, w_i)$ pair to run the Restarted $\text{SAVE}^+$ algorithm. To adaptively learn the optimal $(\alpha, w)$ pair without the knowledge of $V_K$ and $B_K$, we employ an 
adversarial bandit algorithm (Exp3 in \cite{auer2002nonstochastic}) as the meta-learner to select $\alpha_i, w_i$ over time for $i\in\lceil\frac{K}{H}\rceil$ blocks. Specifically, in each block, the meta learner selects a $(\alpha, w)$ pair from the candidate pool to feed to Restarted $\text{SAVE}^+$, and the cumulative reward received by Restarted $\text{SAVE}^+$ within the block is fed to the meta-learner as the reward feedback to select a better pair for the next block.

We set the block length $H$ to be $\lceil d^{\frac{2}{5}}K^{\frac{2}{5}}\rceil$, and set the candidate pool of $(\alpha, w)$ pairs for the Exp3 algorithm as:
\begin{align}
    \mathcal{P}=\{(w,\alpha): w\in\mathcal{W}, \alpha\in\mathcal{J}\}\,,\label{bob pool}
\end{align}
where
\begin{align}
\mathcal{W}&=\{w_i=d^{\frac{1}{3}}2^{i-1}|i\in\lceil\frac{1}{3}\log_2 K\rceil+1\}\cup\{w_i=d^{\frac{2}{5}}2^{i-1}|i\in\lceil\frac{2}{5}\log_2 K\rceil+1\}\,,\label{w set}
\end{align}
and 
\begin{align}
    \mathcal{J}&=\{\alpha_i=d^{\frac{1}{3}}2^{-i+1}|i\in\lceil\frac{1}{3}\log_2 K\rceil+1\}\cup\{\alpha_i=d^{\frac{11}{30}}2^{-i+1}|i\in\lceil\frac{11}{30}\log_2 K\rceil+1\}\,.\label{alpha set}
\end{align}

% We refer the readers to Section 7 in \cite{cheung2019learning} for more algorithm details of the BOB framework.

\begin{figure*}[h]
    \centering
     \subfigure[$B_K=1$]{
    \includegraphics[scale=0.24]{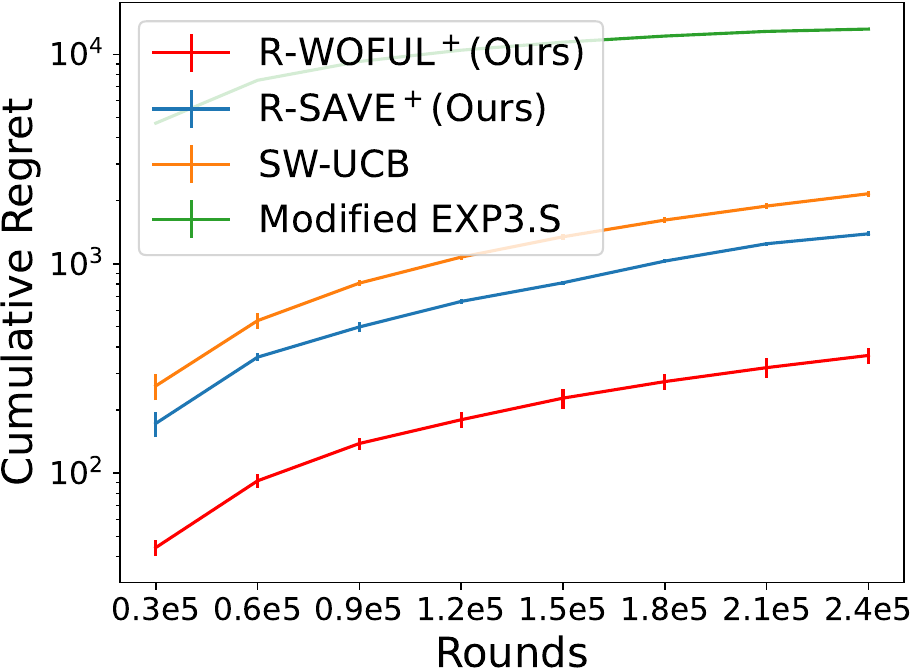}
    }
    \hfill
     \subfigure[$B_K=10$]{
    \includegraphics[scale=0.24]{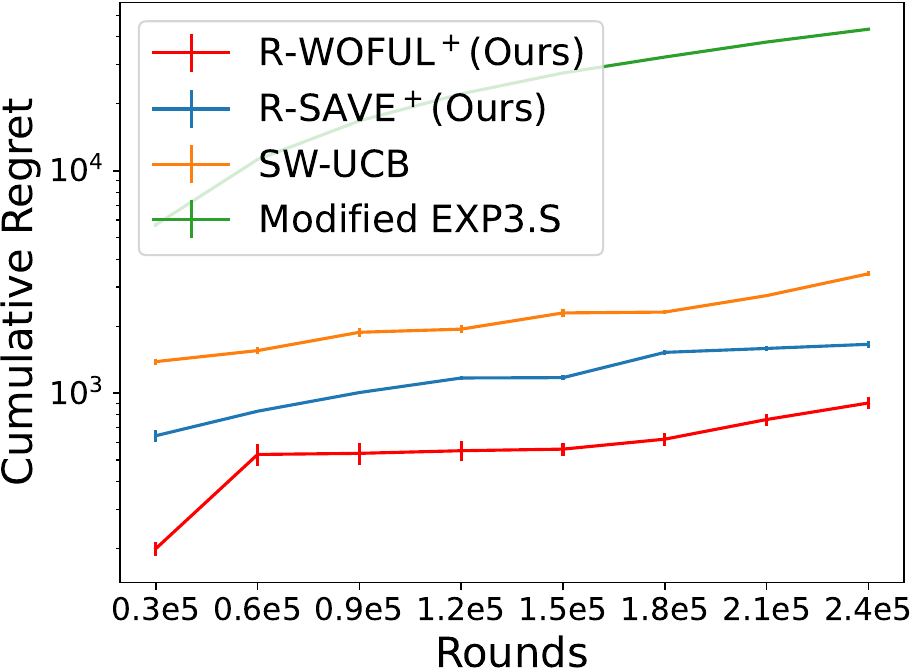}
    }
    \hfill
    \subfigure[$B_K=20$]{
    \includegraphics[scale=0.24]
    {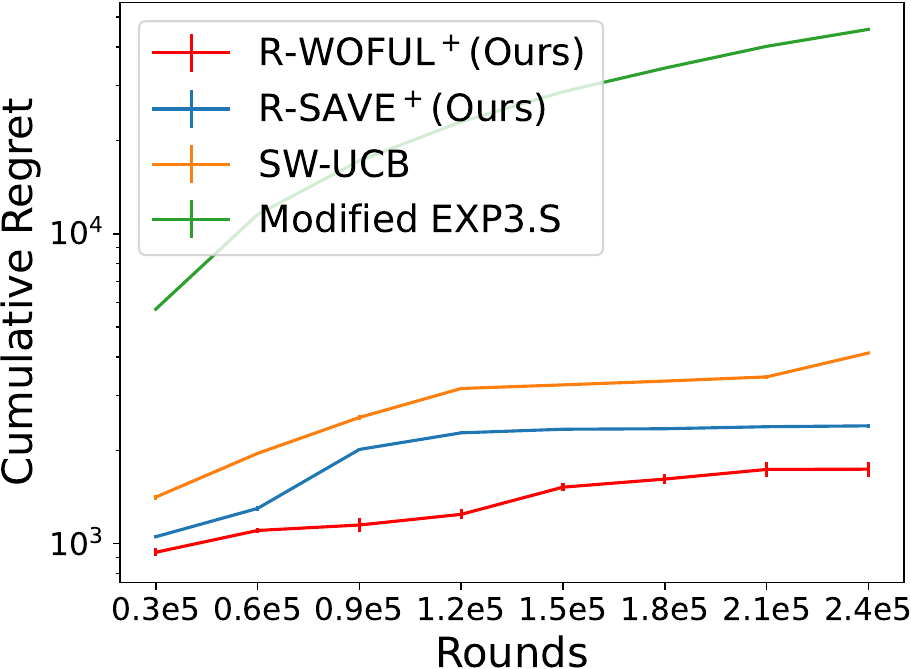}}
    \hfill
    \subfigure[$B_K=K^{1/3}$]{
    \includegraphics[scale=0.24]{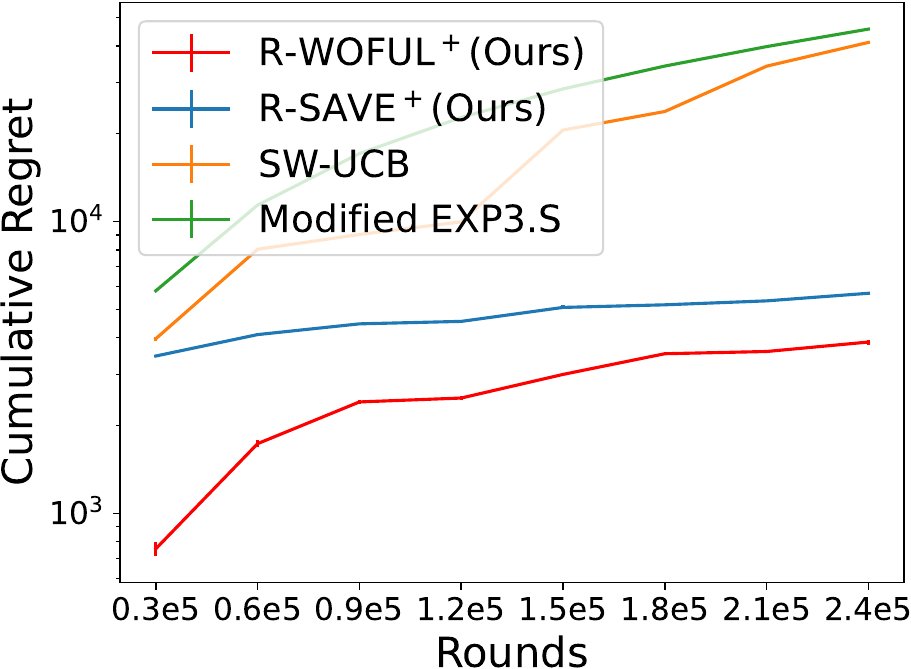}
    }
    \caption{The regret of Restarted-$\algbandit$, $\text{Restarted SAVE}^+$, SW-UCB and Modified EXP3.S under different total rounds.}
    \label{fig:1}
\end{figure*}
\begin{theorem}
    By using the BOB framework with Exp3 as the meta-algorithm and Restarted $\text{SAVE}^+$ as the base algorithm, with the candidate pool $\mathcal{P}$ for Exp3 specified as in Eq.(\ref{bob pool}), Eq.(\ref{w set}), Eq.(\ref{alpha set}), and $H=\lceil d^{\frac{2}{5}}K^{\frac{2}{5}}\rceil$, then the regret of  Restarted $\text{SAVE}^+$-BOB (Algo.\ref{alg:bob algo}) satisfies
    \begin{align}
        \text{Regret}(K) &= \tilde O(d^{4/5}V_K^{2/5}B_K^{1/5}K^{2/5} + d^{2/3}B_K^{1/3}K^{2/3}+d^{2/5}K^{7/10}).
    \end{align}
    \label{theorem bob}
\end{theorem}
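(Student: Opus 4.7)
The plan is the standard two-level analysis of the bandit-over-bandit reduction: decompose the dynamic regret of Restarted-$\text{SAVE}^+$-BOB into (i) the regret incurred by the base algorithm if it had been run on the entire horizon with a single, near-optimal pair $(\alpha^\dagger,w^\dagger)\in\mathcal{P}$ throughout, and (ii) the additional Exp3 meta-regret caused by the meta-learner not always playing that pair. Concretely, partition the $K$ rounds into $T_{\mathrm{meta}}=\lceil K/H\rceil$ blocks of length at most $H$, let $r_i$ be the cumulative reward of block $i$ under the Exp3-chosen pair $(\alpha_i,w_i)$, and let $r_i^\diamond$ be the corresponding cumulative reward under the fixed pair $(\alpha^\dagger,w^\dagger)$; writing $R_i^{\mathrm{opt}}$ for the dynamic optimum inside block $i$,
\begin{align}
\text{Regret}(K)\;\le\;\underbrace{\sum_i \bigl(R_i^{\mathrm{opt}}-r_i^\diamond\bigr)}_{\text{base}}\;+\;\underbrace{\sum_i \bigl(r_i^\diamond-r_i\bigr)}_{\text{meta}}.
\end{align}

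For the base term I would invoke Theorem~\ref{thm:regret1} after showing that $\mathcal{P}$ actually contains an adequate approximation of the continuous optimizer $(\alpha^*,w^*)$ from Corollary~\ref{corollary w alpha opt}. The set $\mathcal{W}$ in \eqref{w set} is the union of two $2$-geometric grids starting at $d^{1/3}$ and $d^{2/5}$, which are exactly the two scales of $w^*$ identified in the two regimes ($K^2\ge V_K^3 d/B_K$ versus otherwise) of Corollary~\ref{corollary w alpha opt}; similarly $\mathcal{J}$ in \eqref{alpha set} starts at $d^{1/3}$ and $d^{11/30}$, matching the two scales of $\alpha^*$. Hence, whichever regime the true $(V_K,B_K)$ falls into, one subgrid contains a pair $(\alpha^\dagger,w^\dagger)$ within a factor of $2$ of the optimizer. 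Because every term in the bound of Theorem~\ref{thm:regret1} is polynomial in $\alpha,w$, such a constant-factor perturbation preserves the rate, yielding a base contribution of $\tilde O(d^{4/5}V_K^{2/5}B_K^{1/5}K^{2/5}+d^{2/3}B_K^{1/3}K^{2/3})$. The extra restart that the BOB block structure forces at each block boundary adds only $\tilde O((K/H)\cdot d)$, which is absorbed by the $d^{2/3}B_K^{1/3}K^{2/3}$ term.

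For the meta term I view each block as a single round of an adversarial $|\mathcal{P}|$-armed bandit whose arm-reward is the cumulative block reward of Restarted-$\text{SAVE}^+$ run with the corresponding parameters; since per-round rewards are bounded, these block rewards lie in an interval of width $O(H)$. With $|\mathcal{P}|=\tilde O(1)$ (polylogarithmic in $K$) and $T_{\mathrm{meta}}=\lceil K/H\rceil$, the classical Exp3 guarantee, after rescaling, gives
\begin{align}
\sum_i\bigl(r_i^\diamond-r_i\bigr)\;=\;\tilde O\!\Bigl(H\sqrt{|\mathcal{P}|\cdot T_{\mathrm{meta}}\log|\mathcal{P}|}\Bigr)\;=\;\tilde O\bigl(\sqrt{HK}\bigr).
\end{align}
Substituting $H=\lceil d^{2/5}K^{2/5}\rceil$ produces $\tilde O(d^{1/5}K^{7/10})$, which matches the third term in the claim (the gap between $d^{1/5}$ here and $d^{2/5}$ in the displayed bound appears to be a typographical slip already visible between the theorem and the introduction).

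The main obstacle is the pool-design step in the base analysis: one must verify that the two-scale construction of $\mathcal{W}$ and $\mathcal{J}$ covers \emph{both} regimes of Corollary~\ref{corollary w alpha opt} uniformly over the admissible range $V_K,B_K\in[\Omega(1),O(K)]$, i.e., that for each instance the relevant subgrid (a) spans the correct scale of $w^*,\alpha^*$ in its range and (b) is fine enough (factor-$2$ spacing) that the nearest grid point is within a constant multiplicative factor of the optimizer. A smaller technical nuisance is that Exp3 observes only the aggregate block reward while the unchosen per-block rewards are counterfactual; this is handled by the standard importance-weighted estimator inside Exp3 and does not affect the rate. Combining the two contributions then delivers the claimed bound.
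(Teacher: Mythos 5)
Your proposal follows essentially the same route as the paper's proof: the identical base/meta decomposition, an application of Theorem~\ref{thm:regret1} together with Corollary~\ref{corollary w alpha opt} and a factor-$2$ grid approximation $(w^+,\alpha^+)$ of $(w^*,\alpha^*)$ for the base term, and the standard Exp3 guarantee over $\lceil K/H\rceil$ blocks with rewards rescaled by $O(H)$ for the meta term. Your observation that the $d^{2/5}$ exponent on the $K^{7/10}$ term is a typographical slip is consistent with the paper's own appendix, which concludes with $d^{1/5}K^{7/10}$.
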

\begin{proof}
    See Appendix \ref{app:thirdalg} for the full proof.
\end{proof}

\begin{remark}
We discuss the regret of Algo.\ref{alg:bob algo} in Corollary \ref{corollary w alpha opt} in the following special cases.  In the case where the \emph{total variance} is small, \emph{i.e.}, $V_K=\Tilde{O}(1)$,  assuming that $K^2> d$, our result becomes $\Tilde{O}(d^{2/3}B_K^{1/3}K^{2/3}+d^{1/5}K^{7/10})$, when $d^{14}B_K^{10}>K$, it becomes $\Tilde{O}(d^{2/3}B_K^{1/3}K^{2/3})$, better than all the previous results \cite{cheung2018hedging,zhao2020simple,wang2023revisiting,wei2021non}.
In the worst case where $V_K=O(K)$, our result becomes $\Tilde{O}(d^{4/5}B_K^{1/5}K^{4/5})$.
 
\end{remark}

% \begin{figure*}[h]
%     \centering
%      \subfigure[$B_K=1$]{
%     \includegraphics[scale=0.262]{icml2024/2Regret1.pdf}
%     }
%     \hfill
%      \subfigure[$B_K=10$]{
%     \includegraphics[scale=0.262]{icml2024/2Regret10.pdf}
%     }
%     \hfill
%     \subfigure[$B_K=20$]{
%     \includegraphics[scale=0.262]
%     {icml2024/2Regret20.pdf}}
%     \hfill
%     \subfigure[$B_K=K^{1/3}$]{
%     \includegraphics[scale=0.262]{icml2024/2RegretT3.pdf}
%     }
%     \caption{The regret of Restarted-$\algbandit$, $\text{Restarted SAVE}^+$, SW-UCB and Modified EXP3.S under different total rounds.}
%     \label{fig:1}
% \end{figure*}

\section{Experiments}
\label{sec:experiments}
% \begin{figure}[htp]
%     \centering
%     \includegraphics[scale=0.4]{icml2024/Regret.pdf}
%     \caption{Comparison of Restarted-$\algbandit$ with SW-UCB and Modified EXP3.S when $B_K=1$.} 
%     \label{fig:1}
% \end{figure}
% \begin{figure}[htp]
%     \centering
%     \includegraphics[scale=0.4]{icml2024/RegretT3.pdf}
%     \caption{Comparison of Restarted-$\algbandit$ with SW-UCB and Modified EXP3.S when $B_K=K^{1/3}$.} 
%     \label{fig:2}
% \end{figure}
% \begin{figure}[htp]
%     \centering
%     \includegraphics[scale=0.4]{icml2024/Regret10.pdf}
%     \caption{Comparison of Restarted-$\algbandit$ with SW-UCB and Modified EXP3.S when $B_K=10$.} 
%     \label{fig:3}
% \end{figure}
% \begin{figure}[htp]
%     \centering
%     \includegraphics[scale=0.4]{icml2024/Regret20.pdf}
%     \caption{Comparison of Restarted-$\algbandit$ with SW-UCB and Modified EXP3.S when $B_K=20$.} 
%     \label{fig:4}
% \end{figure}
To validate the effectiveness of our methods, we conduct a series of experiments on the synthetic data.

\noindent\textbf{Problem Setting}
Following the experimental set up in \cite{cheung2019learning}, we consider the 2-armed bandits setting, where the action set $\cD_k = \{(1,0),(0,1)\}$, and 
\begin{align}
\btheta_{k} = \begin{pmatrix}
    0.5 + \frac{3}{10}\sin(5B_{K}\pi k/K)\\
    0.5 + \frac{3}{10}\sin(\pi + 5B_{K}\pi k/K)
\end{pmatrix}.\notag
\end{align}
It is easy to see that the total variation budget can be bounded as $B_K$. At each round $k$, the $\epsilon_{k}$ satisfies the following distribution:
\begin{align}
    \epsilon_k \sim \text{Bernoulli}(0.5/k)-0.5/k.\notag
\end{align}
We can verify that under such a distribution for $\epsilon_k$, the variance of the reward distribution at $k$-th round is $(1-0.5/k)\cdot 0.5/k$, and the total variance $V_K \sim \log K$. 
% The results are shown in Figure.\ref{fig:1}, which are consistent with our theoretical findings. And it's obvious that our algorithms significantly outperform SW-UCB and Modified EXP3.S. The Restarted-$\algbandit$ performs best because it knows the variance and can make more informative decision , the performance of $\text{Restarted SAVE}^+$ is slightly worse than Restarted-$\algbandit$ but still better than the baselines, especially when $B_{K}=K^{1/3}$. 

\noindent\textbf{Baseline algorithms}
We compare the proposed Restarted-$\algbandit$ and $\text{Restarted SAVE}^+$ with SW-UCB \cite{cheung2019learning} and Modified EXP3.S \cite{besbes2014optimal}. For Restarted-$\algbandit$, we set $\lambda=1$, $\hat\beta_{k}=10$, $w=1000$, and we grid search the variance parameters $\alpha$ and $\gamma$, both among values [1, 1.5, 2, 2.5, 3]. Finally we set $\alpha=1$, and $\gamma=2$. For $\text{Restarted SAVE}^+$ we set $w=1000$, $\hat\beta_{k,\ell} = 2^{-\ell+1}$, and grid search $L$ from 1 to 10 with stepsize of 1 and finally choose $L=6$. For SW-UCB, we set $\lambda=1$, $w=1000$, $\beta_{k}=10$. The Modified EXP3.S requires two parameters $\Bar{\alpha}$ and $\Bar{\gamma}$, and we set $\Bar{\gamma}=0.01$ and $\Bar{\alpha}=\frac{1}{K}$.

To test the algorithms' performance under different total time horizons, we let $K$ vary from $3 \times 10^4$ to $2.4 \times 10^5$, with a stepsize of $3 \times 10^4$, and plot the cumulative regret $\text{Regret}(K)$ for these different total time step $K$. We set $B_K=1, 10, 20, \text{and } K^{1/3}$ to observe their performance in different levels of $B_{K}$.

\noindent\textbf{Result}
We plot the results in Figure.\ref{fig:1}, where all the empirical results are averaged over ten independent trials and the error bar is the standard error divided by $\sqrt{10}$. The results are consistent with our theoretical findings. It is evident that our algorithms significantly outperform both SW-UCB and Modified EXP3.S. Among our proposed algorithms, Restarted-$\algbandit$ achieves the best performance. This can be attributed to the fact that it knows the variance and can make more informed decisions. Although $\text{Restarted SAVE}^+$ performed slightly worse than Restarted-$\algbandit$, it still outperforms the baseline algorithms, particularly when $B_{K}=K^{1/3}$. These results highlight the superiority of our methods.

\section{Conclusion and Future Work}
We study non-stationary stochastic linear bandits in this work. We propose Restarted-$\algbandit$ and Restarted SAVE$^+$, two algorithms that utilize the dynamic variance information of the dynamic reward distribution. We show that both of our algorithms are able to achieve better dynamic regret compared with best existing results \citep{wei2021non} under several parameter regimes, \emph{e.g.}, when the total variance $V_K$ is small. Experiment results backup our theoretical claim. It is worth noting there still exist gaps between our current obtained regret and the lower bound \citep{wei2016tracking}, and to fix such a gap leaves as our future work.

%%%%%%%%%%%%%%%%%%%%%%%%%%%%%%%%%%%%%%%%%%%%%%%%%%%%%%%%%%%%%%%%%%%%%%%%%%%%%%%
%%%%%%%%%%%%%%%%%%%%%%%%%%%%%%%%%%%%%%%%%%%%%%%%%%%%%%%%%%%%%%%%%%%%%%%%%%%%%%%
% APPENDIX
%%%%%%%%%%%%%%%%%%%%%%%%%%%%%%%%%%%%%%%%%%%%%%%%%%%%%%%%%%%%%%%%%%%%%%%%%%%%%%%
%%%%%%%%%%%%%%%%%%%%%%%%%%%%%%%%%%%%%%%%%%%%%%%%%%%%%%%%%%%%%%%%%%%%%%%%%%%%%%%

\appendix

\section{$\text{Restarted SAVE}^+$-BOB}\label{app:bob}
In this section, we provide the details of our proposed $\text{Restarted SAVE}^+$-BOB algorithm. The $\text{Restarted SAVE}^+$-BOB algorithm is summarized in Algo.\ref{alg:bob algo}. In this algorithm, we divide the total time rounds $K$ into $\lceil\frac{K}{H}\rceil$ blocks, with each block having $H$ rounds (except the last block). The algorithm also labels all the $\left|\mathcal{P}\right|=\big(\lceil\frac{1}{3}\log_2 K\rceil+\lceil\frac{2}{5}\log_2 K\rceil+2\big)\cdot\big(\lceil\frac{1}{3}\log_2 K\rceil+\lceil\frac{11}{30}\log_2 K\rceil+2\big)$ candidate pairs of parameters in $\mathcal{P}, $\emph{i.e.}, $\mathcal{P}=\{(w_i,\alpha_i)\}_{i=1}^{\left|\mathcal{P}\right|}$. The algorithm initializes $\{s_{j, 1}\}^{\left|\mathcal{P}\right|}_{j=1}$ to be $s_{j,1}=1,\quad\forall j=0,1,\ldots,\left|\mathcal{P}\right|$, which means that at the beginning, the algorithm selects a pair from $\mathcal{P}$ uniformly at random. At the beginning of each block $i\in[\lceil K/H\rceil]$, the meta-learner (Exp3) calculates the distribution $(p_{j, i})^{\left|\mathcal{P}\right|}_{j=1}$ over the candidate set $\mathcal{P}$ by 
\begin{align}
    p_{j, i}=(1-\gamma)\frac{s_{j,i}}{\sum_{u=1}^{\left|\mathcal{P}\right|}s_{u,i}}+\frac{\gamma}{\left|\mathcal{P}\right|+1},\quad\forall j=1,\ldots,\left|\mathcal{P}\right|\,,
\end{align}
where $\gamma$ is defined as 
\begin{align}
    \gamma=\min\left\{1,\sqrt{\frac{(\left|\mathcal{P}\right|+1)\ln(\left|\mathcal{P}\right|+1)}{(e-1)\lceil K/H\rceil}}\right\}\,.
\end{align}
Then, the meta-learner draws a $j_i$ from the distribution $(p_{j, i})^{\left|\mathcal{P}\right|}_{j=1}$, and sets the pair of parameters in block $i$ to be $(w_{j_i},\alpha_{j_i})$, and runs the base algorithm Algo.\ref{alg:1} from scratch in this block with $(w_{j_i},\alpha_{j_i})$, then feeds the cumulative reward in the block $\sum_{k=(i-1)H+1}^{\min\{i\cdot H, K\}}r_k$ to the meta-learner. The meta-learner rescales $\sum_{k=(i-1)H+1}^{\min\{i\cdot H, K\}}r_k$ to $\frac{\sum_{k=(i-1)H+1}^{\min\{i\cdot H, K\}}r_k}{ H+R\sqrt{\frac{H}{2} \log\big(K(\frac{K}{H}+1)\big)} + \frac{2}{3} \cdot R \log\big(K(\frac{K}{H}+1)\big)}$ to make it in the range $[0,1]$ with high probability (supported by Lemma \ref{lemma:bob}). The meta-learner updates the parameter $s_{j_i,i+1}$ to be
\begin{align}
    s_{j_i,i+1}=s_{j_i,i}\cdot\exp\left(\frac{\gamma}{(\left|\mathcal{P}\right|+1)p_{j_i,i}}\left(\frac{1}{2}+\frac{\sum_{k=(i-1)H+1}^{\min\{i\cdot H, K\}}r_k}{ H+R\sqrt{\frac{H}{2} \log\big(K(\frac{K}{H}+1)\big)} + \frac{2}{3} \cdot R \log\big(K(\frac{K}{H}+1)\big)}\right)\right)\,,
\end{align}
and keep others unchanged, \emph{i.e.}, $s_{u,i+1}=s_{u,i}, ~\forall u\neq j_i$. After that, the algorithm will go to the next block, and repeat the same process in block $i+1$.
\begin{algorithm*}[t!] 
    \caption{$\text{Restarted SAVE}^+$-BOB} \label{alg:bob algo}
    \begin{algorithmic}[1]
        \REQUIRE total time rounds $K$; problem dimension $d$; noise upper bound $R$; $\alpha > 0$; the upper bound on the $\ell_2$-norm of $\ab$ in $\cD_k (k\ge 1)$, i.e., $A$; the upper bound on the $\ell_2$-norm of $\btheta_k$ $(k\ge 1)$, i.e., $\pnorm$.
    \STATE Initialize $H=\lceil d^{\frac{2}{5}}K^{\frac{2}{5}}\rceil$; $\mathcal{P}$ as defined in Eq.(\ref{bob pool}), and index the $\left|\mathcal{P}\right|=\big(\lceil\frac{1}{3}\log_2 K\rceil+\lceil\frac{2}{5}\log_2 K\rceil+2\big)\cdot\big(\lceil\frac{1}{3}\log_2 K\rceil+\lceil\frac{11}{30}\log_2 K\rceil+2\big)$ items in $\mathcal{P}$, \emph{i.e.}, $\mathcal{P}=\{(w_i,\alpha_i)\}_{i=1}^{\left|\mathcal{P}\right|}$; $\gamma=\min\left\{1,\sqrt{\frac{(\left|\mathcal{P}\right|+1)\ln(\left|\mathcal{P}\right|+1)}{(e-1)\lceil K/H\rceil}}\right\}$; $\{s_{j, 1}\}^{\left|\mathcal{P}\right|}_{j=1}$ is set to $s_{j,1}=1,\quad\forall j=0,1,\ldots,\left|\mathcal{P}\right|$.
    \FOR{$i=1,2,\ldots,\lceil K/H\rceil$}
    \STATE Calculate the distribution $(p_{j, i})^{\left|\mathcal{P}\right|}_{j=1}$ by $p_{j, i}=(1-\gamma)\frac{s_{j,i}}{\sum_{u=1}^{\left|\mathcal{P}\right|}s_{u,i}}+\frac{\gamma}{\left|\mathcal{P}\right|+1},\quad\forall j=1,\ldots,\left|\mathcal{P}\right|$. 
\STATE Set $j_i\leftarrow j$ with probability $p_{j,i}$, and $(w_i, \alpha_i)\leftarrow (w_{i_i}, \alpha_{j_i})$.
\STATE Run Algo.\ref{alg:1} from scratch in block $i$ (\emph{i.e.}, in rounds $k=(i-1)H+1,\ldots,\min\{i\cdot H, K\}$) with $(w, \alpha)=(w_i, \alpha_i)$.
\STATE Update $s_{j_i,i+1}=s_{j_i,i}\cdot\exp\left(\frac{\gamma}{(\left|\mathcal{P}\right|+1)p_{j_i,i}}\left(\frac{1}{2}+\frac{\sum_{k=(i-1)H+1}^{\min\{i\cdot H, K\}}r_k}{ H+R\sqrt{\frac{H}{2} \log\big(K(\frac{K}{H}+1)\big)} + \frac{2}{3} \cdot R \log\big(K(\frac{K}{H}+1)\big)}\right)\right)$, and keep all the others unchanged, \emph{i.e.}, $s_{u,i+1}=s_{u,i}, ~\forall u\neq j_i$.
    \ENDFOR
    \end{algorithmic}
\end{algorithm*}

\section{Proof of Lemma \ref{lemma:key}}\label{app:keylemma}
For simplicity, we denote 
\begin{align}
    \hat\beta& := 12\sqrt{d\log(1+\frac{wA^2}{\alpha^2d\lambda})\log(32(\log(\frac{\gamma^2}{\alpha}+1)\frac{w^2}{\delta})}  + 30\log(32(\log(\frac{\gamma^2}{\alpha})+1)\frac{w^2}{\delta})\frac{R}{\gamma^2}+ \sqrt{\lambda}\pnorm.
\label{eq:defbanditbeta1}
\end{align}
It is obvious that $\hat\beta\geq\hat\beta_k$ for all $k\in[K]$. We call the restart time rounds \emph{grids} and denote them by $g_1, g_2, \ldots g_{\lceil \frac{K}{w}\rceil-1}$, where $g_i\%w=0$ for all $i\in[\lceil \frac{K}{w}\rceil-1]$. Let $i_k$ be the grid index of time round $k$, \emph{i.e.}, $g_{i_k}\leq k<g_{i_k+1}$.  

For ease of exposition and without loss of generality, we prove the lemma for $k\in[1,w]$. We calculate the estimation difference $|\ba^\top(\hat\btheta_k-\btheta_k)|$ for any $\ab\in\RR^d$, $\|\ab\|_2\leq A$, $k\in[1,w]$. 
By definition:
\begin{equation}
\hat\btheta_k=\hat\bSigma_k^{-1}\bb_k=\hat\bSigma_k^{-1} (\sum_{t=1}^{k-1}\frac{r_t\ab_t}{\bar\sigma_t^2})=\hat\bSigma_k^{-1} (\sum_{t=1}^{k-1}\frac{\ab_t\ab_t^\top\btheta_t}{\bar\sigma_t^2}+\sum_{t=1}^{k-1}\frac{\ab_t\epsilon_t}{\bar\sigma_t^2})\,, 
\end{equation}
where $\hat\Sigma_k=\lambda\bI+\sum_{t=g_{i_k}}^{k-1}\frac{\ab_t\ab_t^\top}{\bar\sigma_t^2}$.

Then we have
\begin{equation}
    \hat\btheta_k-\btheta_k=\hat\bSigma_k^{-1} (\sum_{t=1}^{k-1}\frac{\ab_t\ab_t^\top}{\bar\sigma_t^2}(\btheta_t-\btheta_k)+\sum_{t=1}^{k-1}\frac{\ab_t\epsilon_t}{\bar\sigma_t^2})-\lambda \hat\bSigma_k^{-1}\btheta_k\,.
\end{equation}

Therefore
\begin{equation}
    |\ab^\top(\hat\btheta_k-\btheta_k)|\leq\left|\ab^\top\hat\bSigma_k^{-1}\sum_{t=1}^{k-1}\frac{\ab_t\ab_t^\top}{\bar\sigma_t^2}(\btheta_t-\btheta_k)\right|+\|\ab\|_{\hat\bSigma_k^{-1}}\|\sum_{t=1}^{k-1}\frac{\ab_t\epsilon_t}{\bar\sigma_t^2}\|_{\hat\bSigma_k^{-1}}+\lambda \|\ab\|_{\hat\bSigma_k^{-1}}\|\hat\bSigma_k^{-\frac{1}{2}}\btheta_k\|_2\,,\label{l2 bound difference 1 1}
\end{equation}
where we use the Cauchy-Schwarz inequality.

% \noindent\textbf{some new thgouths} we have
% \begin{align}
%     \left|\ab^\top\hat\bSigma_k^{-1}\sum_{t=\max\{1,k-w\}}^k\frac{\ab_t\ab_t^\top}{\bar\sigma_t^2}(\btheta_t-\btheta_k)\right|
%          & =     \left|\ab^\top\hat\bSigma_k^{-1/2}\cdot\sum_{t=\max\{1,k-w\}}^k\hat\bSigma_k^{-1/2}\frac{\ab_t\ab_t^\top}{\bar\sigma_t^2}(\btheta_t-\btheta_k)\right|\notag \\
%          & \leq \|\ab^\top\hat\bSigma_k^{-1/2}\|_2\cdot\|\sum_{t=\max\{1,k-w\}}^k\hat\bSigma_k^{-1/2}\frac{\ab_t\ab_t^\top}{\bar\sigma_t^2}(\btheta_t-\btheta_k)\|\notag \\
%          & \leq A\cdot \|\ab^\top\hat\bSigma_k^{-1/2}\|_2\cdot \sum_{t=\max\{1,k-w\}}^k\|\hat\bSigma_k^{-1/2}\frac{\ab_t}{\bar\sigma_t^2}\|_2\cdot \|\btheta_t-\btheta_k\|_2\notag \\
%          & \leq A\|\ab^\top\hat\bSigma_k^{-1/2}\|_2\sum_{t=\max\{1,k-w\}}^k\|\hat\bSigma_t^{-1/2}\ab_t\|_2/{\bar\sigma_t^2}\cdot \|\btheta_t-\btheta_k\|_2\notag \\
%          & \leq A\|\ab^\top\hat\bSigma_k^{-1/2}\|_2\sum_{t=\max\{1,k-w\}}^k\|\hat\bSigma_t^{-1/2}\ab_t\|_2/(\gamma^2\|\hat\bSigma_t^{-1/2}\ab_t\|_2)\cdot \|\btheta_t-\btheta_k\|_2\notag \\
%          & = A/\gamma^2\cdot \|\ab\|_{\hat\bSigma_k^{-1}}\sum_{t=\max\{1,k-w\}}^k\|\btheta_t-\btheta_k\|_2
% \end{align}

For the first term, we have that for any $k\in[1,w]$
\begin{align}
    \left|\ab^\top\hat\bSigma_k^{-1}\sum_{t=1}^k\frac{\ab_t\ab_t^\top}{\bar\sigma_t^2}(\btheta_t-\btheta_k)\right|
         & \leq \sum_{t=1}^{k-1} |   \ab^{\top} \hat\bSigma_k^{-1} \frac{\ab_t}{\bar\sigma_t} | \cdot | \frac{\ab_t}{\bar\sigma_t}^\top (\sum_{s = t}^{k-1} (\btheta_{s} - \btheta_{s+1})) | \tag{triangle inequality } \\
     & \leq \sum_{t=1}^{k-1}  |  \ab^{\top} \hat\bSigma_k^{-1} \frac{\ab_t}{\bar\sigma_t} | \cdot \| \frac{\ab_t}{\bar\sigma_t}\|_2 \cdot \| \sum_{s = t}^{k-1} (\btheta_{s} - \btheta_{s+1})\|_2 \tag{Cauchy-Schwarz}
     \\
     & \leq \frac{A}{\alpha}\sum_{t=1}^{k-1}   |  \ab^{\top} \hat\bSigma_k^{-1} \frac{\ab_t}{\bar\sigma_t} | \cdot   \| \sum_{s = t}^{k-1} (\btheta_{s} - \btheta_{s+1})\|_2 \tag{$\| \ab_t\| \leq A$, $\bar\sigma_t\geq\alpha$} \\
    %  \end{align*} 
    %  \begin{align*}
     & \leq \frac{A}{\alpha}\sum_{s =1}^{k-1} \sum_{t = 1}^{s} | \ab^{\top}\hat\bSigma_k^{-1} \frac{\ab_t}{\bar\sigma_t} | \cdot \| \btheta_{s} - \btheta_{s+1}\|_2
     \tag{$\sum_{t =1}^{k-1} \sum_{s=t}^{k-1} =  \sum_{s =1}^{k-1} \sum_{t =1}^{s} $} \\
     & \leq \frac{A}{\alpha}\sum_{s = 1}^{k-1} \sqrt{ \bigg[ \sum_{t =1}^{s} \ab^\top \hat\bSigma_k^{-1} \ab \bigg]  \cdot \biggl [ \sum_{t =1}^{s} \frac{\ab_t}{\bar\sigma_t}^\top \hat\bSigma_k^{-1}  \frac{\ab_t}{\bar\sigma_t}\bigg] }
     \cdot \norm{ \btheta_{s} - \btheta_{s+1}}_2
     \tag{Cauchy-Schwarz}
     \\ & \leq \frac{A}{\alpha}\sum_{s =1}^{k-1} \sqrt{ \bigg[ \sum_{t =1}^{s} \ab^\top \hat\bSigma_k^{-1}\ab \bigg] \cdot d }
     \cdot \norm{ \btheta_{s} - \btheta_{s+1}}_2
     \tag{$(\star)$} \\
     & \leq \frac{A\|\ab\|_2}{\alpha}\sqrt{d} \sum_{s =1}^{k-1} \sqrt{ \frac{\sum_{t =1}^{k-1} 1 }{ \lambda }} \cdot \norm{ \btheta_{s} - \btheta_{s+1}}_2  \tag{$\lambda_{\max}(\hat\bSigma_k^{-1}) \leq \frac{1}{\lambda}$} \\
     & \leq \frac{A^2}{\alpha}\sqrt{\frac{d w}{\lambda }} \sum_{s =1}^{k-1} \norm{ \btheta_{s} - \btheta_{s+1}}_2\,,
\end{align}
where the inequality $(\star)$ follows from the fact that $\sum_{t =1}^{s} \frac{\ab_t}{\bar\sigma_t}^\top \hat\bSigma_k^{-1}  \frac{\ab_t}{\bar\sigma_t}\leq d$ that can be proved as follows. We have $\sum_{t =1}^{k-1} \frac{\ab_t}{\bar\sigma_t}^\top \hat\bSigma_k^{-1}  \frac{\ab_t}{\bar\sigma_t}= \sum_{t =1}^{k-1} \text{tr}\left( \frac{\ab_t}{\bar\sigma_t}^\top \hat\bSigma_k^{-1}  \frac{\ab_t}{\bar\sigma_t}\right) = \text{tr}\left( \hat\bSigma_k^{-1}\sum_{t =1}^{k-1} \frac{\ab_t}{\bar\sigma_t}   \frac{\ab_t}{\bar\sigma_t}^\top \right)$. Given the eigenvalue decomposition $\sum_{t =1}^{k-1}  \frac{\ab_t}{\bar\sigma_t}   \frac{\ab_t}{\bar\sigma_t}^\top = \text{diag}(\lambda_1, \ldots, \lambda_d)^\top$, we have $\hat\bSigma_k = \text{diag}(\lambda_1 + \lambda, \ldots, \lambda_d + \lambda)^\top$, and $\text{tr}\left( \hat\bSigma_k^{-1}\sum_{t =1}^{k-1} \frac{\ab_t}{\bar\sigma_t}   \frac{\ab_t}{\bar\sigma_t}^\top \right) = \sum_{i=1}^d \frac{\lambda_j}{\lambda_j + \lambda } \leq d$.
% \begin{equation}
%     \|\hat\bSigma_k^{-1} \sum_{t=\max\{1,k-w\}}^k\frac{\ba_t\ba_t^\top}{\bar\sigma_t^2}(\btheta_t-\btheta_k)\|_2\leq \sqrt{\frac{dw}{\lambda}}\sum_{t=\max\{1,k-w\}}^k \|\btheta_t-\btheta_{t+1}\|_2\,.
% \end{equation}

For the second term, by the assumption on $\epsilon_k$, we know that
\begin{align}
    &|\epsilon_k/\bar\sigma_k| \leq R/\alpha,\notag\\
    &|\epsilon_k/\bar\sigma_k|\cdot\min\{1,\|\ab_k/\bar\sigma_k\|_{\hat\bSigma_k^{-1}}\} \leq R\|\ab_k\|_{\hat\bSigma_k^{-1}}/\bar\sigma_k^2 \leq R/\gamma^2,\notag \\
    &\EE[\epsilon_k|\ab_{1:k}, \epsilon_{1:k-1}] = 0,\ \EE [(\epsilon_k/\bar\sigma_k)^2|\ab_{1:k}, \epsilon_{1:k-1}] \leq 1,\ \|\ab_k/\bar\sigma_k\|_2 \leq A/\alpha,\notag
\end{align}
Therefore, setting $\cG_k = \sigma(\ab_{1:k}, \epsilon_{1:k-1})$, 
and using that $\sigma_k$ is $\cG_k$-measurable, applying
Theorem \ref{lemma:concentration_variance} to $(\bx_k,\eta_k)=(\ba_k/\bar\sigma_k,\epsilon_k/\bar\sigma_k)$ with $\epsilon = R/\gamma^2$ , we get that
with probability at least $1-\delta$, for all $k \in[1,w]$, 
\begin{equation}
    \|\sum_{t=1}^{k-1}\frac{\ab_t\epsilon_t}{\bar\sigma_t^2}\|_{\hat\bSigma_k^{-1}}\leq 12\sqrt{d\log(1+\frac{(k\%w)A^2}{\alpha^2d\lambda})\log(32(\log(\frac{\gamma^2}{\alpha}+1)\frac{(k\%w)^2}{\delta})}+ 30\log(32(\log(\frac{\gamma^2}{\alpha})+1)\frac{(k\%w)^2}{\delta})\frac{R}{\gamma^2}.
\end{equation}
For the last term
\begin{equation}
\lambda \|\ab\|_{\hat\bSigma_k^{-1}}\|\hat\bSigma_k^{-\frac{1}{2}}\btheta_k\|_2\leq\lambda \|\ab\|_{\hat\bSigma_k^{-1}}\|\hat\bSigma_k^{-\frac{1}{2}}\|_2\|\btheta_k\|_2\leq \lambda \|\ab\|_{\hat\bSigma_k^{-1}}\frac{1}{\sqrt{\lambda_{\text{min}}(\hat\bSigma_k)}}\|\btheta_k\|_2\leq\sqrt{\lambda}B \|\ab\|_{\hat\bSigma_k^{-1}}\,,
\end{equation}
where we use the fact that $\lambda_{\text{min}}(\hat\bSigma_k)\geq\lambda$.

Therefore, with probabilty at least $1-\delta$, we have
\begin{align}
    |\ab^\top(\hat\btheta_k-\btheta_k)|&\leq \frac{A^2}{\alpha}\sqrt{\frac{dw}{\lambda}}\sum_{t=1}^{k-1} \|\btheta_t-\btheta_{t+1}\|_2\notag\\
&\quad+\|\ab\|_{\hat\bSigma_k^{-1}}\bigg(12\sqrt{d\log(1+\frac{(k\%w)A^2}{\alpha^2d\lambda})\log(32(\log(\frac{\gamma^2}{\alpha}+1)\frac{(k\%w)^2}{\delta})} 
    \notag \\
    &\quad+ 30\log(32(\log(\frac{\gamma^2}{\alpha})+1)\frac{(k\%w)^2}{\delta})\frac{R}{\gamma^2} + \sqrt{\lambda}\pnorm\bigg)\notag\\
    % 12\sqrt{d\log(1+\frac{wA^2}{\alpha^2d\lambda})\log(32(\log(\frac{\gamma^2}{\alpha}+1)\frac{w^2}{\delta})}+ 30\log(32(\log(\frac{\gamma^2}{\alpha})+1)\frac{w^2}{\delta})\frac{R}{\gamma^2}+\sqrt{\lambda}B\bigg)\notag\\
    &=\frac{A^2}{\alpha}\sqrt{\frac{dw}{\lambda}}\sum_{t=1}^{k-1} \|\btheta_t-\btheta_{t+1}\|_2+\hat\beta_k\|\ab\|_{\hat\bSigma_k^{-1}}\,,
\end{align}
where $\hat\beta_k$ is defined in Eq.(\ref{eq:defbanditbeta}).

\section{Proof for Theorem \ref{thm: regret for algo1 final}}\label{app:firstalg}

For simplicity of analysis, we only analyze the regret over the first grid, \emph{i.e.}, we try to analyze $\text{Regret}(\tilde K)$ for $\tilde K \in[1,w]$. Denote $\event_{1}$ as the event when Lemma \ref{lemma:key} holds. 
Therefore, under event $\event_{1}$, for any $\tilde K \in[1,w]$, the regret can be bounded by
\begin{align}
    \text{Regret}(\tilde K) &= \sum_{k=1}^{\tilde K}\big[\la \ab_k^*-\ab_k, \btheta_k\ra\big]\notag\\
    &=\sum_{k=1}^{\tilde K}\big[\la \ab_k^*, \btheta_k-\hat\btheta_k\ra+(\la\ab_k^*,\hat\btheta_k\ra+\hat\beta_k\|\ab_k^*\|_{\hat\bSigma_k^{-1}})-(\la\ab_k,\hat\btheta_k\ra+\hat\beta_k\|\ab_k\|_{\hat\bSigma_k^{-1}})+\la \ab_k,\hat\btheta_k-\btheta_k\ra\notag \\
    &\quad+\hat\beta_k\|\ab_k\|_{\hat\bSigma_k^{-1}}-\hat\beta_k\|\ab_k^*\|_{\hat\bSigma_k^{-1}}\big]\notag\\
    &\leq \frac{2A^2}{\alpha}\sqrt{\frac{dw}{\lambda}}\sum_{k=1}^{\tilde K}\sum_{t=1}^{k-1} \|\btheta_t-\btheta_{t+1}\|_2+2\sum_{k=1}^{\tilde K}\min\Big\{1, \hat\beta_k\|\ab_k\|_{\hat\bSigma_k^{-1}}\Big\}\,,\label{regret for restarted woful 1}
\end{align}
where in the last inequality we use the definition of event $\event_1$, the arm selection rule in Line 7 of Algo.\ref{alg:reweightbandit}, and $0 \leq \la \ab_k^*, \btheta^*\ra - \la \ab_k, \btheta^*\ra \leq 2$.

Then we will bound the two terms in Eq.(\ref{regret for restarted woful 1}). 

For the first term, we have
\begin{align}
    &\frac{2A^2}{\alpha}\sqrt{\frac{dw}{\lambda}}\sum_{k=1}^{\tilde K}\sum_{t=1}^{k-1} \|\btheta_t-\btheta_{t+1}\|_2\notag\\
    &=\frac{2A^2}{\alpha}\sqrt{\frac{dw}{\lambda}}\sum_{t=1}^{\tilde K-1} \sum_{k=t}^{\tilde K} \|\btheta_t-\btheta_{t+1}\|_2\notag\\
    &\leq \frac{2A^2}{\alpha}\sqrt{\frac{dw}{\lambda}}w\sum_{t=1}^{\tilde K-1} \|\btheta_t-\btheta_{t+1}\|_2\,.\label{regret bound sum theta 1}
\end{align}

To bound the second term in Eq.(\ref{regret for restarted woful 1}), we decompose the set $[\tilde K]$ into a union of two disjoint subsets $[K] = \cI_1 \cup \cI_2$. 
\begin{align}
    \cI_1 = \Big\{k \in [\tilde K]:\|\frac{\ab_k}{\bar\sigma_k}\|_{\hat\bSigma_k^{-1}} \geq 1 \Big\},\ \cI_2 = \Big\{k \in [\tilde K]:\|\frac{\ab_k}{\bar\sigma_k}\|_{\hat\bSigma_k^{-1}} < 1 \Big\}.
\end{align}
Then the following upper bound of $|\cI_1|$ holds:
\begin{align}
    |\cI_1| &= \sum_{k\in\cI_1}\min\Big\{1, \|\frac{\ab_k}{\bar\sigma_k}\|_{\hat\bSigma_k^{-1}}^2\Big\} \notag\\
    &\leq \sum_{k=1}^{\tilde K}\min\Big\{1, \|\frac{\ab_k}{\bar\sigma_k}\|_{\hat\bSigma_k^{-1}}^2\Big\} \notag\\
    % &=\sum_{i=0}^{\lceil \frac{K}{w}\rceil-1}\sum_{k=i\cdot w+1}^{(i+1)w} \min\Big\{1, \|\frac{\ab_k}{\bar\sigma_k}\|_{\hat\bSigma_k^{-1}}^2\Big\}\notag\\
    % &\leq \sum_{i=0}^{\lceil \frac{K}{w}\rceil-1}\sum_{k=i\cdot w+1}^{(i+1)w} \min\Big\{1, \|\frac{\ab_k}{\bar\sigma_k}\|_{\Tilde\bSigma_k^{-1}}^2\Big\}\label{lemma cheung2019 1 1}\\
    &\leq 2d\iota,\label{eq: bound l1 1}
\end{align}
where $\iota = \log(1+\frac{wA^2}{d\lambda\alpha^2})$, the first equality holds since $\|\frac{\xb_k}{\bar\sigma_k}\|_{\hat\bSigma_k^{-1}} \geq 1$ for $k \in \cI_1$, the last inequality holds due to Lemma \ref{Lemma:abba} together with the fact $\|\frac{\ab_k}{\bar\sigma_k}\|_2 \leq \frac{A}{\alpha}$ since $\bar\sigma_k\geq\alpha$ and $\|\ab_k\|_2\leq A$.

Then, we have
\begin{align}
    &\sum_{k=1}^{\tilde K}\min\Big\{1, \hat\beta_k\|\ab_k\|_{\hat\bSigma_k^{-1}}\Big\}\notag \\
    & =\sum_{k\in \cI_1}\min\Big\{1, \bar\sigma_k\hat\beta_k\|\frac{\ab_k}{\bar\sigma_k}\|_{\hat\bSigma_k^{-1}}\Big\} + \sum_{k\in \cI_2}\min\Big\{1, \bar\sigma_k\hat\beta_k\|\frac{\ab_k}{\bar\sigma_k}\|_{\hat\bSigma_k^{-1}}\Big\}\notag \\
    & \leq \bigg[\sum_{k\in \cI_1} 1\bigg] + \sum_{k\in \cI_2} \bar\sigma_k\hat\beta_k\|\frac{\ab_k}{\bar\sigma_k}\|_{\hat\bSigma_k^{-1}}\notag \\
    & \leq 2d\iota + \hat\beta\sum_{k\in \cI_2} \bar\sigma_k\|\frac{\ab_k}{\bar\sigma_k}\|_{\hat\bSigma_k^{-1}},\label{bound l1 l2}
\end{align}
where the first inequality holds since  $\min\{1,x\}\le 1$ and also $\min\{1,x\}\le x$, 
the second inequality holds by Eq.(\ref{eq: bound l1 1}), and the fact the $\hat\beta\geq\hat\beta_k$ for all $k\in[K]$ ($\hat\beta$ is defined in Eq.(\ref{eq:defbanditbeta1})). Next we further bound the second summation term in \eqref{bound l1 l2}. We decompose $\cI_2 = \cJ_1 \cup \cJ_2$, where
\begin{align}
    &\cJ_1 = \bigg\{k \in \cI_2:\bar\sigma_k = \sigma_k \cup\bar\sigma_k = \alpha \bigg\},\ \cJ_2 = \bigg\{k \in \cI_2:\bar\sigma_k = \gamma\sqrt{\|\ab_k\|_{\hat\bSigma_k^{-1}}}\bigg\}.\notag
\end{align}
Then $\sum_{k\in \cI_2} \bar\sigma_k\|\frac{\ab_k}{\bar\sigma_k}\|_{\hat\bSigma_k^{-1}}=\sum_{k\in \cJ_1} \bar\sigma_k\|\frac{\ab_k}{\bar\sigma_k}\|_{\hat\bSigma_k^{-1}}$+$\sum_{k\in \cJ_2} \bar\sigma_k\|\frac{\ab_k}{\bar\sigma_k}\|_{\hat\bSigma_k^{-1}}$. First, for $k\in \cJ_1$, we have
\begin{align}
    \sum_{k\in \cJ_1} \bar\sigma_k\|\frac{\ab_k}{\bar\sigma_k}\|_{\hat\bSigma_k^{-1}}&\leq \sum_{k\in \cJ_1} (\sigma_k+\alpha)\min\bigg\{1,\|\frac{\ab_k}{\bar\sigma_k}\|_{\hat\bSigma_k^{-1}}\bigg\}\notag\\
    &\leq \sqrt{\sum_{k=1}^{\tilde K}(\sigma_k + \alpha)^2}\sqrt{\sum_{k=1}^{\tilde K} \min\bigg\{1,\|\frac{\ab_k}{\bar\sigma_k}\|_{\hat\bSigma_k^{-1}}\bigg\}^2}\notag\\
        &\leq \sqrt{2\sum_{k=1}^{\tilde K}(\sigma_k^2 + \alpha^2)}\sqrt{\sum_{k=1}^{\tilde K} \min\bigg\{1,\|\frac{\ab_k}{\bar\sigma_k}\|_{\hat\bSigma_k^{-1}}^2\bigg\}}\notag\\
    % &=\sqrt{2\sum_{k=1}^K(\sigma_k^2 + \alpha^2)^2}\sqrt{\sum_{i=0}^{\lceil\frac{K}{w}\rceil-1}\sum_{k=i\cdot w}^{(i+1)w} \min\bigg\{1,\|\frac{\ab_k}{\bar\sigma_k}\|_{\hat\bSigma_k^{-1}}^2\bigg\}}\notag\\
    % &\leq \sqrt{2\sum_{k=1}^K(\sigma_k^2 + \alpha^2)^2}\sqrt{\sum_{i=0}^{\lceil\frac{K}{w}\rceil-1}\sum_{k=i\cdot w}^{(i+1)w} \min\bigg\{1,\|\frac{\ab_k}{\bar\sigma_k}\|_{\Tilde\bSigma_k^{-1}}\bigg\}^2}\label{lemma: chueng 2 1}\\
    &\leq 2\sqrt{\sum_{k=1}^{\tilde K}\sigma_k^2 + {\tilde K}\alpha^2}\sqrt{d\iota}\label{regret j1 1}\,,
\end{align}
where the first inequality holds since $\bar\sigma_k \leq \sigma_k + \alpha$ for $k \in \cJ_1$ and $\|\frac{\ab_k}{\bar\sigma_k}\|_{\hat\bSigma_k^{-1}} \leq 1$ since $k \in \cJ_1 \subseteq \cI_2$, the second inequality holds by Cauchy-Schwarz inequality, the third inequality holds due to $(a+b)^2\leq 2(a^2+b^2)$, and the last inequality holds due to Lemma \ref{Lemma:abba}.

Finally we bound the summation for $k\in\cJ_2$. When $k\in \cJ_2$, we have $\bar\sigma_k=\gamma^2\|\frac{\ab_k}{\bar\sigma_k}\|_{\hat\Sigma_k^{-1}}$. Therefore we have
\begin{align}
    \sum_{k\in \cJ_2} \bar\sigma_k\|\frac{\ab_k}{\bar\sigma_k}\|_{\hat\bSigma_k^{-1}}&=\sum_{k\in \cJ_2} \gamma^2\|\frac{\ab_k}{\bar\sigma_k}\|_{\hat\bSigma_k^{-1}}^2\notag\\
    &\leq\sum_{k=1}^{\tilde K} \gamma^2\min\bigg\{1,\|\frac{\ab_k}{\bar\sigma_k}\|^2_{\hat\bSigma_k^{-1}}\bigg\}\notag\\
    % &=\gamma^2\sum_{i=0}^{\lceil\frac{K}{w}\rceil-1}\sum_{k=i\cdot w}^{(i+1)w}   \min\bigg\{1,\|\frac{\ab_k}{\bar\sigma_k}\|_{\hat\bSigma_k^{-1}}^2\bigg\}\notag\\
    % &\leq \gamma^2\sum_{i=0}^{\lceil\frac{K}{w}\rceil-1}\sum_{k=i\cdot w}^{(i+1)w}   \min\bigg\{1,\|\frac{\ab_k}{\bar\sigma_k}\|_{\Tilde\bSigma_k^{-1}}^2\bigg\}\notag\\
    &\leq 2\gamma^2 d\iota\label{bound j2 1}\,,
\end{align}
where in the first inequality we use the fact that $\|\frac{\ab_k}{\bar\sigma_k}\|_{\hat\bSigma_k^{-1}} \leq 1$ since $k \in \cJ_2 \subseteq \cI_2$, and in the last inequality we use Lemma \ref{Lemma:abba}.

Therefore, with Eq.(\ref{regret for restarted woful 1}), Eq.(\ref{regret bound sum theta 1}), Eq.(\ref{bound l1 l2}), Eq.(\ref{regret j1 1}), Eq.(\ref{bound j2 1}), we can get the regret upper bound for $\tilde K\in[1,w]$
\begin{align}
    \text{Regret}(\tilde K)
    &\leq \frac{2A^2w^{\frac{3}{2}}}{\alpha}\sqrt{\frac{d}{\lambda}}\sum_{k=1}^{\tilde K-1} \|\btheta_k-\btheta_{k+1}\|_2 +4\hat\beta\sqrt{d\iota}\sqrt{\sum_{k\in[\tilde K]}\sigma_k^2 + w\alpha^2}+4d\iota\gamma^2\hat\beta+4d\iota\,.
\end{align}
Therefore, by the same deduction, we can get that
\begin{align}
    \text{Regret}([g_i,g_{i+1}])
    &\leq \frac{2A^2w^{\frac{3}{2}}}{\alpha}\sqrt{\frac{d}{\lambda}}\sum_{k=g_i}^{g_{i+1}-1} \|\btheta_k-\btheta_{k+1}\|_2 +4\hat\beta\sqrt{d\iota}\sqrt{\sum_{k=g_i}^{g_{i+1}}\sigma_k^2 + w\alpha^2}+4d\iota\gamma^2\hat\beta+4d\iota\,,
\end{align}
where we use $\text{Regret}([g_i,g_{i+1}])$ to denote the regret accumulated in the time period $[g_i,g_{i+1}]$.

Finally, without loss of generality, we assume $K\%w=0$. Then we have
\begin{align}
        \text{Regret}(\tilde K)
        &=\sum_{i=0}^{\frac{K}{w}-1}\text{Regret}([g_i,g_{i+1}])\notag\\
            &\leq \frac{2A^2w^{\frac{3}{2}}}{\alpha}\sqrt{\frac{d}{\lambda}}\sum_{i=0}^{\frac{K}{w}-1}\sum_{k=g_i}^{g_{i+1}-1} \|\btheta_k-\btheta_{k+1}\|_2 +4\hat\beta\sqrt{d\iota}\sum_{i=0}^{\frac{K}{w}-1}\sqrt{\sum_{k=g_i}^{g_{i+1}}\sigma_k^2 + w\alpha^2}+\frac{4d\iota\gamma^2\hat\beta K}{w}+\frac{4dK\iota}{w}\notag\\
            &\leq \frac{2A^2w^{\frac{3}{2}}}{\alpha}\sqrt{\frac{d}{\lambda}}\sum_{k=1}^{K-1} \|\btheta_k-\btheta_{k+1}\|_2 +4\hat\beta\sqrt{d\iota}\sqrt{\frac{K}{w}\sum_{i=0}^{\frac{K}{w}-1}(\sum_{k=g_i}^{g_{i+1}}\sigma_k^2 + w\alpha^2)}+\frac{4d\iota\gamma^2\hat\beta K}{w}+\frac{4dK\iota}{w}\notag\\
            &\leq \frac{2A^2w^{\frac{3}{2}}B_K}{\alpha}\sqrt{\frac{d}{\lambda}} +4\hat\beta\sqrt{\frac{Kd\iota}{w}}\sqrt{\sum_{k=1}^{K}\sigma_k^2 + K\alpha^2}+\frac{4d\iota\gamma^2\hat\beta K}{w}+\frac{4dK\iota}{w}\notag\,,
\end{align}
where in the second inequality we use Cauchy-Schwarz inequality, and the last inequality holds due to $\sum_{k\in[K-1]}\|\btheta_k-\btheta_{k+1}\|_2\leq B_K$.

\section{Proof for Theorem \ref{thm:regret1}}\label{app:secalg}
Recall that we call the restart time rounds \emph{grids} and denote them by $g_1, g_2, \ldots g_{\lceil \frac{K}{w}\rceil-1}$, where $g_i\%w=0$ for all $i\in[\lceil \frac{K}{w}\rceil-1]$. Let $i_k$ be the grid index of time round $k$, \emph{i.e.}, $g_{i_k}\leq k<g_{i_k+1}$. We denote $\hat\Psi_{k, \ell}:=\{t: t\in[g_{i_k},k-1], \ell_t=\ell\}$.

For simplicity of analysis, we first try to bound the regret over the first grid, \emph{i.e.}, we try to analyze $\text{Regret}(\tilde K)$ for $\tilde K \in[1,w]$. Note that in this case, for any $k\in[\tilde K]$ with $\tilde K\in[1,w]$, we have $g_{i_k}=1$, so $\hat\Psi_{k, \ell}:=\{t: t\in[1,k-1], \ell_t=\ell\}$.

First, we calculate the estimation difference $|\ba^\top(\hat\btheta_{k,\ell}-\btheta_k)|$ for any $\ab\in\RR^d$, $\|\ab\|_2\leq A$. Recall that by definition, $\hat\Sigma_{k,\ell}=2^{-2\ell}\bI+\sum_{t\in\hat\Psi_{k,\ell}}w_t^2\ab_t\ab_t^\top$, $\hat\bb_{k,\ell}=\sum_{t\in\hat\Psi_{k,\ell}}w_t^2r_t\ab_t$, and
\begin{align}
\hat\btheta_{k,\ell}&=\hat\Sigma_{k,\ell}^{-1}\hat\bb_{k,\ell}=\hat\Sigma_{k,\ell}^{-1}(\sum_{t\in\hat\Psi_{k,\ell}}w_t^2r_t\ab_t)=\hat\Sigma_{k,\ell}^{-1}(\sum_{t\in\hat\Psi_{k,\ell}}w_t^2\ab_t\ab_t^\top\btheta_t+\sum_{t\in\hat\Psi_{k,\ell}}w_t^2\ab_t\epsilon_t)\notag\,.
\end{align}
Then we have 
\begin{align}
    \hat\btheta_{k,\ell}-\btheta_k=\hat\Sigma_{k,\ell}^{-1}(\sum_{t\in\hat\Psi_{k,\ell}}w_t^2\ab_t\ab_t^\top(\btheta_t-\btheta_k)+\sum_{t\in\hat\Psi_{k,\ell}}w_t^2\ab_t\epsilon_t)-2^{-2\ell}\hat\Sigma_{k,\ell}^{-1}\btheta_k\,.
\end{align}
Therefore, we can get
\begin{equation}
    |\ab^\top(\hat\btheta_{k,\ell}-\btheta_k)|\leq\left|\ab^\top\hat\bSigma_{k,\ell}^{-1}\sum_{t\in\hat\Psi_{k,\ell}}w_t^2{\ab_t\ab_t^\top}(\btheta_t-\btheta_k)\right|+\|\ab\|_{\hat\bSigma_{k,\ell}^{-1}}\|\sum_{t\in\hat\Psi_{k,\ell}}w_t^2{\ab_t\epsilon_t}\|_{\hat\bSigma_{k,\ell}^{-1}}+2^{-2\ell} \|\ab\|_{\hat\bSigma_{k,\ell}^{-1}}\|\hat\bSigma_{k,\ell}^{-\frac{1}{2}}\btheta_k\|_2\,,\label{l2 bound difference 12}
\end{equation}
where we use the Cauchy-Schwarz inequality.

For the first term, we have that for any $k\in[1,w]$
\begin{align}  \left|\ab^\top\hat\bSigma_{k,\ell}^{-1}\sum_{t\in\hat\Psi_{k,\ell}}w_t^2{\ab_t\ab_t^\top}(\btheta_t-\btheta_k)\right|
         & \leq \sum_{t\in\hat\Psi_{k,\ell}} |   \ab^{\top} \bSigma_{k,\ell}^{-1} w_t\ab_t | \cdot | w_t\ab_t^\top (\sum_{s = t}^{k-1} (\btheta_{s} - \btheta_{s+1})) | \tag{triangle inequality } \\
     & \leq \sum_{t\in\hat\Psi_{k,\ell}}|  \ab^{\top} \bSigma_{k,\ell}^{-1} w_t{\ab_t} | \cdot \| w_t\ab_t\|_2 \cdot \| \sum_{s = t}^{k-1} (\btheta_{s} - \btheta_{s+1})\|_2 \tag{Cauchy-Schwarz}
     \\
     & \leq {A}\sum_{t\in\hat\Psi_{k,\ell}}   |  \ab^{\top} \hat\bSigma_{k,\ell}^{-1} w_t{\ab_t} | \cdot   \| \sum_{s = t}^{k-1} (\btheta_{s} - \btheta_{s+1})\|_2 \tag{$\| \ab_t\| \leq A$, $w_t=\frac{2^{-\ell_t}}{\|\ab_t\|_{\hat\Sigma_{t,\ell_t}^{-1}}}\leq 1$} \\
    %  \end{align*} 
    %  \begin{align*}
     & \leq {A}\sum_{s=1}^{k-1}\sum_{t\in\hat\Psi_{k,\ell}}   |  \ab^{\top} \hat\bSigma_{k,\ell}^{-1} w_t{\ab_t} | \cdot   \| \btheta_{s} - \btheta_{s+1}\|_2\notag \\
     & \leq {A}\sum_{s =1}^{k-1} \sqrt{ \bigg[ \sum_{t\in\hat\Psi_{k,\ell}} \ab^\top \hat\bSigma_{k,\ell}^{-1} \ab \bigg]  \cdot \biggl [ \sum_{t\in\hat\Psi_{k,\ell}}w_t{\ab_t}^\top \hat\bSigma_{k,\ell}^{-1}  w_t{\ab_t}\bigg] }
     \cdot \norm{ \btheta_{s} - \btheta_{s+1}}_2
     \tag{Cauchy-Schwarz}
     \\ & \leq {A}\sum_{s =1}^{k-1} \sqrt{ \bigg[ \sum_{t\in\hat\Psi_{k,\ell}} \ab^\top \hat\bSigma_{k,\ell}^{-1}\ab \bigg] \cdot d }
     \cdot \norm{ \btheta_{s} - \btheta_{s+1}}_2
     \tag{$(\star)$} \\
     & \leq {A}\|\ab\|_2\sqrt{d} \sum_{s =1}^{k-1} \sqrt{ 2^{2\ell}{\sum_{t \in\hat\Psi_{k,\ell}} 1 }} \cdot \norm{ \btheta_{s} - \btheta_{s+1}}_2  \tag{$\lambda_{\max}(\hat\bSigma_{k,\ell}^{-1}) \leq \frac{1}{2^{-2\ell}}=2^{2\ell}$} \\
     & \leq {A^2}2^{\ell}\sqrt{{d w}} \sum_{s =1}^{k-1} \norm{ \btheta_{s} - \btheta_{s+1}}_2\,,\label{eq: bound sum theta 1}
\end{align}
where the inequality $(\star)$ follows from the fact that $\sum_{t\in\hat\Psi_{k,\ell}}w_t{\ab_t}^\top \hat\bSigma_{k,\ell}^{-1}  w_t{\ab_t}\leq d$ that can be proved as follows. We have $\sum_{t\in\hat\Psi_{k,\ell}}w_t{\ab_t}^\top \hat\bSigma_{k,\ell}^{-1}  w_t{\ab_t}= \sum_{t\in\hat\Psi_{k,\ell}}\text{tr} \left(w_t{\ab_t}^\top \hat\bSigma_{k,\ell}^{-1}  w_t{\ab_t}\right)=\text{tr} \left(\hat\bSigma_{k,\ell}^{-1}  \sum_{t\in\hat\Psi_{k,\ell}}w_t^2{\ab_t} {\ab_t}^\top\right)$. Given the eigenvalue decomposition $\sum_{t\in\hat\Psi_{k,\ell}}w_t^2{\ab_t} {\ab_t}^\top = \text{diag}(\lambda_1, \ldots, \lambda_d)^\top$, we have $\hat\bSigma_{k,\ell} = \text{diag}(\lambda_1 + \lambda, \ldots, \lambda_d + \lambda)^\top$, and $\text{tr} \left(\hat\bSigma_{k,\ell}^{-1}  \sum_{t\in\hat\Psi_{k,\ell}}w_t^2{\ab_t} {\ab_t}^\top\right)= \sum_{i=1}^d \frac{\lambda_j}{\lambda_j + \lambda } \leq d$.

For the second term in Eq.(\ref{l2 bound difference 12}), we can apply Theorem \ref{thm:bernstein1} for the layer $\ell$. In detail, for any $k\in[K]$, for each $t \in \hat\Psi_{k, \ell}$, we have \begin{align} 
        \|w_t \ab_t\|_{\hat\bSigma_{t, \ell}^{-1}} = 2^{-\ell}, \quad
        \EE[w_t^2 \epsilon_t^2| \cF_{t}] \le w_t^2 \EE[\epsilon_t^2| \cF_{t}] \le w_t^2 \sigma_t^2, \quad
        |w_t \epsilon_t| \le |\epsilon_t| \le R, \notag
    \end{align}
    where the last inequality holds due to the fact that
$w_t = \frac{2^{-\ell_t}}{\|\ab_t\|_{\hat\bSigma_{t, \ell_t}^{-1}}}\leq 1$. According to Theorem \ref{thm:bernstein1}, and taking a union bound, we can deduce that with probability at least $1 - \delta$, for all $\ell\in[L]$, for all round $k \in \Psi_{K + 1, \ell}, \ $\begin{align}  \|\sum_{t\in\hat\Psi_{k,\ell}}w_t^2{\ab_t\epsilon_t}\|_{\hat\bSigma_{k,\ell}^{-1}}\leq 16 \cdot 2^{-\ell} \sqrt{\sum_{t \in \hat\Psi_{k, \ell}} w_t^2 \sigma_t^2\log(\frac{4w^2 L }{\delta} )} + 6 \cdot 2^{-\ell} R \log(\frac{4w^2 L }{\delta})\,. \label{event unknown 1}
    \end{align}
    For simplicity, we denote $\cE_{\conf}$ as the event such that Eq.(\ref{event unknown 1}) holds.

For the third term in Eq.(\ref{l2 bound difference 12}), we have 
\begin{equation}
    2^{-2\ell} \|\ab\|_{\hat\bSigma_{k,\ell}^{-1}}\|\hat\bSigma_{k,\ell}^{-\frac{1}{2}}\btheta_k\|_2\leq2^{-2\ell} \|\ab\|_{\hat\bSigma_{k,\ell}^{-1}}\|\hat\bSigma_k^{-\frac{1}{2}}\|_2\|\btheta_k\|_2\leq 2^{-2\ell} \|\ab\|_{\hat\bSigma_{k,\ell}^{-1}}\frac{1}{\sqrt{\lambda_{\text{min}}(\hat\bSigma_{k,\ell})}}\|\btheta_k\|_2\leq2^{-\ell}B \|\ab\|_{\hat\bSigma_k^{-1}}\,,\label{lambda bound}
\end{equation}
where we use the fact that $\lambda_{\text{min}}(\hat\bSigma_{k,\ell})\geq2^{-2\ell}$.

For simplicity, we denote $\ell^* = \lceil \frac{1}{2} \log_2 \log\left(4(w + 1)^2 L /\delta\right) \rceil + 8$. 
Then, under $\event_{\conf}$, by the definition of $\hat \beta_{k, \ell}$ in Eq.(\ref{eq:def:beta}), Lemma \ref{lemma:var} and Lemma \ref{lemma:varest}, with probability at least $1-\delta$, we have for all $\ell^* + 1 \le \ell \le L$, \begin{align}
        \hat\beta_{k, \ell} \ge 16 \cdot 2^{-\ell} \sqrt{\sum_{t \in \hat\Psi_{k, \ell}} w_t^2 \sigma_t^2\log(\frac{4w^2 L }{\delta} )} + 6 \cdot 2^{-\ell} R \log(\frac{4w^2 L }{\delta})+2^{-\ell}B.\label{beta bound}
    \end{align}

Therefore, with Eq.(\ref{l2 bound difference 12}), Eq.(\ref{eq: bound sum theta 1}), Eq.(\ref{event unknown 1}), Eq.(\ref{lambda bound}), Eq.(\ref{beta bound}), with probability at least $1-3\delta$, for all $\ell^* + 1 \le \ell \le L$ we have
\begin{equation}
    |\ab^\top(\hat\btheta_{k,\ell}-\btheta_k)|\leq {A^2}2^{\ell}\sqrt{{d w}} \sum_{s =1}^{k-1} \norm{ \btheta_{s} - \btheta_{s+1}}_2+\hat\beta_{k,\ell}\|\ab\|_{\hat\Sigma_{k,\ell}^{-1}}\label{bound estimation for algo2}\,.
\end{equation}

Then for all $k\in[K]$ such that $\ell^* + 1 \le \ell_k \le L$, with probability at least $1-3\delta$ we have
\begin{align}
   \la\ab_k^*,\btheta_k\ra&\leq \min_{\ell\in[L]}\la \ab_k^*,\hat\btheta_{k,\ell}\ra+{A^2}2^{\ell}\sqrt{{d w}} \sum_{s =1}^{k-1} \norm{ \btheta_{s} - \btheta_{s+1}}_2+\hat\beta_{k,\ell}\|\ab_k^*\|_{\hat\Sigma_{k,\ell}^{-1}}\notag\\
   &\leq {A^2}2^{L}\sqrt{{d w}} \sum_{s =1}^{k-1} \norm{ \btheta_{s} - \btheta_{s+1}}_2+\min_{\ell\in[L]}\la \ab_k^*,\hat\btheta_{k,\ell}\ra+\hat\beta_{k,\ell}\|\ab_k^*\|_{\hat\Sigma_{k,\ell}^{-1}}\notag\\
   &\leq {A^2}2^{L}\sqrt{{d w}} \sum_{s =1}^{k-1} \norm{ \btheta_{s} - \btheta_{s+1}}_2+\min_{\ell\in[L]}\la \ab_k,\hat\btheta_{k,\ell}\ra+\hat\beta_{k,\ell}\|\ab_k\|_{\hat\Sigma_{k,\ell}^{-1}}\notag\\
   &\leq {A^2}2^{L}\sqrt{{d w}} \sum_{s =1}^{k-1} \norm{ \btheta_{s} - \btheta_{s+1}}_2+\la \ab_k,\hat\btheta_{k,\ell_k-1}\ra+\hat\beta_{k,\ell_k-1}\|\ab_k\|_{\hat\Sigma_{k,\ell_k-1}^{-1}}\,,\label{regret upper bound by max min}
\end{align}
where the first inequality holds because of Eq.(\ref{bound estimation for algo2}), the third inequality holds because of the arm selection rule in Line 8 of Algo.\ref{alg:1}.

% \begin{align}
%     regret_{[s,t]} = \sum_s^t \|\theta- \theta\|_2 + \sqrt{\sum_s^t \sigma^2}
% \end{align}

We decompose the regret for $\tilde K\in[1,w]$ as follows
\begin{align}
    \text{Regret}(\tilde K)&=\sum_{k\in[\tilde K]} (\la\ab_k^*,\btheta_k\ra-\la\ab_k,\btheta_k\ra)\notag\\
    &=\sum_{\ell \in [\ell^*]}\sum_{k\in\hat\Psi_{\tilde K+1,\ell}}(\la\ab_k^*,\btheta_k\ra-\la\ab_k,\btheta_k\ra)+\sum_{\ell \in [L]\backslash[\ell^*]}\sum_{k\in\hat\Psi_{\tilde K+1,\ell}}(\la\ab_k^*,\btheta_k\ra-\la\ab_k,\btheta_k\ra)\notag\\
    &\quad+\sum_{k\in\hat\Psi_{\tilde K+1,L+1}}(\la\ab_k^*,\btheta_k\ra-\la\ab_k,\btheta_k\ra)\label{regret decompose of algo2 1}\,.
\end{align}
We will bound the three terms separately. For the first term, we have
for layer $\ell \in [\ell^*]$ and round $k\in\hat\Psi_{\tilde K+1,\ell}$, we have \begin{align} 
            \sum_{k\in\hat\Psi_{\tilde K+1,\ell}} \big(\la \ab_{k}^*, \btheta^* \ra - \la \ab_{k}, \btheta^* \ra\big) &\le 2 \left|\Psi_{K + 1, \ell}\right| \notag\\
            &=  2^{2\ell+1} \sum_{k\in\hat\Psi_{\tilde K+1,\ell}} \|w_k\ab_k\|_{\hat \bSigma_{k, \ell}^{-1}}^2 \notag\\
            &\le 2\cdot128^2\log(\frac{4(w + 1)^2 L }{\delta}) \sum_{k\in\hat\Psi_{\tilde K+1,\ell}} \|w_k\ab_k\|_{\hat \bSigma_{k, \ell}^{-1}}^2\notag\\
            &\leq2\cdot128^2\log(\frac{4(w + 1)^2 L }{\delta})\cdot2d\log(1+\frac{2^{2\ell}wA^2}{d})\notag\\
            &=\Tilde{O}(d)\,,\label{bound small l11}
            \end{align}
       where the first inequality holds because the reward is in $[-1,1]$, the equation follows from the fact that $\|w_k\ab_k\|_{\hat \bSigma_{k, \ell}^{-1}}=2^{-\ell}$ holds for all $k \in \Psi_{K + 1, \ell}$,
        the second inequality holds due to the fact that $2^{\ell^*} \le 128\sqrt{\log(4(w + 1)^2 L /\delta)}$, and the last inequality holds due to Lemma \ref{Lemma:abba}.
        
% We then bound $\sum_{k \in \Psi_{K + 1, \ell}} \|w_k\ab_k\|_{\hat \bSigma_{k, \ell}^{-1}}^2$. We denote $\Psi_{[a,b],\ell}:=\{k:k\in[a,b], \ell_k=\ell\}$. Then we have
% \begin{align}
%     \sum_{k \in \Psi_{K + 1, \ell}} \|w_k\ab_k\|_{\hat \bSigma_{k, \ell}^{-1}}^2&=\sum_{i=0}^{\lceil\frac{K}{w}\rceil-1}\sum_{k\in\Psi_{[i\cdot w, (i+1)w],\ell}} \|w_k\ab_k\|_{\hat \bSigma_{k, \ell}^{-1}}^2\notag\\
%     % &\leq \sum_{i=0}^{\lceil\frac{K}{w}\rceil-1}\sum_{k\in\Psi_{[i\cdot w, (i+1)w],\ell}} \|w_k\ab_k\|_{\Tilde \bSigma_{k, \ell}}^2\notag\\
%     &\leq \frac{K}{w}\cdot 2d\log(1+\frac{2^{2\ell}wA^2}{d})\,,\label{bound sum of elliptical algo2 1}
% \end{align}
% where the last inequality holds due to Lemma \ref{Lemma:abba}.

Therefore
\begin{equation}
    \sum_{\ell \in [\ell^*]}\sum_{k\in\hat\Psi_{\tilde K+1,\ell}}(\la\ab_k^*,\btheta_k\ra-\la\ab_k,\btheta_k\ra)=\tilde{O}(d)\label{bound small l 2}\,.
\end{equation}
For the second part in Eq.(\ref{regret decompose of algo2 1}), we have
\begin{align}
    &\sum_{\ell \in [L]\backslash[\ell^*]}\sum_{k\in\hat\Psi_{\tilde K+1,\ell}}(\la\ab_k^*,\btheta_k\ra-\la\ab_k,\btheta_k\ra)\notag \\
    &\quad \leq  \sum_{\ell \in [L]\backslash[\ell^*]}\sum_{k\in\hat\Psi_{\tilde K+1,\ell}}\bigg(\la\ab_k,\hat\btheta_{k,\ell-1}\ra+\hat\beta_{k,\ell-1}\|\ab_k\|_{\hat\Sigma_{k,\ell-1}^{-1}}\notag\\
    &\quad+{A^2}2^{L}\sqrt{{d w}} \sum_{k\in\hat\Psi_{\tilde K+1,\ell}} \norm{ \btheta_{s} - \btheta_{s+1}}_2-\la\ab_k,\btheta_k\ra\bigg)\notag\\
       &\leq 2\sum_{\ell \in [L]\backslash[\ell^*]}\sum_{k\in\hat\Psi_{\tilde K+1,\ell}} \hat\beta_{k,\ell-1}\|\ab_k\|_{\hat\Sigma_{k,\ell-1}^{-1}}+{A^2}\sqrt{{d w}}\sum_{\ell \in [L]\backslash[\ell^*]}\sum_{k\in\hat\Psi_{\tilde K+1,\ell}}2^{L} \sum_{s=1}^{k-1} \norm{ \btheta_{s} - \btheta_{s+1}}_2\,,\label{bound of large L decom}
\end{align}
where the inequality holds due to Eq.(\ref{regret upper bound by max min}), the second inequality holds due to Eq.(\ref{bound estimation for algo2}). We then try to bound the two terms.

For the first term in Eq.(\ref{bound of large L decom}), we have
\begin{align}
    \sum_{\ell \in [L]\backslash[\ell^*]}\sum_{k\in\hat\Psi_{\tilde K+1,\ell}} \hat\beta_{k,\ell-1}\|\ab_k\|_{\hat\Sigma_{k,\ell-1}^{-1}}
    &\leq \sum_{\ell \in [L]\backslash[\ell^*]}\sum_{k\in\hat\Psi_{\tilde K+1,\ell}} \hat\beta_{k,\ell-1}\cdot2^{-\ell}\notag\\
    &\leq \sum_{\ell \in [L]\backslash[\ell^*]}\hat\beta_{\tilde K,\ell-1}\cdot2^{-\ell}\left|\hat\Psi_{\tilde K+1,\ell}\right|\notag\\
    &=\sum_{\ell \in [L]\backslash[\ell^*]}\hat\beta_{\tilde K,\ell-1}\cdot2^{\ell}\sum_{k\in\hat\Psi_{\tilde K+1,\ell}}\|w_k\ab_k\|_{\Sigma_{k,\ell}^{-1}}^2\notag\\
    &\leq \sum_{\ell \in [L]\backslash[\ell^*]}\hat\beta_{\tilde K,\ell-1}\cdot2^{\ell}\cdot 2d\log(1+\frac{2^{2\ell}\tilde KA^2}{d})\notag\\
    &=\Tilde{O}(d\cdot2^{\ell}\cdot\hat\beta_{\tilde K,\ell-1})\notag\\
    &=\Tilde{O}\bigg(d\big(\sqrt{\sum_{k=1}^{\tilde K}\sigma_k^2}+R+1\big)\bigg)\,,\label{regret algo2 final part2}
    % &\leq\sum_{\ell \in [L]\backslash[\ell^*]}2^{-\ell}\sqrt{\sum_{k\in\Psi_{K+1,\ell}} 1 \cdot\sum_{k\in\Psi_{K+1,\ell}} \hat\beta_{k,\ell-1}^2}\notag\\
    % &=\Tilde{O}\bigg(\sum_{\ell \in [L]\backslash[\ell^*]}2^{-2\ell} \sqrt{\sum_{k\in\Psi_{K+1,\ell}} 1\cdot \sum_{k\in\Psi_{K+1,\ell}} (\sum_{t\in\hat\Psi_{k,\ell}} \sigma_t^2+C_1)}\bigg)\notag\\
    % &\leq \Tilde{O}\bigg(\sum_{\ell \in [L]\backslash[\ell^*]}2^{-2\ell} \sqrt{2^{2\ell}\sum_{k\in\Psi_{K+1,\ell}} \|w_k\ab_k\|_{\hat\Sigma_{k,\ell}^{-1}}^2\cdot \sum_{k\in\Psi_{K+1,\ell}} (\sum_{t\in\hat\Psi_{k,\ell}} \sigma_t^2+C_1)}\bigg)\notag
\end{align}
where the first inequality holds because by the algorithm design, we have for all $k\in\hat\Psi_{\tilde K+1,\ell}$: $\|\ab_k\|_{\hat\Sigma_{k,\ell-1}^{-1}}\leq 2^{-\ell}$; the second inequality holds because for all $k\in\hat\Psi_{\tilde K+1,\ell}$, $\hat\beta_{k,\ell-1}\leq\hat\beta_{\Tilde K,\ell-1}$; the first equality holds because for all $k\in\hat\Psi_{\tilde K+1,\ell}$, $\|w_k\ab_k\|_{\Sigma_{k,\ell}^{-1}}^2=2^{-2\ell}$; the third inequality holds by Lemma \ref{Lemma:abba}; the last two equalities hold because by Lemma \ref{lemma:var} and Lemma \ref{lemma:varest}, we have $\hat\beta_{\Tilde K,\ell-1}=\Tilde{O}\bigg(2^{-\ell}(\sqrt{\sum_{k=1}^{\tilde K}\sigma_k^2}+R+1)\bigg)$.

For the second term in Eq.(\ref{bound of large L decom}), we have
\begin{align}
{A^2}\sqrt{{d w}}\sum_{\ell \in [L]\backslash[\ell^*]}\sum_{k\in\hat\Psi_{\tilde K+1,\ell}}2^{L} \sum_{s=1}^{k-1} \norm{ \btheta_{s} - \btheta_{s+1}}_2&\leq A^22^{L}\sqrt{dw}\sum_{k\in[\tilde K-1]}\sum_{s =1}^{k-1} \norm{ \btheta_{s} - \btheta_{s+1}}_2\notag\\
    &\leq \frac{A^2\sqrt{d}w^{\frac{3}{2}}}{\alpha}\sum_{k=1}^{\Tilde{K}-1}\norm{ \btheta_{k} - \btheta_{k+1}}_2
\end{align}
Therefore, with this, Eq.(\ref{bound of large L decom}), and Eq.(\ref{regret algo2 final part2}), we have
\begin{align}
    \sum_{\ell \in [L]\backslash[\ell^*]}\sum_{k\in\hat\Psi_{\tilde K+1,\ell}}(\la\ab_k^*,\btheta_k\ra-\la\ab_k,\btheta_k\ra)\leq \frac{A^2\sqrt{d}w^{\frac{3}{2}}}{\alpha}\sum_{k=1}^{\Tilde{K}-1}\norm{ \btheta_{k} - \btheta_{k+1}}_2+\Tilde{O}\bigg(d\big(\sqrt{\sum_{k=1}^{\tilde K}\sigma_k^2}+R+1\big)\bigg)\,.\label{regret algo2 result part 2 1 final}
\end{align}
Finally, for the last term in Eq.(\ref{regret decompose of algo2 1}), we have
\begin{align}
    \sum_{k\in\hat\Psi_{\tilde K+1,L+1}}(\la\ab_k^*,\btheta_k\ra-\la\ab_k,\btheta_k\ra)&\leq \sum_{k\in\hat\Psi_{\tilde K+1,L+1}}\bigg(\la\ab_k,\hat\btheta_{k,L}\ra+\hat\beta_{k,L}\|\ab_k\|_{\hat\Sigma_{k,L}^{-1}}\notag \\
    &\quad+A^2 2^L\sqrt{dw}\sum_{s=1}^{k-1} \norm{ \btheta_{s} - \btheta_{s+1}}_2-\la\ab_k,\btheta_k\ra\bigg)\notag\\
    &\leq \sum_{k\in\hat\Psi_{\tilde K+1,L+1}} \bigg(2\hat\beta_{k,L}\|\ab_k\|_{\hat\Sigma_{k,L}^{-1}}+A^2 2^{L+1}\sqrt{dw}\sum_{s=1}^{k-1} \norm{ \btheta_{s} - \btheta_{s+1}}_2\bigg)\notag\\
    &\leq \sum_{k\in\hat\Psi_{\tilde K+1,L+1}} \bigg(2^{-L+1}\hat\beta_{k,L}+A^2 2^{L+1}\sqrt{dw}\sum_{s=1}^{k-1} \norm{ \btheta_{s} - \btheta_{s+1}}_2\bigg)\notag\\
    &\leq \frac{2A^2\sqrt{d}w^{\frac{3}{2}}}{\alpha}\sum_{k=1}^{\tilde K-1} \norm{ \btheta_{k} - \btheta_{k+1}}_2+\sum_{k\in\hat\Psi_{\tilde K+1,L+1}} 2^{-L+1}\hat\beta_{\tilde K,L}\notag\\
    &\leq\frac{2A^2\sqrt{d}w^{\frac{3}{2}}}{\alpha}\sum_{k=1}^{\tilde K-1} \norm{ \btheta_{k} - \btheta_{k+1}}_2+w\cdot2\alpha\cdot\hat\beta_{\tilde K,L}\notag\\
    &=\frac{2A^2\sqrt{d}w^{\frac{3}{2}}}{\alpha}\sum_{k=1}^{\tilde K-1} \norm{ \btheta_{k} - \btheta_{k+1}}_2+\Tilde{O}\bigg(w\alpha^2\cdot\big(\sqrt{\sum_{k=1}^{\tilde K}\sigma_k^2}+R+1\big)\bigg)
    \,,\label{regret part 3 algo2}
\end{align}
where the first inequality holds due to Eq.(\ref{regret upper bound by max min}), the second inequality holds due to Eq.(\ref{bound estimation for algo2}), the third inequality holds because by the algorithm design, we have for all $k\in\hat\Psi_{\tilde K+1,L+1}$: $\|\ab_k\|_{\hat\Sigma_{k,L}^{-1}}\leq 2^{-L}$, the fourth inequality holds due to the same reasons as before, and the fact that $\hat\beta_{\tilde K,L}\geq\hat\beta_{k,L}$ for all $k\in\hat\beta_{\tilde K,L}$; the last inequality holds due to $\hat\beta_{\Tilde K,\ell-1}=\Tilde{O}\bigg(\alpha(\sqrt{\sum_{k=1}^{\tilde K}\sigma_k^2}+R+1)\bigg)$.

Plugging Eq.(\ref{regret algo2 result part 2 1 final}), Eq.(\ref{regret part 3 algo2}), and Eq.(\ref{bound small l 2}) into Eq.(\ref{regret decompose of algo2 1}), we can get that for $\tilde K\in[1,w]$

          \begin{align} 
            \text{Regret}(\tilde K)&= \tilde{O}\bigg(\frac{A^2\sqrt{d}w^{\frac{3}{2}}}{\alpha}\sum_{k=1}^{\Tilde{K}-1}\norm{ \btheta_{k} - \btheta_{k+1}}_2+\big(w\alpha^2+d\big)\cdot\big(\sqrt{\sum_{k=1}^{\tilde K}\sigma_k^2}+R+1\big)\bigg)\,.
        \end{align}
By the same deduction we can get 

          \begin{align} 
            \text{Regret}([g_i,g_{i+1}])&= \tilde{O}\bigg(\frac{A^2\sqrt{d}w^{\frac{3}{2}}}{\alpha}\sum_{k=g_i}^{g_{i+1}}\norm{ \btheta_{k} - \btheta_{k+1}}_2+\big(w\alpha^2+d\big)\cdot\big(\sqrt{\sum_{k=g_i}^{g_{i+1}}\sigma_k^2}+R+1\big)\bigg)\,.
        \end{align}

Finally, without loss of generality, we assume $K\%w=0$. Then we have
\begin{align}
        \text{Regret}(K)
        &=\sum_{i=0}^{\frac{K}{w}-1}\text{Regret}([g_i,g_{i+1}])\notag\\
        &=\tilde{O}\bigg(\frac{A^2\sqrt{d}w^{\frac{3}{2}}}{\alpha}\sum_{i=0}^{\frac{K}{w}-1}\sum_{k=g_i}^{g_{i+1}}\norm{ \btheta_{k} - \btheta_{k+1}}_2+\big(w\alpha^2+d\big)\cdot\sum_{i=0}^{\frac{K}{w}-1}\big(\sqrt{\sum_{k=g_i}^{g_{i+1}}\sigma_k^2}+R+1\big)\bigg)\notag\\
        &\leq\tilde{O}\bigg(\frac{A^2\sqrt{d}w^{\frac{3}{2}}}{\alpha}\sum_{k=1}^{K-1}\norm{ \btheta_{k} - \btheta_{k+1}}_2+\big(w\alpha^2+d\big)\cdot\big(\sqrt{\frac{K}{w}\sum_{i=0}^{\frac{K}{w}-1}\sum_{k=g_i}^{g_{i+1}}\sigma_k^2}+\frac{KR}{w}+\frac{K}{w}\big)\bigg)\notag\\
        &\leq\tilde{O}\bigg(\frac{A^2\sqrt{d}w^{\frac{3}{2}}B_K}{\alpha}+\big(w\alpha^2+d\big)\cdot\sqrt{\frac{K}{w}\sum_{k=1}^{K}\sigma_k^2}+\big(1+R\big)\cdot\big(K\alpha^2+\frac{Kd}{w}\big)\bigg)\notag\,,
        \end{align}
        where the first inequality holds due to the Cauchy-Schwarz inequality, the last inequality holds because $\sum_{k=1}^{K-1}\norm{ \btheta_{k} - \btheta_{k+1}}_2\leq B_K$.
\section{Proof of Theorem \ref{theorem bob}}\label{app:thirdalg}
With the candidate pool set $\mathcal{P}$ designed as in Eq.(\ref{bob pool}), Eq.(\ref{w set}), Eq.(\ref{alpha set}), and $H=\lceil d^{\frac{2}{5}}K^{\frac{2}{5}}\rceil$, we have $\left|\mathcal{P}\right|=O(\log K)$, and for any $w\in \mathcal{W}$, $w\leq H$.

We denote the optimal $(w,\alpha)$ with the knowledge of $V_K$ and $B_K$ in Corollary \ref{corollary w alpha opt} as $(w^*,\alpha^*)$. We denote the best approximation of $(w^*,\alpha^*)$ in the candidate set $\mathcal{P}$ as $(w^+,\alpha^+)$. Then we can decompose the regret as follows
\begin{align}
    \text{Regret}(K)=\sum_{k=1}^{K}\la\ab_t^*,\btheta_k\ra-\la\ab_t,\btheta_k\ra&=\underbrace{\sum_{k=1}^K\la\ab_t^*,\btheta_k\ra-\sum_{i=1}^{\lceil
    \frac{K}{H}\rceil}\sum_{k=(i-1)H+1}^{iH}\la\ab_t(w^+,\alpha^+),\btheta_k\ra}_{(1)}\notag\\
    &+\underbrace{\sum_{i=1}^{\lceil
    \frac{K}{H}\rceil}\sum_{k=(i-1)H+1}^{iH}\la\ab_t(w^+,\alpha^+),\btheta_k\ra-\la\ab_t(w_i,\alpha_i),\btheta_k\ra}_{(2)}\,.
\end{align}
The first term (1) is the dynamic regret of $\text{Restarted SAVE}^+$ with the best parameters in the candidate pool $\mathcal{P}$. The second term (2) is the regret overhead of meta-algorithm due to adaptive exploration of unknown optimal parameters.

By the design of the candidate pool set $\mathcal{P}$ in Eq.(\ref{bob pool}), Eq.(\ref{w set}), Eq.(\ref{alpha set}), we have that there exists a pair $(w^+,\alpha^+)\in\mathcal{P}$ such that $w^+<w^*<2w^+$, and $\alpha^+<\alpha^*<2\alpha^+$. Therefore, employing the regret bound in Theorem \ref{thm:regret1}, we can get
\begin{align}
    (1)&\leq \sum_{i=1}^{\lceil\frac{K}{H}\rceil}\Tilde O(\sqrt{d}w^{+1.5}B_i/\alpha^+ + \alpha^{+2}(H+\sqrt{w^+HV_i})
         + d\sqrt{HV_i/w^+} + dH/w^+)\notag\\
     &\leq \Tilde O(\sqrt{d}w^{+1.5}B_K/\alpha^+ + \alpha^{+2}(K+\sqrt{w^+H\frac{K}{H}\sum_{i=1}^{\lceil\frac{K}{H}\rceil}V_i})
         + d\sqrt{H\frac{K}{H}\sum_{i=1}^{\lceil\frac{K}{H}\rceil}V_i/w^+} + dK/w^+)\notag\\
         &=\Tilde O(\sqrt{d}w^{+1.5}B_K/\alpha^+ + \alpha^{+2}(K+\sqrt{w^+KV_K})
         + d\sqrt{KV_K/w^+} + dK/w^+)\notag\\
         &=\Tilde O(\sqrt{d}w^{*1.5}B_K/\alpha^* + \alpha^{*2}(K+\sqrt{w^*KV_K})
         + d\sqrt{KV_K/w^*} + dK/w^*)\notag\\
         &=\tilde O(d^{4/5}V_K^{2/5}B_K^{1/5}K^{2/5} + d^{2/3}B_K^{1/3}K^{2/3})\,,
\end{align}
where we denote $B_i$ as the total variation budget in block $i$, $V_i$ is the total variance in block $i$, the second inequality is by Cauchy–Schwarz inequality, the first equality holds due to $\sum_{i=1}^{\lceil\frac{K}{H}\rceil}B_i=B_K$, $\sum_{i=1}^{\lceil\frac{K}{H}\rceil} V_i=V_K$, the second equality holds due to $w^+<w^*<2w^+$ and $\alpha^+<\alpha^*<2\alpha^+$, the last equality holds by Corollary \ref{corollary w alpha opt}.

We then try to bound the second term (2). We denote by $\mathcal{E}$ the event such that Lemma \ref{lemma:bob} holds, and denote by $R_i:=\sum_{k=(i-1)H+1}^{iH}\la\ab_t(w^+,\alpha^+),\btheta_k\ra-\la\ab_t(w_i,\alpha_i),\btheta_k\ra$ the instantaneous regret of the meta learner in the block $i$. Then we have
\begin{align}
    (2)&=\EE\bigg[\sum_{i=1}^{\lceil\frac{K}{H}\rceil}R_i\bigg]\notag\\
    &=\EE\bigg[\sum_{i=1}^{\lceil\frac{K}{H}\rceil}R_i|\mathcal{E}\bigg]P(\mathcal{E})+\EE\bigg[\sum_{i=1}^{\lceil\frac{K}{H}\rceil}R_i|\overline{\mathcal{E}}\bigg]P(\mathcal{\overline {\mathcal{E}}})\notag\\
    &\leq\Tilde{O}\bigg(L_{\text{max}}\sqrt{\frac{K}{H}\left|\mathcal{P}\right|}\bigg)\cdot(1-\frac{2}{K})+\Tilde{O}(K)\cdot\frac{2}{K}\notag\\
    &=\Tilde{O}(\sqrt{H\left|\mathcal{P}\right|K})\notag\\
    &=\Tilde{O}(d^{\frac{1}{5}}K^{\frac{7}{10}})\,,
\end{align}
where $L_{\text{max}}:=\max_{i\in[\lceil\frac{K}{H}\rceil]} L_i$, the first inequality holds due to the standard regret upper bound result for Exp3 \cite{auer2002nonstochastic}, the third equality holds due to Lemma \ref{lemma:bob}, the last equality holds since $H=\lceil d^{\frac{2}{5}}K^{\frac{2}{5}}\rceil$, and $\left|\mathcal{P}\right|=O(\log K)$.

Finally, combining the above results for term (1) and term (2), we have
    \begin{align}
        \text{Regret}(K) = \tilde O(d^{4/5}V_K^{2/5}B_K^{1/5}K^{2/5} + d^{2/3}B_K^{1/3}K^{2/3}+d^{\frac{1}{5}}K^{\frac{7}{10}}).
    \end{align}

\section{Technical Lemmas}
\begin{theorem}[Theorem 4.3, \cite{zhou2022computationally}]\label{lemma:concentration_variance} 
Let $\{\cG_k\}_{k=1}^\infty$ be a filtration, and $\{\xb_k,\eta_k\}_{k\ge 1}$ be a stochastic process such that
$\xb_k \in \RR^d$ is $\cG_k$-measurable and $\eta_k \in \RR$ is $\cG_{k+1}$-measurable.
Let $L,\sigma,\lambda, \epsilon>0$, $\bmu^*\in \RR^d$. 
For $k\ge 1$, 
let $y_k = \la \bmu^*, \xb_k\ra + \eta_k$ and
suppose that $\eta_k, \xb_k$ also satisfy 
\begin{align}
    \EE[\eta_k|\cG_k] = 0,\ \EE [\eta_k^2|\cG_k] \leq \sigma^2,\  |\eta_k| \leq R,\,\|\xb_k\|_2 \leq L.
\end{align}
For $k\ge 1$, let $\Zb_k = \lambda\Ib + \sum_{i=1}^{k} \xb_i\xb_i^\top$, $\bbb_k = \sum_{i=1}^{k}y_i\xb_i$, $\bmu_k = \Zb_k^{-1}\bbb_k$, and
\begin{align}
    \beta_k &= 12\sqrt{\sigma^2d\log(1+kL^2/(d\lambda))\log(32(\log(R/\epsilon)+1)k^2/\delta)} \notag \\
    &\quad + 24\log(32(\log(R/\epsilon)+1)k^2/\delta)\max_{1 \leq i \leq k} \{|\eta_i|\min\{1, \|\xb_i\|_{\Zb_{i-1}^{-1}}\}\} + 6\log(32(\log(R/\epsilon)+1)k^2/\delta)\epsilon.\notag
\end{align}
Then, for any $0 <\delta<1$, we have with probability at least $1-\delta$ that, 
\begin{align}
    \forall k\geq 1,\ \big\|\textstyle{\sum}_{i=1}^{k} \xb_i \eta_i\big\|_{\Zb_k^{-1}} \leq \beta_k,\ \|\bmu_k - \bmu^*\|_{\Zb_k} \leq \beta_k + \sqrt{\lambda}\|\bmu^*\|_2.\notag
\end{align}
\end{theorem}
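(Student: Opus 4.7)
The plan is to prove this self-normalized Bernstein-type concentration bound by combining the method of mixtures (in the style of Abbasi-Yadkori et al.) with a dyadic peeling argument over the magnitude of the effective noise contributions $|\eta_i|\min\{1,\|\xb_i\|_{\Zb_{i-1}^{-1}}\}$. The second conclusion $\|\bmu_k - \bmu^*\|_{\Zb_k} \le \beta_k + \sqrt{\lambda}\|\bmu^*\|_2$ follows routinely from the first by writing $\bmu_k - \bmu^* = \Zb_k^{-1}\sum_{i=1}^k \xb_i\eta_i - \lambda \Zb_k^{-1}\bmu^*$, taking $\Zb_k$-norms, applying the triangle inequality, and using $\|\lambda\Zb_k^{-1}\bmu^*\|_{\Zb_k} \le \sqrt{\lambda}\|\bmu^*\|_2$; so the technical work is entirely in bounding $\|\sum_{i=1}^k \xb_i\eta_i\|_{\Zb_k^{-1}}$.

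First, I would reduce the vector bound to a scalar martingale problem. For any $\theta \in \RR^d$, $M_k(\theta) = \sum_{i=1}^k \la\theta,\xb_i\ra\eta_i$ is a $\cG_{k+1}$-martingale whose predictable quadratic variation is dominated by $\sigma^2\sum_i\la\theta,\xb_i\ra^2$. Rather than a sub-Gaussian MGF (which would ignore the variance bound and replace $\sigma^2$ by $R^2$), I would invoke a Bennett/Bernstein MGF estimate: whenever the one-step increments are uniformly bounded, $\EE[\exp(s\la\theta,\xb_i\ra\eta_i)|\cG_i] \le \exp(s^2\la\theta,\xb_i\ra^2\sigma^2)$ for $s$ in an interval determined by the increment bound. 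Integrating $\theta$ against a Gaussian mixing prior of covariance $\lambda^{-1}\Ib$ then converts the scalar supermartingale into $\|\sum_i\xb_i\eta_i\|_{\Zb_k^{-1}}^2 \lesssim \sigma^2\log\det(\Zb_k/(\lambda\Ib)) + \log(1/\delta)$ via Ville's inequality, and the elliptical-potential bound $\log\det(\Zb_k/(\lambda\Ib)) \le d\log(1+kL^2/(d\lambda))$ produces the $\sqrt{\sigma^2 d\log(1+kL^2/(d\lambda))\log(\cdot/\delta)}$ factor in $\beta_k$.

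The Bennett MGF, however, is only valid when the increments are uniformly bounded, and the allowed $s$-range depends on $\max_i|\eta_i\la\theta,\xb_i\ra|$, which is a priori unknown. I would handle this by truncation-plus-peeling. Define $\eta_i^{(\tau)} := \eta_i\,\mathbf{1}\{|\eta_i|\min\{1,\|\xb_i\|_{\Zb_{i-1}^{-1}}\} \le \tau\}$ and apply the mixture bound to the truncated martingale, whose effective increments are controlled by $\tau$. The discarded residual $\sum_i \xb_i(\eta_i-\eta_i^{(\tau)})$ has self-normalized norm at most a constant multiple of $\max_i|\eta_i|\min\{1,\|\xb_i\|_{\Zb_{i-1}^{-1}}\}$ whenever $\tau$ exceeds that maximum, because the nonzero summands can be absorbed term-by-term into the $\Zb_k^{-1}$-norm via $\|\xb_i\|_{\Zb_k^{-1}} \le \|\xb_i\|_{\Zb_{i-1}^{-1}}$. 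Peeling $\tau$ over the dyadic grid $\{\epsilon, 2\epsilon, 4\epsilon, \ldots, R\}$ of size $O(\log(R/\epsilon))$ and taking a union bound then produces the $\log(32(\log(R/\epsilon)+1)k^2/\delta)$ factor, the linear coefficient $24\log(\cdot)$ on the max-term, and the additive $6\log(\cdot)\epsilon$ remainder at the smallest truncation scale.

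The hardest piece is choosing the truncation variable. If one truncates on $|\eta_i|$ alone, the residual cannot be absorbed into the $\Zb_k^{-1}$-norm without paying an extra factor of order $\sqrt{k}$; truncating instead on the product $|\eta_i|\min\{1,\|\xb_i\|_{\Zb_{i-1}^{-1}}\}$ makes each summand of the residual directly controllable in the self-normalized norm and is what yields precisely the max-term appearing in $\beta_k$. Once that choice is made, the scalar Bennett inequality, the Gaussian-mixture integration, and the elliptical-potential determinant bound are standard pieces of the self-normalized concentration toolbox, and assembling them with the peeling union bound reproduces the displayed expression for $\beta_k$.
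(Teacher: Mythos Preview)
The paper does not prove this theorem; it is quoted as a technical lemma from \cite{zhou2022computationally} and invoked as a black box in the proof of Lemma~\ref{lemma:key}, so there is no in-paper argument to compare against. That said, your sketch is a faithful high-level outline of the proof in the original source: the Bernstein-type self-normalized bound does arise by combining a Bennett-type MGF estimate with the Gaussian method-of-mixtures supermartingale of Abbasi-Yadkori et al., together with a dyadic peeling over the unknown scale $\max_i |\eta_i|\min\{1,\|\xb_i\|_{\Zb_{i-1}^{-1}}\}$ on a grid of size $O(\log(R/\epsilon))$, and this is exactly what produces the $\log(\log(R/\epsilon)+1)$ factor, the linear coefficient on the max term, and the additive $\epsilon$ remainder in $\beta_k$. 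Your derivation of the second conclusion from the first is also correct and standard.

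One small technical slip worth flagging: the per-step truncation you write, $\eta_i^{(\tau)} = \eta_i\,\mathbf{1}\{|\eta_i|\min\{1,\|\xb_i\|_{\Zb_{i-1}^{-1}}\}\le\tau\}$, uses an indicator that depends on $\eta_i$ itself and is therefore $\cG_{i+1}$-measurable rather than $\cG_i$-measurable, so the truncated increments are no longer conditionally centered and the Bennett MGF bound does not apply directly to them. The actual argument in \cite{zhou2022computationally} (building on \cite{zhou2021nearly}) instead peels over the events $\{\max_{i\le k}|\eta_i|\min\{1,\|\xb_i\|_{\Zb_{i-1}^{-1}}\}\in(\tau/2,\tau]\}$ and applies the fixed-scale Bernstein mixture bound on each such event, rather than modifying the increments. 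This is a routine repair, and the decomposition and identification of each term in $\beta_k$ are exactly as you describe.
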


\begin{lemma}[Lemma 11,  \cite{abbasi2011improved}]\label{Lemma:abba}
    For any $\lambda>0$ and sequence $\{\xb_k\}_{k=1}^K \subset \RR^d$
for $k\in [K]$, define $\Zb_k = \lambda \Ib+ \sum_{i=1}^{k-1}\xb_i\xb_i^\top$.
Then, provided that $\|\xb_k\|_2 \leq L$ holds for all $k\in [K]$,
we have
\begin{align}
    \sum_{k=1}^K \min\big\{1, \|\xb_k\|_{\Zb_{k}^{-1}}^2\big\} \leq 2d\log\big(1+KL^2/(d\lambda)\big).\notag
\end{align}
\end{lemma}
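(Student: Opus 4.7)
The target is the classical elliptical potential lemma, so I would follow the standard log-determinant / potential-function route.

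My plan is to first pass from the truncated quantity $\min\{1,\|\xb_k\|_{\Zb_k^{-1}}^2\}$ to a logarithm via the elementary inequality $\min\{1,x\}\le 2\log(1+x)$, which holds for all $x\ge 0$ (for $x\le 1$ it follows from $\log(1+x)\ge x/(1+x)\ge x/2$, and for $x>1$ from $2\log 2 > 1$). Applied termwise, this yields
\[
\sum_{k=1}^K \min\bigl\{1,\|\xb_k\|_{\Zb_k^{-1}}^2\bigr\}\;\le\;2\sum_{k=1}^K \log\!\bigl(1+\|\xb_k\|_{\Zb_k^{-1}}^2\bigr).
\]

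Next I would convert the right-hand side into a telescoping log-determinant. Because $\Zb_{k+1}=\Zb_k+\xb_k\xb_k^\top$, the matrix determinant (rank-one update) lemma gives $\det(\Zb_{k+1})=\det(\Zb_k)\bigl(1+\|\xb_k\|_{\Zb_k^{-1}}^2\bigr)$, so summing $\log$'s telescopes to $\log\bigl(\det(\Zb_{K+1})/\det(\lambda\Ib)\bigr)$. This reduces the task to a deterministic bound on $\det(\Zb_{K+1})$.

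For the determinant, I would bound it via AM–GM on the eigenvalues: if $\lambda_1,\dots,\lambda_d$ are the eigenvalues of $\Zb_{K+1}$, then $\det(\Zb_{K+1})\le\bigl(\frac{1}{d}\operatorname{tr}(\Zb_{K+1})\bigr)^d$. The trace is controlled directly from the definition, $\operatorname{tr}(\Zb_{K+1})=d\lambda+\sum_{k=1}^K\|\xb_k\|_2^2\le d\lambda+KL^2$, using the hypothesis $\|\xb_k\|_2\le L$. Combined with $\det(\lambda\Ib)=\lambda^d$, this yields $\log\bigl(\det(\Zb_{K+1})/\det(\lambda\Ib)\bigr)\le d\log\!\bigl(1+KL^2/(d\lambda)\bigr)$.

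Chaining the three steps gives the advertised $2d\log(1+KL^2/(d\lambda))$ bound. There is essentially no substantive obstacle here—the argument is entirely self-contained and elementary, with the only care needed in (i) verifying the $\min\{1,x\}\le 2\log(1+x)$ inequality on both regimes $x\le 1$ and $x>1$, and (ii) matching the indexing convention $\Zb_k=\lambda\Ib+\sum_{i=1}^{k-1}\xb_i\xb_i^\top$ so that the telescoping is clean, i.e.\ the increment at step $k$ is indeed $\xb_k\xb_k^\top$.
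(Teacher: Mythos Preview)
Your proof is correct and is the standard argument for the elliptical potential lemma. The paper itself does not supply a proof of this statement; it simply cites it as Lemma~11 of \cite{abbasi2011improved}, so there is no in-paper proof to compare against.
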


% \begin{lemma}[Lemma 5, \cite{cheung2019learning}]
%     \label{lemma:sw2}
%     For any $i\leq \lceil T/w\rceil-1,$ 
%     \begin{align*}
%         \sum_{t=i\cdot w+1}^{(i+1) w}1\wedge\left\|X_t\right\|^2_{V^{-1}_{t-1}}\leq\sum_{t=i\cdot w+1}^{(i+1) w}1\wedge\left\|X_t\right\|^2_{\overline{V}_{t-1}^{-1}},
%     \end{align*}
%     where 
%     \begin{align}
%         \overline{V}_{t-1}=\sum_{s=i\cdot w+1}^{t-1}X_sX_s^{\top}+\lambda I, V_{t-1}=\lambda I+\sum_{\min\{1,t-w+1\}}^{t-1}X_sX_s^{\top}.
%     \end{align}
% \end{lemma}
\begin{theorem}[Theorem 2.1, \cite{zhao2023variance}]\label{thm:bernstein1}
    Let $\{\cG_k\}_{k=1}^\infty$ be a filtration, and $\{\xb_k,\eta_k\}_{k\ge 1}$ be a stochastic process such that
    $\xb_k \in \RR^d$ is $\cG_k$-measurable and $\eta_k \in \RR$ is $\cG_{k+1}$-measurable.
    Let $L,\sigma,\lambda, \epsilon>0$, $\bmu^*\in \RR^d$. 
    For $k\ge 1$, 
    let $y_k = \la \bmu^*, \xb_k\ra + \eta_k$, where $\eta_k, \xb_k$ satisfy 
    \begin{align}
        \EE[\eta_k|\cG_k] = 0, \ |\eta_k| \leq R, \ \sum_{i = 1}^k \EE[\eta_i^2|\cG_i] \le v_k,  \ \ \text{for}\ \forall \ k\geq 1 \notag
    \end{align}
    For $k\ge 1$, let $\Zb_k = \lambda\Ib + \sum_{i=1}^{k} \xb_i\xb_i^\top$, $\bbb_k = \sum_{i=1}^{k}y_i\xb_i$, $\bmu_k = \Zb_k^{-1}\bbb_k$, and
    %\begin{small}
    \begin{align}
        \beta_k &= 16\rho \sqrt{v_k \log(4w^2 / \delta)} + 6 \rho R  \log(4w^2 / \delta), \notag
    \end{align}
    %\end{small} 
    where $\rho \ge \sup_{k \ge 1}\|\xb_k\|_{\Zb_{k - 1}^{-1}}$. 
Then, for any $0 <\delta<1$, we have with probability at least $1-\delta$ that, 
    \begin{align}
        \forall k\geq 1,\ \big\|\textstyle{\sum}_{i=1}^{k} \xb_i \eta_i\big\|_{\Zb_k^{-1}} \leq \beta_k,  \|\bmu_k - \bmu^*\|_{\Zb_k} \leq \beta_k + \sqrt{\lambda}\|\bmu^*\|_2. \notag
    \end{align}
\end{theorem}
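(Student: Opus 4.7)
This theorem is cited as Theorem~2.1 of \cite{zhao2023variance}, so the cleanest route would be to invoke that reference directly. Nonetheless, I will sketch how I would reprove it from first principles. The statement has two parts: a self-normalized concentration bound $\|\sum_i \xb_i\eta_i\|_{\Zb_k^{-1}} \le \beta_k$, and a ridge estimator bound $\|\bmu_k - \bmu^*\|_{\Zb_k} \le \beta_k + \sqrt{\lambda}\|\bmu^*\|_2$. The second piece follows immediately from the first via the identity $\Zb_k(\bmu_k - \bmu^*) = \sum_{i=1}^k \xb_i\eta_i - \lambda\bmu^*$ and the triangle inequality in the $\Zb_k^{-1}$-norm, so the entire content is in bounding the vector-valued martingale $M_k := \sum_{i=1}^k \xb_i \eta_i$ adapted to $\{\cG_{k+1}\}$.

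The plan is to use the \emph{method of mixtures} of Abbasi-Yadkori et al.\ (2011), upgraded with a Bernstein-type exponential inequality to exploit the conditional variance budget $v_k$. For a fixed direction $\btheta \in \RR^d$ I would build the exponential process
\begin{align*}
E_k(\btheta) := \exp\Bigl(\la\btheta, M_k\ra - (e-2)\sum_{i=1}^k \EE\bigl[\la\btheta,\xb_i\eta_i\ra^2 \,\big|\, \cG_i\bigr]\Bigr),
\end{align*}
which is a supermartingale provided $|\la\btheta,\xb_i\eta_i\ra| \le 1$ almost surely, using the elementary bound $e^x \le 1 + x + (e-2)x^2$ on $[-1,1]$. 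The conditional second moment satisfies $\EE[\la\btheta,\xb_i\eta_i\ra^2\mid\cG_i] \le (\btheta^\top\xb_i)^2 \EE[\eta_i^2\mid\cG_i]$, and I would control this by reparametrizing $\btheta = \Zb_{k-1}^{-1/2} u$ so that $(\btheta^\top\xb_i)^2 \le \|u\|_2^2\|\xb_i\|_{\Zb_{i-1}^{-1}}^2 \le \|u\|_2^2\rho^2$, producing a variance proxy of order $\rho^2 v_k$ after summing. Mixing $u$ against an isotropic Gaussian prior at scale proportional to $(\rho^2 v_k \log(1/\delta))^{-1/2}$ and integrating via the standard Laplace-method lemma yields a scalar supermartingale whose Markov bound translates into $\|M_k\|_{\Zb_k^{-1}} \lesssim \rho\sqrt{v_k\log(1/\delta)}$ plus a lower-order term driven by the range $R\rho$. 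A geometric peeling argument over dyadic intervals of $v_k$ (and over the time index up to $w$) contributes the $\log(4w^2/\delta)$ factor, yielding the explicit constants $16$ and $6$ after optimizing the mixture scale.

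The main obstacle is the almost-sure boundedness condition $|\la\btheta,\xb_i\eta_i\ra| \le 1$ that the Bernstein MGF inequality requires. Since $\btheta$ is random under the Gaussian mixture, this condition cannot hold uniformly; the standard fix is to truncate the Gaussian to a ball of radius $O(1/(R\rho))$, which loses only a lower-order term that precisely produces the $6\rho R\log(\cdot)$ summand in $\beta_k$, or alternatively to wrap everything in a Freedman-style stopping-time argument. Carrying out this truncation while retaining the clean $\rho\sqrt{v_k \log}+\rho R\log$ structure is where the bulk of the bookkeeping lies; once that is done, the self-normalized bound follows by setting $\delta$ appropriately, time-uniformity follows from the peeling, and the ridge estimator bound is a one-line triangle-inequality corollary as noted above.
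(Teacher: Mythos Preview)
The paper does not prove this theorem at all: it appears in the Technical Lemmas appendix purely as a citation of Theorem~2.1 in \cite{zhao2023variance}, with no argument given. You correctly identify this in your first sentence, so at the level of what the paper actually does, your proposal matches exactly---invoke the reference.

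Your additional from-scratch sketch therefore goes beyond anything in the paper, and there is nothing here to compare it against. On its own merits the outline is reasonable (Bernstein-type exponential supermartingale, mixture, peeling over $v_k$ and time), and the reduction of the estimator bound to the self-normalized bound via $\Zb_k(\bmu_k-\bmu^*)=\sum_i\xb_i\eta_i-\lambda\bmu^*$ is correct. One point that would need more care: your reparametrization $\btheta=\Zb_{k-1}^{-1/2}u$ makes the effective prior on $\btheta$ depend on the data through $\Zb_{k-1}$, so the mixed process is no longer automatically a supermartingale in $k$; you would need to fix the terminal time, apply the supermartingale bound for each deterministic $\btheta$, and only then integrate, with an additional union bound over $k$. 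The related concern is that the na\"ive Gaussian mixture in $d$ dimensions tends to produce a $\log\det(\Zb_k)$ term (hence a $d$-dependence) rather than the dimension-free $\rho$-scaling in the statement; making the bound genuinely dimension-free via your route requires choosing the mixture scale so that the determinant factor stays $O(1)$, which interacts with the truncation you already flag. These are resolvable, but they are where the real work lies, and the constants $16$ and $6$ will not fall out without tracking them explicitly.
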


   \begin{lemma} [Adopted from Lemma B.4, \cite{zhao2023variance}]\label{lemma:var}
        Let weight $w_i$ be defined in Algorithm \ref{alg:1}. 
        With probability at least $1 - 2\delta$, for all $k \ge 1$, $\ell \in [L]$, the following two inequalities hold simultaneously: \begin{align*}
            \sum_{i \in \hat\Psi_{k + 1, \ell}} w_i^2 \sigma_i^2 \le 2 \sum_{i \in \hat\Psi_{k + 1, \ell}} w_i^2 \epsilon_i^2 + \frac{14}{3} R^2 \log(4w^2 L / \delta), 
            \\
            \sum_{i \in \hat\Psi_{k + 1, \ell}} w_i^2 \epsilon_i^2 \le \frac{3}{2} \sum_{i \in \hat\Psi_{k + 1, \ell}} w_i^2 \sigma_i^2 + \frac{7}{3} R^2 \log(4w^2 L / \delta). 
        \end{align*}
        %We denote the corresponding event by $\cE_{\var}$. 
    \end{lemma}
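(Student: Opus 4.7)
The plan is to prove both inequalities simultaneously via Freedman's martingale Bernstein inequality applied to the centered weighted squared noise, then close a quadratic loop using AM--GM. Define, for a fixed layer $\ell$, the filtration $\{\cF_i\}$ generated by history through the start of round $i$, and set
\[
M_i := w_i^2 \bigl(\epsilon_i^2 - \sigma_i^2\bigr)\,\mathbb{1}[\ell_i = \ell].
\]
The key structural observation is that the \emph{layer assignment} $\ell_i$, the action $\ab_i$, the covariance $\hat\bSigma_{i,\ell_i}$, and hence the weight $w_i = 2^{-\ell_i}/\|\ab_i\|_{\hat\bSigma_{i,\ell_i}^{-1}}$ are all $\cF_i$-measurable (they are chosen before $\epsilon_i$ is sampled), so the randomness in $M_i$ is carried entirely by $\epsilon_i$. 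By the noise condition \eqref{eq:noise:condition} we have $\EE[\epsilon_i^2\mid \cF_i] = \sigma_i^2$, giving $\EE[M_i\mid \cF_i] = 0$; hence $\{M_i\}$ is a martingale difference sequence, and $\sum_{i=1}^{k} M_i = \sum_{i\in\hat\Psi_{k+1,\ell}} w_i^2(\epsilon_i^2 - \sigma_i^2)$.

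Next I would verify the two inputs to Freedman. Boundedness: by the construction of $\ell_i$ in Line~\ref{alg1:line:9} of Algorithm~\ref{alg:1}, $\|\ab_i\|_{\hat\bSigma_{i,\ell_i}^{-1}} \geq 2^{-\ell_i}$, so $w_i \le 1$, and combined with $|\epsilon_i|\le R$ and $\sigma_i \le R$ we get $|M_i|\le w_i^2 R^2 \le R^2$. Predictable quadratic variation:
\[
\sum_{i=1}^{k}\EE[M_i^2\mid \cF_i] \;\le\; \sum_{i=1}^{k} w_i^4 \EE[\epsilon_i^4\mid \cF_i]\,\mathbb{1}[\ell_i=\ell] \;\le\; R^2 \sum_{i\in\hat\Psi_{k+1,\ell}} w_i^2 \sigma_i^2,
\]
using $|\epsilon_i|\le R$ and $w_i\le 1$. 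Applying Freedman's inequality together with a union bound over $k\in[w]$ (inside a restart block of length $w$) and $\ell\in[L]$ yields: with probability at least $1-\delta$, for all $k\ge 1$ and $\ell\in[L]$, writing $V_{k,\ell}:=\sum_{i\in\hat\Psi_{k+1,\ell}} w_i^2\sigma_i^2$ and $\iota := \log(4w^2L/\delta)$,
\[
\Bigl|\sum_{i\in\hat\Psi_{k+1,\ell}} w_i^2(\epsilon_i^2-\sigma_i^2)\Bigr| \;\le\; \sqrt{2R^2 V_{k,\ell}\,\iota} \;+\; \tfrac{1}{3}R^2\iota.
\]

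Finally, I would apply AM--GM in the form $\sqrt{2R^2 V_{k,\ell}\,\iota}\le \tfrac14 V_{k,\ell} + 2R^2\iota$ (for the first inequality) to obtain $V_{k,\ell}\le \tfrac14 V_{k,\ell}+\sum_i w_i^2\epsilon_i^2 + \tfrac{7}{3}R^2\iota$, and solve to get $V_{k,\ell}\le 2\sum_i w_i^2\epsilon_i^2 + \tfrac{14}{3}R^2\iota$; symmetrically, applying $\sqrt{2R^2 V_{k,\ell}\,\iota}\le \tfrac12 V_{k,\ell}+R^2\iota$ to the other direction gives $\sum_i w_i^2\epsilon_i^2\le \tfrac32 V_{k,\ell}+\tfrac{7}{3}R^2\iota$, matching the stated constants after a single extra union-bound factor (hence the $1-2\delta$ probability).

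The main obstacle is handling the \emph{data-dependent} index set $\hat\Psi_{k+1,\ell}$: one cannot apply Freedman to the sub-sequence directly, because the selection of which rounds enter layer $\ell$ is itself random. The fix is to run Freedman on the padded sequence $M_i$ defined on all of $[k]$ (with the indicator $\mathbb{1}[\ell_i=\ell]$), exploiting that this indicator is $\cF_i$-measurable; the martingale structure is then preserved and the sum telescopes to the desired sub-sum. Tuning the AM--GM split to land exactly on $\tfrac{14}{3}$ and $\tfrac{7}{3}$ is bookkeeping rather than a substantive difficulty.
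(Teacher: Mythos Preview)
Your approach is correct and is essentially the standard argument behind Lemma~B.4 of \cite{zhao2023variance}, which the present paper simply cites without reproving. The padded-sequence trick (carrying the predictable indicator $\mathbb{1}[\ell_i=\ell]$ so that Freedman applies to a martingale on all of $[k]$ rather than to a random sub-sequence) is exactly the right way to handle the data-dependent index set, and the AM--GM closure is the intended final step.

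One technical point you gloss over: the Freedman inequality recorded in the paper (Lemma~\ref{lemma:freedman}) is stated with a \emph{deterministic} variance proxy $v$, whereas your display puts the random quantity $R^2V_{k,\ell}$ on the right-hand side. To justify this you need either a time-uniform/self-normalized Freedman or a standard peeling argument over a geometric grid of candidate values for $v$ (which is presumably where the extra factor of $w$ in $\log(4w^2L/\delta)$, beyond the $wL$ from the union bound over $k$ and $\ell$, is absorbed). This is routine, but it is the one place where ``bookkeeping'' hides a genuine step rather than just constant-chasing. Also note the paper's Freedman constant is $\tfrac{2}{3}$, not $\tfrac{1}{3}$; the stated constants $\tfrac{14}{3}$ and $\tfrac{7}{3}$ still follow since your AM--GM splits give $\tfrac{10}{3}$ and $\tfrac{5}{3}$, which are dominated by the claimed values.
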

        For simplicity, we denote $\cE_{\var}$ as the event such that the two inequalities in Lemma \ref{lemma:var} holds. 
    \begin{lemma} [Adopted from Lemma B.5, \cite{zhao2023variance}]\label{lemma:varest}
Suppose that $\|\btheta^*\|_2 \le B$. Let weight $w_i$ be defined in Algorithm \ref{alg:1}. 
        On the event $\cE_{\conf}$ and $\cE_{\var}$ (defined in Eq.(\ref{event unknown 1}), Lemma \ref{lemma:var}), for all $k \ge 1$, $\ell \in [L]$ such that $2^\ell \ge 64 \sqrt{\log\left(4(w + 1)^2 L /\delta\right)}$, we have the following inequalities: 
        \begin{align*} 
            \sum_{i \in \Psi_{k + 1, \ell}} w_i^2 \sigma_i^2 \le 8 \sum_{i \in \Psi_{k + 1, \ell}} w_i^2 \left(r_i - \la \hat\btheta_{k + 1, \ell}, \ab_i \ra \right)^2 + 6R^2 \log(4(w + 1)^2 L / \delta) + 2^{-2\ell + 2}B^2, \\
            \sum_{i \in \Psi_{k + 1, \ell}} w_i^2 \left(r_i - \la \hat\btheta_{k + 1, \ell}, \ab_i \ra \right)^2 \le \frac{3}{2} \sum_{i \in \Psi_{k + 1, \ell}} w_i^2 \sigma_i^2 + \frac{7}{3} R^2 \log(4w^2 L / \delta) + 2^{-2\ell}B^2. 
        \end{align*}
    \end{lemma}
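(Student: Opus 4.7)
The plan is to mimic the stationary analysis of Lemma B.5 in \citet{zhao2023variance}: rewrite each weighted squared residual as a weighted squared noise plus a weighted squared prediction error for $\hat\btheta_{k+1,\ell}$, bound the prediction-error sum via the confidence event $\cE_{\conf}$, and then use $\cE_{\var}$ to trade between $\sum w_i^2\epsilon_i^2$ and $\sum w_i^2\sigma_i^2$. The threshold $2^\ell \ge 64\sqrt{\log(4(w+1)^2L/\delta)}$ is used at the end to absorb cross-terms back into the right-hand sides with the stated constants.

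First I would use the identity $r_i - \la \hat\btheta_{k+1,\ell},\ab_i\ra = \epsilon_i + \la \btheta^* - \hat\btheta_{k+1,\ell}, \ab_i\ra$ for $i \in \Psi_{k+1,\ell}$ (treating $\btheta^*$ as a fixed reference for the current block, with the drift absorbed elsewhere in the non-stationary analysis). Applying $(a+b)^2 \le 2a^2+2b^2$ and $(a+b)^2 \ge \tfrac12 a^2 - b^2$ pointwise and summing with weights $w_i^2$ yields
\begin{align}
    \sum_{i\in\Psi_{k+1,\ell}} w_i^2(r_i-\la\hat\btheta_{k+1,\ell},\ab_i\ra)^2
    &\le 2\sum_i w_i^2\epsilon_i^2 + 2\,T_{k,\ell},\notag\\
    \sum_{i\in\Psi_{k+1,\ell}} w_i^2(r_i-\la\hat\btheta_{k+1,\ell},\ab_i\ra)^2
    &\ge \tfrac12\sum_i w_i^2\epsilon_i^2 - T_{k,\ell},\notag
\end{align}
where $T_{k,\ell}:=\sum_i w_i^2\la\btheta^*-\hat\btheta_{k+1,\ell},\ab_i\ra^2$.

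Next, since $\sum_{i\in\Psi_{k+1,\ell}} w_i^2 \ab_i\ab_i^\top \preceq \hat\bSigma_{k+1,\ell}$, I would bound $T_{k,\ell} \le \|\btheta^*-\hat\btheta_{k+1,\ell}\|_{\hat\bSigma_{k+1,\ell}}^2$. On $\cE_{\conf}$, Theorem~\ref{thm:bernstein1} (with $\lambda=2^{-2\ell}$ and $\rho=2^{-\ell}$) gives $\|\hat\btheta_{k+1,\ell}-\btheta^*\|_{\hat\bSigma_{k+1,\ell}} \le \beta^\star_{k+1,\ell} + 2^{-\ell}B$, where $\beta^\star_{k+1,\ell} = 16\cdot 2^{-\ell}\sqrt{V_{k,\ell}\log(4w^2L/\delta)} + 6\cdot 2^{-\ell}R\log(4w^2L/\delta)$ with $V_{k,\ell}:=\sum_i w_i^2\sigma_i^2$. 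Squaring and using $(x+y)^2 \le 2x^2+2y^2$ gives $T_{k,\ell} \le 2\cdot 2^{-2\ell}\cdot (16)^2 V_{k,\ell}\log(4w^2L/\delta) + O(2^{-2\ell}(R^2\log^2 + B^2))$. The threshold $2^{2\ell} \ge 64^2\log(4(w+1)^2L/\delta)$ ensures $2\cdot 2^{-2\ell}\cdot 16^2\log(\cdot) \le \tfrac18$, so $T_{k,\ell} \le \tfrac18 V_{k,\ell} + O(R^2\log) + 2^{-2\ell+1}B^2$. Plugging this into the two pointwise inequalities and then invoking the two bounds of Lemma~\ref{lemma:var} to swap $\sum w_i^2\epsilon_i^2$ with $V_{k,\ell} = \sum w_i^2\sigma_i^2$ (and vice versa) produces the two stated inequalities, with the coefficient $8$ (resp.\ $3/2$) emerging from chaining the factor-$2$ inequality with the factor-$2$ in Lemma~\ref{lemma:var}, and the $2^{-2\ell+2}B^2$, $2^{-2\ell}B^2$ terms coming directly from the $2^{-\ell}B$ contribution to $T_{k,\ell}$.

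The main obstacle I anticipate is the circular appearance of $V_{k,\ell}$: the confidence radius controlling $T_{k,\ell}$ itself depends on $V_{k,\ell}$, which is the quantity we are trying to bound in the first inequality. The resolution is precisely the layer-threshold $2^\ell\ge 64\sqrt{\log(4(w+1)^2L/\delta)}$, which makes the $V_{k,\ell}$-coefficient coming from $T_{k,\ell}$ strictly smaller than the coefficient $\tfrac12$ in the lower bound of the residual sum, so that $V_{k,\ell}$ can be isolated on one side. Tracking the numerical constants carefully (so that $2, 2, \tfrac12, \tfrac32, \tfrac73, 8, 6$ all line up) is the tedious but mechanical part; everything else is a direct specialization of the argument in \citet{zhao2023variance}.
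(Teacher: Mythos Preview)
Your proposal is correct and follows exactly the route the paper relies on: the paper does not give its own proof of this lemma but simply adopts it from Lemma~B.5 of \citet{zhao2023variance}, and your outline---decompose each residual as noise plus prediction error, bound the prediction-error sum by $\|\hat\btheta_{k+1,\ell}-\btheta^*\|_{\hat\bSigma_{k+1,\ell}}^2$ via $\cE_{\conf}$, use the layer threshold $2^\ell\ge 64\sqrt{\log(4(w+1)^2L/\delta)}$ to make the resulting $V_{k,\ell}$-coefficient absorbable, and then swap $\sum w_i^2\epsilon_i^2\leftrightarrow\sum w_i^2\sigma_i^2$ with Lemma~\ref{lemma:var}---is precisely that argument specialized to a restart window of length $w$. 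Your identification of the ``circular'' dependence on $V_{k,\ell}$ and its resolution through the threshold is exactly the key point of the original proof, and your parenthetical treatment of the non-stationary drift (fixing a reference $\btheta^*$ within the block) matches how the paper invokes the lemma before handling the drift separately in Eq.~(\ref{eq: bound sum theta 1}).
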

%     \begin{lemma}[Adopted from Lemma 7 in \cite{zhao2020simple}]
%   \label{lemma:bob}
%   Let $N= \lceil \frac{K}{H}\rceil$. Denote by $L_i$ the absolute value of cumulative rewards for episode $i$, i.e., $L_i =\sum_{k = (i-1)H + 1}^{iH} r_k$, then 
%   \begin{equation}
%   \label{eq:concentration}
%     \mathbb{P}\left[\forall i\in [N], L_i\leq H+2R\sqrt{H\ln\frac{K}{\sqrt{H}}}\right] \geq 1-\frac{2}{K}.
%   \end{equation}
% \end{lemma}
    \begin{lemma}[\cite{freedman1975tail}]  \label{lemma:freedman}
        Let $M, v > 0$ be fixed constants. Let $\{x_i\}_{i = 1}^n$ be a stochastic process, $\{\cG_i\}_i$ be a filtration so that for all $i \in [n]$, $x_i$ is $\cG_{i}$-measurable, while almost surely \begin{align*}
            \EE\left[x_i | \cG_{i - 1}\right] = 0, \quad |x_i| \le M, \quad \sum_{i = 1}^n \EE[x_i^2|\cG_{i-1}] \le v. 
        \end{align*}
        Then for any $\delta > 0$, with probability at least $1 - \delta$, we have \begin{align*} 
            \sum_{i = 1}^n x_i \le \sqrt{2v \log(1 / \delta)} + 2 / 3 \cdot M \log(1 / \delta). 
        \end{align*}
    \end{lemma}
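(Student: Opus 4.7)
The plan is to prove the bound via the classical exponential supermartingale (Cram\'er--Chernoff) method adapted to the martingale-difference setting. First I would introduce the partial sums $S_k = \sum_{i=1}^k x_i$ and the predictable quadratic variation process $V_k = \sum_{i=1}^k \EE[x_i^2|\cG_{i-1}]$, and then, for each parameter $\lambda > 0$, build the process $Z_k(\lambda) = \exp(\lambda S_k - \psi(\lambda) V_k)$ where $\psi(\lambda) := (e^{\lambda M} - 1 - \lambda M)/M^2$. The target is to show $\{Z_k(\lambda)\}$ is a nonnegative supermartingale, then apply Markov.

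The key analytic ingredient is the deterministic pointwise bound $e^{\lambda x} \le 1 + \lambda x + \psi(\lambda) x^2$, valid for all $x \le M$ and $\lambda > 0$; this reduces, after dividing through, to the fact that $y \mapsto (e^{\lambda y}-1-\lambda y)/y^2$ is nondecreasing, so its value at any $x\in[-M,M]$ is dominated by its value at $M$. Taking conditional expectation with respect to $\cG_{i-1}$, using $\EE[x_i|\cG_{i-1}] = 0$ and $1 + u \le e^u$, yields $\EE[e^{\lambda x_i}|\cG_{i-1}] \le \exp\!\bigl(\psi(\lambda)\,\EE[x_i^2|\cG_{i-1}]\bigr)$. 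This immediately implies $\EE[Z_k(\lambda)\mid \cG_{k-1}] \le Z_{k-1}(\lambda)$, so $Z_n(\lambda)$ is a nonnegative supermartingale with $\EE[Z_n(\lambda)] \le Z_0(\lambda) = 1$.

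Next I apply Markov's inequality to $Z_n(\lambda)$ and exploit the almost-sure hypothesis $V_n \le v$ to obtain $\PP(S_n \ge t) \le \exp(-\lambda t + \psi(\lambda) v)$ for every $\lambda > 0$. I then replace $\psi(\lambda)$ by the cleaner rational upper bound $\psi(\lambda) \le \lambda^2/\bigl(2(1 - \lambda M/3)\bigr)$, valid on $\lambda M < 3$; this comes from the elementary series comparison $\sum_{k\ge 2}(\lambda M)^k/k! \le \tfrac{(\lambda M)^2}{2}\sum_{k\ge 0}(\lambda M/3)^k$. Optimizing by setting $\lambda = t/(v + Mt/3)$ makes $1-\lambda M/3 = v/(v+Mt/3)$ and produces the Bernstein--Freedman tail $\PP(S_n \ge t) \le \exp\!\bigl(-t^2/(2v + 2Mt/3)\bigr)$.

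Finally, I invert this tail: setting the exponent equal to $-\log(1/\delta)$ yields the quadratic $t^2 - \tfrac{2M}{3}\log(1/\delta)\,t - 2v\log(1/\delta) = 0$, and bounding its positive root via $\sqrt{a+b}\le\sqrt{a}+\sqrt{b}$ gives $t \le \sqrt{2v\log(1/\delta)} + \tfrac{2}{3}M\log(1/\delta)$, which is exactly the stated conclusion. The only mildly delicate step is the pointwise MGF inequality and its passage to the rational upper bound for $\psi(\lambda)$; everything else is standard supermartingale bookkeeping followed by a one-variable optimization, so I do not anticipate any genuine obstacle beyond carefully handling the bounded case $x \le M$ (as opposed to needing two-sided boundedness, which we nonetheless have).
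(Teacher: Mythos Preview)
The paper does not prove this lemma at all; it is listed as a technical lemma with a citation to \cite{freedman1975tail} and used as a black box in the proof of Lemma~\ref{lemma:bob}. So there is no ``paper's own proof'' to compare against.

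That said, your proposal is correct and is essentially the classical proof of Freedman's inequality. The monotonicity of $y\mapsto (e^{y}-1-y)/y^{2}$ on all of $\RR$ (which you can verify by showing that $g(y)=(y-2)e^{y}+2+2y$ has the same sign as $y$) gives the pointwise bound $e^{\lambda x}\le 1+\lambda x+\psi(\lambda)x^{2}$ for $x\le M$, and the supermartingale property of $Z_k(\lambda)$ follows exactly as you describe. The series comparison $2/(j+2)!\le 3^{-j}$ (equality at $j=0,1$, strict thereafter) justifies the rational upper bound $\psi(\lambda)\le \lambda^{2}/\bigl(2(1-\lambda M/3)\bigr)$ on $\lambda M<3$. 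Your choice $\lambda=t/(v+Mt/3)$ is the exact minimizer of the resulting exponent, giving the Bernstein--Freedman tail $\exp\bigl(-t^{2}/(2v+2Mt/3)\bigr)$, and inverting via $\sqrt{a+b}\le\sqrt a+\sqrt b$ recovers the stated bound. There are no gaps.
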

    \begin{lemma}
  \label{lemma:bob}
  Let $N= \lceil \frac{K}{H}\rceil$. Denote by $L_i$ the absolute value of cumulative rewards for episode $i$, i.e., $L_i =\sum_{k = (i-1)H + 1}^{iH} r_k$, then 
  \begin{equation}
  \label{eq:concentration}
    \mathbb{P}\left[\forall i\in [N], L_i\leq H+R\sqrt{\frac{H}{2} \log\big(K(\frac{K}{H}+1)\big)} + \frac{2}{3} \cdot R \log\big(K(\frac{K}{H}+1)\big)\right] \geq 1-\frac{1}{K}.
  \end{equation}
\end{lemma}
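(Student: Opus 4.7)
\textbf{Proof proposal for Lemma \ref{lemma:bob}.}

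The plan is to decompose each block's cumulative reward into a deterministic mean part and a martingale noise part, bound the mean part trivially, and control the noise part by Freedman's inequality (Lemma \ref{lemma:freedman}), then take a union bound over all $N=\lceil K/H\rceil$ blocks.

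First, for each $i\in[N]$ I would write
\[
L_i=\sum_{k=(i-1)H+1}^{iH}\langle\btheta_k,\ab_k\rangle+\sum_{k=(i-1)H+1}^{iH}\epsilon_k.
\]
By the standing assumption that $\langle\ab,\btheta_k\rangle\in[-1,1]$ for every feasible arm $\ab\in\cD_k$, the first sum is at most $H$ in absolute value deterministically. Hence it suffices to establish a high-probability upper bound on the noise sum $S_i:=\sum_{k=(i-1)H+1}^{iH}\epsilon_k$.

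Second, fix a block $i$ and the filtration $\cG_k=\sigma(\ab_{1:k},\epsilon_{1:k-1})$. By the noise condition \eqref{eq:noise:condition}, $\{\epsilon_k\}$ is a martingale difference sequence with $\EE[\epsilon_k\mid\cG_k]=0$, $|\epsilon_k|\le R$ almost surely, and $\sum_{k=(i-1)H+1}^{iH}\EE[\epsilon_k^2\mid\cG_k]\le HR^2$ (using the crude bound $\sigma_k^2\le R^2$). Applying Freedman's inequality with $M=R$ and variance proxy $v=HR^2$ (or, equivalently, its scaled version that yields the $\sqrt{H/2}$ constant appearing in the statement) at confidence level $\delta_0:=1/\bigl(K(K/H+1)\bigr)$ gives, with probability at least $1-\delta_0$,
\[
S_i\le R\sqrt{\tfrac{H}{2}\log(1/\delta_0)}+\tfrac{2}{3}R\log(1/\delta_0).
\]

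Third, I would apply a union bound over the $N\le K/H+1$ blocks, so that the total failure probability is at most $N\cdot\delta_0\le (K/H+1)/\bigl(K(K/H+1)\bigr)=1/K$. Combining this event with the deterministic bound on the mean part yields, simultaneously for all $i\in[N]$,
\[
L_i\le H+R\sqrt{\tfrac{H}{2}\log\bigl(K(K/H+1)\bigr)}+\tfrac{2}{3}R\log\bigl(K(K/H+1)\bigr),
\]
with probability at least $1-1/K$, which is exactly \eqref{eq:concentration}.

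There is no substantive obstacle here; the only bookkeeping point is to make sure the conditional variance proxy $v$ and the constant inside the square root are matched to the specific form of the Bernstein-type inequality used (one can either invoke Lemma \ref{lemma:freedman} directly with $v=HR^2$ and absorb the constant, or apply the inequality to a rescaled sequence). All other ingredients are immediate consequences of the problem setup.
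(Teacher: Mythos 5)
Your proof follows the paper's argument exactly: decompose $L_i$ into the mean part, bounded deterministically by $H$ via $\la\ab,\btheta_k\ra\in[-1,1]$, plus the martingale noise sum, apply Freedman's inequality (Lemma~\ref{lemma:freedman}) per block at level $1/\bigl(K(K/H+1)\bigr)$, and union bound over the $N\le K/H+1$ blocks. The only discrepancy is the constant in the square-root term: with your variance proxy $v=HR^2$ Freedman gives $R\sqrt{2H\log(1/\delta_0)}$ rather than the stated $R\sqrt{(H/2)\log(1/\delta_0)}$, and the paper closes this factor-of-$2$ gap by asserting $\sigma_k^2\le R^2/4$ (rather than your $\sigma_k^2\le R^2$), so to match the lemma literally you would need to invoke that sharper per-round variance bound rather than ``absorb the constant.''
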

\begin{proof}
    By Lemma \ref{lemma:freedman}, we have that with probability at least $1-1/K$
    \begin{align}
        \sum_{k = (i-1)\cdot H+1}^{i\cdot H} \epsilon_i &\le \sqrt{2\sum_{k = (i-1)\cdot H+1}^{i\cdot H}\sigma_k^2 \log(NK)} + 2 / 3 \cdot R \log(NK)\notag\\
        &\leq \sqrt{2H\frac{R^2}{4} \log(NK)} + 2 / 3 \cdot R \log(NK)\notag\\
        &\leq R\sqrt{\frac{H}{2} \log\big(K\cdot(\frac{K}{H}+1)\big)} + \frac{2}{3} \cdot R \log\big(K\cdot(\frac{K}{H}+1)\big)\,,
    \end{align}
    where we use union bound, and in the second inequality we use the fact that since $\left|\epsilon_k\right|\leq R$, we have $\sigma_k^2\leq \frac{R^2}{4}$. Finally, together with the assumption that $r_k\leq 1$ for all $k\in[K]$, we complete the proof. 
\end{proof}

\bibliography{refs}
\bibliographystyle{ims}

\end{document}